\newtheorem{assume}{Assumption}
\newtheorem{alg}{Algorithm}
\newtheorem{lemma}{Lemma}
\newtheorem{theorem}{Theorem}
\newtheorem{prop}{Proposition}
\newtheorem{remark}{Remark}
\newcommand{\iid}{\stackrel{\text{i.i.d}}{\sim}}
\DeclareMathOperator*{\argmin}{arg\,min}
\newcommand{\mcA}{\mathcal{A}}
\newcommand{\mcB}{\mathcal{B}}
\newcommand{\mcC}{\mathcal{C}}
\newcommand{\mcE}{\mathcal{E}}
\newcommand{\mcG}{\mathcal{G}}
\newcommand{\mcI}{\mathcal{I}}
\newcommand{\mcK}{\mathcal{K}}
\newcommand{\mcL}{\mathcal{L}}
\newcommand{\mcN}{\mathcal{N}}
\newcommand{\mcP}{\mathcal{P}}
\newcommand{\mcQ}{\mathcal{Q}}
\newcommand{\mcR}{\mathcal{R}}
\newcommand{\mcS}{\mathcal{S}}
\newcommand{\mcV}{\mathcal{V}}
\newcommand{\mcX}{\mathcal{X}}
\newcommand{\mcZ}{\mathcal{Z}}
\newcommand{\whomega}{\widehat{\omega}}
\newcommand{\bmomega}{\bm{\omega}_n}
\newcommand{\whbmomega}{\widehat{\bm{\omega}}_n}
\newcommand{\omegavec}{(\omega_1, \ldots, \omega_n)}
\newcommand{\whomegavec}{(\widehat{\omega}_1, \ldots, \widehat{\omega}_n)}
\newcommand{\fnone}{\tilde{f}_n(l, l', 1)}
\newcommand{\fnzero}{\tilde{f}_n(l, l', 0)}
\newcommand{\llp}{(l, l')}
\newcommand{\intsq}{\int_{[0, 1]^2}}
\newcommand{\losslipconst}{L_{\ell}}
\title{Asymptotics of $\ell_2$ Regularized Network Embeddings}
\author{%
  Andrew Davison\\
  Department of Statistics\\
  Columbia University\\
  New York, NY 10027 \\
  \texttt{ad3395@columbia.edu}
}
\begin{document}

\maketitle

\begin{abstract}
    A common approach to solving prediction tasks on large networks, such as node classification or link prediction, begins by learning a Euclidean embedding of the nodes of the network, from which traditional machine learning methods can then be applied. This includes methods such as DeepWalk and node2vec, which learn embeddings by optimizing stochastic losses formed over subsamples of the graph at each iteration of stochastic gradient descent. In this paper, we study the effects of adding an $\ell_2$ penalty of the embedding vectors to the training loss of these types of methods. We prove that, under some exchangeability assumptions on the graph, this asymptotically leads to learning a graphon with a nuclear-norm-type penalty, and give guarantees for the asymptotic distribution of the learned embedding vectors. In particular, the exact form of the penalty depends on the choice of subsampling method used as part of stochastic gradient descent. We also illustrate empirically that concatenating node covariates to $\ell_2$ regularized node2vec embeddings leads to comparable, when not superior, performance to methods which incorporate node covariates and the network structure in a non-linear manner.
\end{abstract}

\section{Introduction}
\label{sec:intro}
Network embedding methods \parencite[see e.g][]{belkin_laplacian_2003,grover_node2vec_2016,perozzi_deepwalk_2014,tang_line_2015,hamilton_inductive_2017,velickovic_deep_2018,hamilton_representation_2017,goyal_graph_2018} aim to find a latent representation of the nodes of the network within Euclidean space, in order to facilitate the solution of tasks such as node classification and link prediction, by using the produced embeddings as features for machine learning algorithms designed for Euclidean data. For example, they can be used for recommender systems in social networks, or to predict whether two proteins should be linked in a protein-protein interaction graph. Generally, such
methods obtain state-of-the-art performance on these type of tasks.

A classical approach to representation learning is to use the principal components of the Laplacian of the network \parencite{belkin_laplacian_2003}; however, this approach is computationally prohibitive for large datasets. In order to scale to large datasets, methods such as DeepWalk \parencite{perozzi_deepwalk_2014} and node2vec \parencite{grover_node2vec_2016} learn embeddings via optimizing a loss formed over stochastic subsamples of the graph. Letting $\mcG = (\mcV, \mcE)$ denote an undirected graph, and writing $\omega_u \in \mathbb{R}^d$ for the embedding of a vertex $u \in \mcV$ with
embedding dimension $d$ such that $d \ll | \mcV |$, these methods learn
embeddings by iterating over the following process: we take a random walk $(u_i)_{i \leq k}$ over the graph, form a loss 
\begin{align}
  \label{eq:n2v_loss}
  \mcL = - \sum_{i = 1}^{k} \sum_{ j \,:\, |j - i| \leq W } \log( \sigma\langle \omega_{u_i}, \omega_{u_j} \rangle ) - \sum_{i=1}^k \sum_{l = 1}^{L} \mathbb{E}_{v_l \sim \mcN(\cdot | u_i) }\Big[ \log(1 - \sigma( \langle \omega_{u_i}, \omega_{v_l} \rangle )) \Big],
\end{align}
and then perform gradient updates of the form $\omega_u \leftarrow \omega_u - \eta \nabla_{\omega_u} \mcL$ for some step size $\eta > 0$. Here $\langle \cdot, \cdot \rangle$ denotes the Euclidean inner product, $W$ is a window size, $\mcN(\cdot | v)$ a negative sampling distribution for vertex $v$ and $\sigma(y) := (1 + e^{-y})^{-1}$ the sigmoid function.

Other methods take variations on this approach - for example, LINE \parencite{tang_line_2015} replaces the random walk by sampling edges uniformly from the edge set of the graph. One can generalize \eqref{eq:n2v_loss} to capture both these cases, by considering embeddings learned via stochastic updates
\begin{align}
  \label{eq:emp_risk_step}
  \omega_u \leftarrow \omega_u - \eta \nabla_{\omega_u} \mcL \quad \text{ where } \quad 
  \mcL = \sum_{(i, j) \in \mcP } \ell_{\mcP}( \langle \omega_i, \omega_j \rangle ) + \sum_{(i, j) \in \mcN } \ell_{\mcN}( \langle \omega_i, \omega_j \rangle ),
\end{align}
$\mcP, \mcN \subseteq \mcV\times\mcV$ are random subsamples of 
the graph called
positive and negative samples respectively \parencite[see e.g][]{le-khac_contrastive_2020}, and $\ell_\mcP(\cdot)$
and $\ell_\mcN(\cdot)$ are chosen to force the embeddings of vertex pairs appearing
in $\mcP$ close together, and those in $\mcN$ far away. Usually one takes $\ell_\mcP(y) = -\log \sigma(y)$ and $\ell_{\mcN}(y) = -\log \sigma(-y)$. For example, in node2vec $\mcP$ is formed by taking concurrent edges in a random walk, and in LINE, $\mcP$ is formed by
uniform edge sampling; in both methods $\mcN$ is taken to be the same
negative sampling distribution $\mcN(u | v) \propto \mathrm{deg}(v)^{3/4}$ \parencite{mikolov_distributed_2013}. The scheme in \eqref{eq:emp_risk_step} attempts
to minimize
\begin{equation}
    \label{eq:emp_risk}
    \mcR := \sum_{i,j} \big\{ \mathbb{P}\big( (i, j) \in \mcP ) \ell_\mcP(\langle \omega_i, \omega_j \rangle ) + \mathbb{P}\big( (i, j) \in \mcN) \ell_{\mcN}(\langle \omega_i, \omega_j \rangle) \big\}
\end{equation}
obtained by averaging over the random process used to create $\mcP$ and $\mcN$ at each iteration of stochastic gradient descent \parencite{robbins_stochastic_1951,veitch_empirical_2019}; see Appendix~\ref{sec:app:sgd_erm} for a detailed derivation.

In the case where $\ell_\mcP(y) = -\log \sigma(y)$ and $\ell_{\mcN}(y) = -\log \sigma(-y)$, with $\mcP$ and $\mcN$ being random subsets of $\mcE$ and $(\mcV \times \mcV) \setminus \mcE$ (such as in LINE, or node2vec with a window length of $1$), then we can view
\eqref{eq:emp_risk} as the function obtained via trying to minimize
the negative log-likelihood (equivalent to maximizing the log-likelihood) of a probabilistic model
\begin{equation}
    \label{eq:bern_model}
    a_{uv} \,|\, \omega_u, \omega_v \sim \mathrm{Bernoulli}(\sigma( \langle \omega_u, \omega_v \rangle )) \text{ independently for $u < v$},
\end{equation}
($a_{uv} = 1$ indicating $(u, v)$ is an edge) using the stochastic gradient descent scheme specified in \eqref{eq:emp_risk_step} (see Appendix~\ref{sec:app:embed_as_graphon_learning}).
If it is assumed that the embedding vectors are drawn i.i.d from some latent
distribution, then this corresponds to implicitly fitting an exchangeable model to the graph \parencite{lovasz_large_2012} - see Appendix~\ref{sec:app:graphon} for a brief discussion on these models. Here the distribution of the
adjacency matrix $(a_{uv})_{u, v}$ is invariant to joint permutations of the vertices of
the network, or equivalently (by the Aldous-Hoover theorem \parencite{aldous_representations_1981}) arise from a probabilistic
model
\begin{equation}
    \lambda_u \sim \mathrm{Unif}([0, 1]) \text{ independently}, \qquad a_{uv} \,|\, \lambda_u, \lambda_v \sim \mathrm{Bernoulli}(W(\lambda_u, \lambda_v)) \text{ independently},
\end{equation}
for some symmetric measurable function $W: [0, 1]^2 \to [0, 1]$ called a \emph{graphon}.
We highlight that the model in \eqref{eq:bern_model} is
implicitly fitting a dense graph model to the data, \emph{even when the
observed graph data are sparse}, as is the case with real world networks.
One way of partially addressing this issue is to consider a \emph{sparsified exchangeable graph} with $W \to \rho_n W$ for some sparsifying sequence
$\rho_n \to 0$.

A natural example of a prior distribution on embedding vectors is $\omega_i \sim \mathrm{Normal}(0, (2\xi)^{-1/2} I_d)$ for some constant $\xi > 0$, so that the
contribution to the negative log-likelihood of each embedding is $ \xi \| \omega_i \|_2^2$. The full negative log-likelihood of \eqref{eq:bern_model} with
such a prior distribution is then given by
\begin{equation}
    \label{eq:bern_model_loglik}
    - \sum_{i, j} \{ a_{ij} \log \sigma( \langle \omega_i, \omega_j \rangle ) + (1 - a_{ij} ) \log \sigma( - \langle \omega_i, \omega_j \rangle) \} + \xi
    \sum_{i} \| \omega_i \|_2^2,
\end{equation}
which depends only on the matrix $G_{ij} = \langle \omega_i, \omega_j \rangle$; this loss can also arise by considering a weight decay optimization scheme 
\parencite[see e.g][]{krogh_simple_1991}.
In either case, letting $\Omega \in \mathbb{R}^{n \times d}$ be the matrix
whose rows are the $\omega_i$, so $G = \Omega \Omega^T$, we get that 
\begin{equation}
    \label{eq:intro:nuc_norm_reg}
    \sum_{i=1}^n \| \omega_i \|_2^2 = \sum_{i=1}^n \sum_{j=1}^d \omega_{ij}^2 = \sum_{i=1}^n \sum_{j=1}^d \omega_{ij} \omega_{ji} = \sum_{i=1}^n ( \Omega \Omega^T)_{ii} = \mathrm{tr}(\Omega \Omega^T) = \mathrm{tr}(G) = \| G \|_{*}
\end{equation}
where $\| G \|_{*}$ is the nuclear norm, which equals the sum of the singular values of $G$. 

As a result, we can view \eqref{eq:bern_model_loglik} as a regularized matrix factorization problem, where the nuclear-norm
penalty on the matrix $G$ is well known to shrink
the singular values of $G$ exactly towards zero, and to make
$G$ low rank \parencite[see e.g][]{bach_consistency_2008,recht_guaranteed_2010,koltchinskii_nuclear-norm_2011,chen_reduced_2013}. This consequently lowers the effective dimension of the embeddings. From a computation perspective, this is advantageous compared to treating the embedding dimension as a tunable hyperparameter, as warm-start procedures can be used to efficiently tune the regularization weight $\xi$ (and consequently the effective embedding dimension). Tuning the dimension optimality is also desirable, as generally networks have exact lower-dimensional factorizations than the embedding dimensions usually chosen in embedding models \parencite{chanpuriya_node_2020}.

In this paper, our interest is in studying the effects of such a
regularizer in the scenario where embeddings are learned via subsampling,
in which case the corresponding version of \eqref{eq:emp_risk} becomes
\begin{equation}
    \label{eq:emp_risk_reg}
    \sum_{i,j} \big\{ \mathbb{P}\big( (i, j) \in \mcP ) \ell_\mcP(\langle \omega_i, \omega_j \rangle ) + \mathbb{P}\big( (i, j) \in \mcN) \ell_{\mcN}(\langle \omega_i, \omega_j \rangle) \big\} + \sum_i \mathbb{P}\big( i \in \mcV(\mcP \cup \mcN) \big) \| \omega_i \|_2^2,
\end{equation}
where $\mcV(\mcP \cup \mcN)$ is the set of vertices which appear either
in $\mcP$ or $\mcN$ (see Appendix~\ref{sec:app:samp_reg}). The first part of the likelihood can still be thought of as corresponding to a matrix factorization term - see e.g \parencite{qiu_network_2018}. However, we note that for certain sampling schemes (e.g random walk samplers), the probability a vertex is sampled is not equiprobable across vertices, and so the regularizer will not be the same form as in \eqref{eq:intro:nuc_norm_reg}. Despite this, we still want to analyze the extent to which nuclear norm type regularization (and hence effective dimension reduction) may still arise. To do so, we study
minimizers $\whomegavec$ of \eqref{eq:emp_risk_reg} assuming the graph arises from a sparsified exchangeable graph, and obtain guarantees of the form
\begin{equation}
    \label{eq:guarantees}
    \frac{1}{n^2} \sum_{i,j} \big( \langle \whomega_i, \whomega_j \rangle - K(\lambda_i, \lambda_j) \big)^2 = o_p(1) \; \text{ and } \; \min_{Q \in O(d)} \frac{1}{n} \sum_{i} \big\| \whomega_i - \psi(\lambda_i) Q \big\|_2^2 = o_p(1)
\end{equation}
for some functions $K: [0, 1]^2 \to \mathbb{R}$ and $\psi: [0, 1] \to \mathbb{R}^d$ obtained through the process of minimizing the population objective (Section~\ref{sec:theory}). Our results allow us to recover the motivation above that the
regularization acts to reduce the effective dimension of the learned embedding
vectors. We also illustrate experimentally that using such regularization can give performance competitive to architectures
such as GraphSAGE (Section~\ref{sec:exps}).  We note that our theoretical results apply
\emph{in the regime $d \ll n$}, reflecting chosen embedding dimensions
in practice. This means that \eqref{eq:emp_risk_reg} is non-convex in the matrix $G_{ij} = \langle \omega_i, \omega_j \rangle$ due to rank constraints, which complicates the theoretical analysis (as compared to e.g \parencite{qiu_network_2018}.)

\subsection{Related works}

\textbf{Guarantees for embedding methods:} We highlight that there
is an extensive literature on the embeddings formed by the eigenvectors
of the adjacency or Laplacian matrices of
a network \parencite[e.g][]{lei_consistency_2015,rubin-delanchy_statistical_2017,tang_limit_2018,levin_limit_2021,athreya_statistical_2018,lei_network_2021}. Under various latent space
models for the network, these works
give guarantees on quantities of the form $\max_{i }\| \whomega_i - Q \psi_i\|_2^2$ for some orthogonal matrix $Q$ and vectors $\psi_i$, to  
discuss recovery of latent variables and/or obtain
exact recovery in a community detection task. Stronger bounds
are obtained in this setting as they are able to directly apply matrix eigenvector perturbation methods to study the embeddings, 
which we cannot with our approach; as a tradeoff, our approach 
allows us to study
embeddings learned via a variety of subsampling schemes,
which these works do not. We highlight that
the second bound in \eqref{eq:guarantees} can still be used to give guarantees
for weak recovery of community detection \parencite{lei_consistency_2015}.
There are a few works discussing random walk methods like node2vec, 
albeit circumventing the non-convexity in the problem; \parencite{qiu_network_2018} discusses the unconstrained minima of the loss \eqref{eq:n2v_loss} when $d = n$, with \parencite{zhang_consistency_2021} then
examining the best rank $r$ approximation to this matrix when the
generating model is a stochastic block model with $k$ communities 
and $r \leq k$. \parencite{davison_asymptotics_2021} covers the
non-convex regime $d \ll n$ in the case where $\xi_n = 0$, i.e
without regularization.

\textbf{Nuclear norm penalties and $\ell_2$ regularized embeddings:} In the context of matrix factorization (so the matrix factors are embedding matrices), the effects of
Frobenius norm penalties inducing nuclear norm penalties
are well known \parencite[e.g][]{recht_guaranteed_2007,udell_generalized_2015}; generally, there is an extensive literature on the effects of nuclear norm 
penalization in the finite-dimensional setting \parencite[e.g][]{bach_consistency_2008,recht_guaranteed_2010,koltchinskii_nuclear-norm_2011,chen_reduced_2013}. 
In \parencite{zhang_improve_2018}, it is also shown that $\ell_2$ regularized node2vec gives an improvement in performance on downstream tasks.

\textbf{Graphon estimation:} We mention that our
guarantees on the gram matrix formed by the learned embeddings are
similar to those obtained in the graphon estimation
literature \parencite[see e.g][]{borgs_private_2015-1,borgs_revealing_2018,wolfe_nonparametric_2013,gao_rate-optimal_2015,klopp_oracle_2017,latouche_variational_2016,chatterjee_matrix_2015,xu_rates_2018}. Depending
on the choice of sampling scheme and loss, it is possible for the limiting
matrix $K(\lambda_i, \lambda_j)$ to be an invertible transformation of
$W(\lambda_i, \lambda_j)$, and so we compare our rates of convergence
in such a scenario (see Remark~\ref{app:rmk:rates_of_convergence} in Appendix~\ref{sec:app:embed_conv}).
\section{Framework and assumptions of analysis}
\label{sec:framework}

Given a sequence of graphs $\mcG_n = (\mcV_n, \mcE_n)$, and writing $\bmomega = \omegavec$ with $\omega_u \in \mathbb{R}^d$ denoting the $d$-dimensional embedding
of vertex $u$, we study the regularized empirical risk function
\begin{equation}
    \label{eq:framework:emp_risk} 
\begin{aligned}
  \mcR_n(\bm{\omega}_n) + \xi_n \mcR^{\text{reg}}_n(\bm{\omega}_n) \;\text{ where }\;
  \mcR_n(\bm{\omega}_n) & := \sum_{\substack{i, j \in [n] \\ i \neq j }} \mathbb{P}\big( (i, j) \in S(\mcG_n) \,|\, \mcG_n \big) \ell( \langle \omega_i, \omega_j \rangle, a_{ij}), \\
  \mcR_n^{\text{reg}}(\bm{\omega}_n) & := \sum_{i \in [n] } \mathbb{P}\big( i \in \mcV(S(\mcG_n)) \,|\, \mcG_n \big) \| \omega_i \|_2^2,
\end{aligned}
\end{equation}
for $\xi_n \geq 0$. Here, we define a subsample $S(\mcG)$ of a graph $\mcG$ as a collection of vertices $\mcV(S(\mcG))$ and a symmetric subset of $\mcV(S(\mcG)) \times \mcV(S(\mcG))$. The sampling probabilities are conditional on $\mcG_n$ as we will soon assume that the $\mcG_n$ also arise from a probabilistic model. We note \eqref{eq:framework:emp_risk} arises from \eqref{eq:emp_risk_reg} whenever
$\mcP$ and $\mcN$ are random subsets of $\mcE_n$ and $\mcV \times \mcV \setminus \mcE_n$ respectively (see Appendix~\ref{sec:app:get_to_the_risk}). 

Throughout, $\ell(y, x)$ is either the cross-entropy loss
$\ell_{\sigma}(y, x) := -x \log\sigma(y) - (1-x) \log\sigma(-y)$ or the squared
loss $\ell_2(y, x) := (y-x)^2$. We now discuss our assumptions on the
generative model of the graph. Recall in
the introduction we argued embedding methods are implicitly fitting an
exchangeable graph model; consequently, as a first step to analysis, we will assume that the graph arises from a sparsified graphon (to account for the sparsity in graphs observed in the real world).

\begin{assume}
    \label{assume:graphon}
    We assume that the sequence of graphs $(\mcG_n)_{n \geq 1}$ have
    vertex sets $\mcV_n = [n]$ and arise
    from a graphon process with generating graphon $W_n = \rho_n W$ with $\rho_n \gg \log(n)/n$, so that
    \begin{equation*}
        \lambda_i \iid \mathrm{Unif}[0, 1] \text{ for } i \in [n], \quad 
        a_{ij} | \lambda_i, \lambda_j \stackrel{\text{indep.}}{\sim} \mathrm{Bernoulli}(\rho_n W(\lambda_i, \lambda_j)) \text{ for } i < j.
    \end{equation*}
    We moreover suppose that $W \in [c, 1-c]$ for some $c > 0$, and either a) $W$ is piecewise constant on a partition $\mcQ \times \mcQ$ where $\mcQ$ is a partition of $[0, 1]$ of size $\kappa$ (so the model is a stochastic block model), or b) $W$ is $\text{H\"{o}lder}([0, 1]^2, \beta_W, L_W)$ for some
    exponents and constants $\beta_W \in (0, 1]$ and $L_W < \infty$.
\end{assume}

We provide an introduction to graphon models in Appendix~\ref{sec:app:graphon}.

\begin{remark}
    \label{rmk:sparse}
    Here we use the canonical choice of uniform latent variables for the graphon as
    guaranteed by the Aldous-Hoover theorem for vertex exchangeable graphs \parencite{aldous_representations_1981}; in principle, 
    our results can extend to graphons on higher dimensional 
    latent spaces by the same style of arguments \parencite[see e.g][]{davison_asymptotics_2021}. We highlight that our assumptions
    are somewhat restrictive with regards to the boundedness and  
    and sparsity conditions; while
    it is common to allow $\rho_n \gg \log(n)/n$ \parencite[e.g][]{wolfe_nonparametric_2013,oono_graph_2021} - in which case the degree structure is regular, not necessarily realistic of real world networks - 
    it is also common to work in the regime where $\rho_n = \Theta(\log(n)/n)$ or smaller \parencite[e.g][]{borgs_revealing_2018,xu_rates_2018}. We
    highlight that in general graphons, regardless
    of the sparsity factor, tend to not give rise to graphs
    with power-law or heavy-tailed type degree structures, which is frequent
    with real world networks \parencite{broido_scale-free_2019,zhou_power-law_2020}.
\end{remark}

For subsampling schemes used in practice (such as random
walk and uniform edge samplers), the sampling probability
of vertices and edges depends only on local features of the graph.
Following \parencite{davison_asymptotics_2021}, we can formalize
this as follows:

\begin{assume}
  \label{assume:slc}
  There exist sequences of measurable functions $(f_n)_{n \geq 1}$ and $(\tilde{g}_n)_{n \geq 1}$ and a sequence $s_n = o(1)$, such that
  \begin{gather*}
    \max_{\substack{i, j \in [n] \\ i \neq j}} \Big| \frac{ n^2 \mathbb{P}\big( (i, j) \in S(\mcG_n) \,|\, \mcG_n\big)  }{ f_n(\lambda_i, \lambda_j, a_{ij} ) } - 1 \Big| = O_p(s_n), \; \max_{i \in [n]} \Big| \frac{ n \mathbb{P}\big( i \in \mcV(S(\mcG_n)) \,|\, \mcG_n) }{ \tilde{g}_n(\lambda_i ) } - 1 \Big| = O_p(s_n),
  \end{gather*}
  and moreover $\mathbb{E}[f_n(\lambda_1, \lambda_2, a_{12} ) ] = O(1)$, $\mathbb{E}[f_n(\lambda_1, \lambda_2, a_{12})^2] = O(\rho_n^{-1})$, $\mathbb{E}[\tilde{g}_n(\lambda_1)] = O_p(1)$.
\end{assume}

This assumption allows us to replace the sampling probabilities in the empirical risk by functions which depend on the latent variables and edge assignments in the model, from which the exchangeability in the model can be used to allow for a large sample analysis. Examples of sampling schemes satisfying this condition are given in
Section~\ref{sec:samp}. We additionally impose some regularity
conditions on the "averaged" versions of the above functions defined by
\begin{align}
    \fnone := f_n(l, l', 1) W_n(l, l'), \qquad \fnzero := f_n(l, l', 0) (1 - W_n(l, l')).
\end{align} 

\begin{assume}
  \label{assume:reg}
  We assume the the functions $\fnone$, $\fnzero$ and $\tilde{g}_n(l)$
  are uniformly bounded above by $M$ and away from zero by $M^{-1}$ for
  some constant $M \in (0, \infty)$. We also suppose that either a) 
  there exists a partition $\mcQ$ of $[0, 1]$ into $\kappa$ parts such that
  $\fnone$ and $\fnzero$ are piecewise constant on $\mcQ \times \mcQ$, and
  $\tilde{g}_n(l)$ is piecewise constant on $\mcQ$; or b) the $\fnone$ 
  and $\fnzero$ are H\"{o}lder$([0, 1]^2$, $\beta$, $L)$, and that
  the $\tilde{g}_n(l)$ are H\"{o}lder$([0, 1]$, $\beta$, $L)$.
\end{assume}

Assumption~\ref{assume:reg} will follow as a consequence of Assumptions~\ref{assume:graphon}~and~\ref{assume:slc}
for the sampling schemes discussed in Section~\ref{sec:samp}, with
$\beta$ depending on $\beta_W$ and the hyper-parameters of the sampling
scheme.

\section{Theoretical results}
\label{sec:theory}

Our theoretical results take the following flavour: we identify
the correct population versions (in reference to an infinite graphon on a vertex set $\mathbb{N}$) of $\mcR_n(\bmomega)$ and 
$\mcR_n^{\text{reg}}(\bmomega)$ in the large sample limit $n \to \infty$
to give a regularized population risk,
and then use this to give guarantees about any minimizers of
$\mcR_n(\bmomega) + \xi_n \mcR_n^{\text{reg}}(\bmomega)$ being
close (in some sense) to the unique minimizer of the regularized population
risk. As network embedding methods are used on very large networks,
such a large sample statistical analysis is appropriate. 

We introduce the population versions of $\mcR_n(\bmomega)$ and $\mcR_n^{\text{reg}}(\bmomega)$ respectively as
\begin{equation}
    \mcI_n[K] := \intsq \sum_{x \in \{0, 1\}} \tilde{f}_n(l, l', x) \ell(K(l, l'), x) \, dl dl', \qquad \mcI_n^{\text{reg}}[K] := \int_0^1 K(l, l) \tilde{g}_n(l) \, dl,
\end{equation}
defined over functions $K: [0, 1]^2 \to \mathbb{R}$. $\mcI_n[K]$ was first introduced in \parencite{davison_asymptotics_2021}.
The formula given for $\mcI_n^{\text{reg}}[K]$ holds only for $K$ whose
diagonal is well defined; in general, if $K$ admits a decomposition
\begin{equation}
    \label{eq:K_comp}
    K\llp = \sum_{i = 1}^{\infty} \mu_i(K) \psi_i(l) \psi_i(l') \quad \text{ where } \quad \int_0^1 \psi_i(l) \psi_j(l) \tilde{g}_n(l) \, dl = \begin{cases} 1 & \text{ if } i = j, \\ 0 & \text{ otherwise,} \end{cases}
\end{equation}
and $\mu_i(K) \geq 0$ for all $i$ (understood as a limit in $L^2([0, 1]^2)$), then we can extend the definition of the regularizer to be $\mcI_n^{\text{reg}}[K] := \sum_{i = 1}^{\infty} \mu_i(K)$. Consequently, the penalty corresponds to the trace of $K$,
when viewed as the kernel of a Hilbert-Schmidt operator.
This means that $\mcI_n^{\text{reg}}[K]$ should be
viewed as a nuclear-norm penalty on the kernel $K$, which encourages the $\mu_i(K)$ to be shrunk exactly towards zero, similar to the finite 
dimensional scenario; also see e.g. Theorem~\ref{thm:calc_minima}.

\subsection{Guarantees on the learned embedding vectors}
\label{sec:theory:embed}

We begin
with a result which guarantees that $\mcI_n[K] + \xi_n \mcI_n^{\text{reg}}[K]$,
once restricted to an appropriate domain, is the correct population version of $\mcR_n(\bmomega) + \xi_n \mcR_n^{\text{reg}}(\bmomega)$.

\begin{theorem}
  \label{thm:loss_conv}
  Suppose that Assumptions~\ref{assume:slc}~and~\ref{assume:reg} hold, and that $\xi_n = O(1)$. Define
  \begin{equation*}
    \mcZ_d^{\geq 0}(A) := \big\{ K \,:\, K\llp = \langle \eta(l), \eta(l') \rangle, \eta: [0, 1] \to [-A, A]^d \big\} \text{ for } d \in \mathbb{N}, A > 0.
  \end{equation*}
  Then we have that 
  \begin{align*}
    \Big| & \min_{ \bmomega \in ([-A, A]^d)^n } \big\{ \mcR_n(\bmomega) + \xi_n \mcR_n^{\text{reg}}(\bmomega) \big\} - \min_{K \in \mcZ_d^{\geq 0}(A) } \big\{ \mcI_n[K] + \xi_n \mcI_n^{\text{reg}}[K] \big\} \Big| = O_p(r_n)
  \end{align*}
  where $r_n = s_n + (d^p/n\rho_n)^{1/2} + t_n$, with $t_n = (\log \kappa/n)^{1/2}$ under Assumption~\ref{assume:reg}a), $t_n = (\log(n)/n^{2\beta/(1+2\beta)})^{1/2}$ under Assumption~\ref{assume:reg}b), $p = 3$ for the cross-entropy
  loss and $p = 5$ for the squared loss.
\end{theorem}

See Appendix~\ref{sec:app:loss_conv} for the proof and a discussion of the rates given; we note that it is necessary that $d \ll n$ in order for $r_n \to 0$. The rates can be improved to give $p = 1$, under additional restrictions on the parameter space (Remark~\ref{app:rmk:tighter_bounds}). The interpretation of the set $\mcZ_d^{\geq 0}(A)$ is as follows: if $\omega_u \in [-A, A]^d$
is the embedding of vertex $u$, in the population limit $\eta(\lambda_u) \in [-A, A]^d$
should give the embedding of a vertex with latent feature $\lambda_u$.
As \eqref{eq:framework:emp_risk} is parameterized through terms of the form $\langle \omega_u, \omega_v \rangle$, the population version of \eqref{eq:framework:emp_risk} should be parameterized through functions 
$K(\lambda_u, \lambda_v) = \langle \eta(\lambda_u), \eta(\lambda_v) \rangle$.

\begin{remark}
    We note that the assumption that the embedding vectors belong to a hypercube $[-A, A]^d$ is not restrictive; for example, in practice
    embedding vectors are usually initialized randomly and
    uniformly over $[-1, 1]^d$. Moreover, our results allow for $A$ to
    grow logarithmically with $n$, and only change the bound by a poly-log
    factor. We highlight that if $\xi_n \to \infty$ as $n \to \infty$, then the embedding vectors will shrink towards $0$ as $n \to \infty$ (as seen in Figure~\ref{fig:tsne_cora}). This is because the proof of Theorem~\ref{thm:loss_conv} shows that any minimizer must satisfy $n^{-1} \sum_{i=1}^n \| \omega_i \|_2^2 = O_p(\xi_n^{-1}) = o_p(1)$ in such a regime.
\end{remark}

We now give convergence guarantees for any sequence of embedding vectors
minimizing \eqref{eq:framework:emp_risk}.

\begin{theorem}
  \label{thm:embed_conv}
  Suppose that Assumptions~\ref{assume:slc}~and~\ref{assume:reg} hold and that $\xi_n = O(1)$. Then for each $n$, there exists a unique minimizer to the optimization problem
  \begin{equation*}
    K_n^* = \argmin_{K \in \mcZ^{\geq 0} } \big\{ \mcI_n[K] + \xi_n \mcI_n^{\text{reg}}[K] \big\} \quad \text{ where } \quad \mcZ^{\geq 0} := \mathrm{cl}\Big( \bigcup_{d \geq 1} \mcZ_d^{\geq 0}(A) \Big)
  \end{equation*}
  is free of $A > 0$ (see Proposition~\ref{app:min_exist} for details). Moreover, under some regularity conditions on the $K_n^*$ (see Theorem~\ref{app:thm:embed_conv_full} in Appendix~\ref{sec:app:embed_conv}), there exists $A' < \infty$ free of $n$ and a sequence of embedding dimensions $d = d(n) \ll n$ such that, for any sequence of minimizers
  \begin{equation*}
    \label{eq:embed_conv:embeds}
    \whbmomega \in \argmin_{\bmomega \in ([-A_1,A_1]^d)^n} \big\{ \mcR_n(\bmomega) + \xi_n \mcR_n^{\text{reg} }(\bmomega) \big\} \text{ satisfying } \max_{i,j} | \langle \whomega_i, \whomega_j \rangle | \leq A_2
  \end{equation*}
  with $A_1, A_2 \geq A'$, we have that 
  \begin{equation*}
    \frac{1}{n^2} \sum_{i, j \in [n]} \big( \langle \whomega_i, \whomega_j \rangle - K_n^*(\lambda_i, \lambda_j) \big)^2 = o_p(1).
  \end{equation*}
  Moreover, when Assumption~\ref{assume:reg}a) holds, $K_n^*$
  can be computed via a finite dimensional convex program, and is of rank $r \leq \kappa$, in that an expansion of the form \eqref{eq:K_comp} holds
  with $\mu_i(K_n^*) = 0$ for $i > r$.
\end{theorem}

The case where $\xi_n = 0$ is proven in \parencite{davison_asymptotics_2021}, which also verifies the convergence on simulated data. A proof is 
given in Appendix~\ref{sec:app:embed_conv}. Under certain
circumstances, this allows us to give guarantees about the
distribution of the embedding vectors themselves.

\begin{theorem}
    \label{thm:procrustes}
    Suppose that $K_n^*$ is regular in the sense of Theorem~\ref{thm:embed_conv}, with the conclusions of the theorem holding. Moreover suppose that $K_n^*$ is a kernel of rank $r < \infty$, has a decomposition of the form $K_n^*(l, l') = \sum_{i=1}^r \phi_{n,i}(l) \phi_{n,i}(l')$
    for some functions $\phi_{n,i}: [0, 1] \to \mathbb{R}$, and the dimension $d$
    of the embedding vectors is chosen to be equal to $r$. Writing $\phi_n(l) = (\phi_{n,i}(l))_{i=1}^r$, we have
    \begin{equation}
        \label{eq:procrustes}
        \min_{Q \in O(r)} \frac{1}{n} \sum_{i =1}^n \big\| \whomega_i - Q \phi_{n}(\lambda_i) \big\|_2^2 = o_p(1).
    \end{equation}
\end{theorem}

The assumption that $d = r$ in Theorem~\ref{thm:procrustes} is a 
restrictive one, given
that embedding dimensions in practice are usually chosen to be one of
either 128, 256 or 512. This can be alleviated by instead
giving a guarantee for the optimal $r$ dimensional projection of the embedding vectors. 

\begin{theorem}
    \label{thm:procrustes_2}
    If instead $d > r$ in Theorem~\ref{thm:procrustes}, let $\widetilde{G} \in \mathbb{R}^{n \times n}$ denote the best rank $r$ approximation to 
    the matrix $G_{ij} := (\langle \whomega_i, \whomega_j \rangle)_{ij}$,
    and write $\widetilde{G} = \tilde{\Omega} \tilde{\Omega}^T$ for some $\tilde{\Omega} \in
    \mathbb{R}^{n \times r}$. Then $n^{-2} \| \widetilde{G} - G \|_F^2 = o_p(1)$, and writing 
    $\widetilde{\omega}_i$ for the rows of $\widetilde{\Omega}$,
    the guarantee in \eqref{eq:procrustes} holds with $\widetilde{\omega}_i$
    replacing the $\whomega_i$.
\end{theorem}

Informally, this says that the embedding vectors approximately lie
on a $r$-dimensional subspace which contains some latent information
about the network, depending on the minimizing kernel $K_n^*$. 
We now highlight that the assumption that
$K_n^*$ is of finite rank $r < \infty$ is not restrictive
even when $W$ is not a SBM; this is a
consequence of the effect of 
the regularization penalty $\mcI_n^{\text{reg}}[K]$.

\begin{theorem}
    \label{thm:calc_minima}
    Let $\ell(y, x)$ be the squared loss, and suppose that $\rho_n = 1$, $\tilde{f}_n(l, l', 1) = \tilde{f}_n(l, l', 0) = c_1$ and $\tilde{g}_n(l) = c_2$ for some
    constants $c_1, c_2 > 0$ (see e.g. Algorithm~\ref{alg:psamp}
    in Section~\ref{sec:samp}). Then if $W$ is H\"{o}lder$([0, 1]^2, \beta, L)$, the minima of $\mcI_n[K] + \xi_n \mcI_n^{\text{reg}}[K]$ 
    over $\mcZ^{\geq 0}$ is of finite
    rank for any $\xi_n > 0$, and is also H\"{o}lder continuous of exponent $\beta$.
\end{theorem}

We highlight that this result also shows that the regularizer acts to shrink the singular values of a minimizer of $\mcI_n[K] + \xi_n \mcI_n^{\text{reg}}[K]$ exactly towards zero. See Appendix~\ref{sec:app:extra} for proofs of Theorems~\ref{thm:procrustes},~\ref{thm:procrustes_2}~and~\ref{thm:calc_minima}. 

\subsection{Sampling schemes satisfying Assumption~\ref{assume:slc}}
\label{sec:samp}

We now discuss some examples of frequently used sampling schemes which satisfy
Assumption~\ref{assume:slc}. Proofs of the results in this section
can be found in Appendix~\ref{sec:app:samp_proof}. We introduce the notation
\begin{equation}
    W(\lambda, \cdot) := \int_0^1 W(\lambda, y)\, dy, \quad 
    \mcE_W(\alpha) := \int_0^1 W(\lambda, \cdot)^{\alpha} \, d \lambda, \quad
    \mcE_W := \mcE_W(1). 
\end{equation}

\begin{alg}[Uniform vertex sampling] 
  \label{alg:psamp}
  Given a graph $\mcG_n$ and number of samples $k$, we select $k$ vertices from $\mcG_n$ uniformly and without replacement, and then return $S(\mcG_n)$ as the induced subgraph using these sampled vertices.
\end{alg}

\begin{lemma} \label{sampling:psamp_formula}
  For Algorithm~\ref{alg:psamp}, Assumption~\ref{assume:slc} holds with $     f_n(\lambda_i, \lambda_j, a_{ij} ) = k(k-1)$, $\tilde{g}_n(\lambda_i) = k$
  and $s_n = 1/n$.
\end{lemma}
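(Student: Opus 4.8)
The proof is just a direct computation of the two conditional sampling probabilities together with a check of the integrability conditions; I do not expect any genuine obstacle. The one point requiring a little care is reading off from the definition of a subsample what ``$(i,j) \in S(\mcG_n)$'' means for Algorithm~\ref{alg:psamp}: since the scheme returns the \emph{induced} subgraph on the sampled vertex set, the symmetric subset of the adjacency matrix it retains is the full restriction to the sampled rows and columns, so the pair $(i,j)$ (with $i \neq j$) lies in $S(\mcG_n)$ precisely when both $i$ and $j$ are among the $k$ chosen vertices. Because the $k$-subset is drawn uniformly without replacement from $[n]$, this event has probability $\binom{n-2}{k-2}/\binom{n}{k} = \frac{k(k-1)}{n(n-1)}$, which in particular does not depend on $\mcG_n$. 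Likewise $i \in \mcV(S(\mcG_n))$ exactly when $i$ is one of the $k$ sampled vertices, an event of probability $k/n$.

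Given these two formulae, the first display in Assumption~\ref{assume:slc} follows by taking $f_n(\lambda_i,\lambda_j,a_{ij}) = k(k-1)$: then
\[
\frac{n^2 \, \bbp{(i,j) \in S(\mcG_n) \,|\, \mcG_n}}{f_n(\lambda_i,\lambda_j,a_{ij})} = \frac{n^2}{n(n-1)} = 1 + \frac{1}{n-1},
\]
so the maximum over $i \neq j$ of the deviation from $1$ equals $\frac{1}{n-1} = O(1/n)$, and this is a \emph{deterministic} bound, hence certainly $O_p(s_n)$ with $s_n = 1/n$. Taking $\tilde{g}_n(\lambda_i) = k$ gives $n\,\bbp{i \in \mcV(S(\mcG_n)) \,|\, \mcG_n}/\tilde{g}_n(\lambda_i) = 1$ identically, so the second maximum is exactly $0$.

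It remains to verify the integrability conditions. Since $f_n \equiv k(k-1)$ and $\tilde{g}_n \equiv k$ are constants (with $k$ a fixed hyperparameter, so $k = O(1)$), we have $\bbe{f_n(\lambda_1,\lambda_2,a_{12})} = k(k-1) = O(1)$, $\bbe{f_n(\lambda_1,\lambda_2,a_{12})^2} = (k(k-1))^2 = O(1) = O(\rho_n^{-1})$ because $\rho_n \leq 1$, and $\bbe{\tilde{g}_n(\lambda_1)} = k = O(1) = O_p(1)$. This checks every requirement of Assumption~\ref{assume:slc} and completes the proof; the only slightly non-mechanical steps are the combinatorial identification of the sampling probabilities and the observation that the resulting rates hold surely rather than merely in probability.
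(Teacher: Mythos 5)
Your proposal is correct and matches the paper's argument, which simply notes that a vertex is sampled with probability $k/n$ and a pair with probability $k(k-1)/n(n-1)$ and concludes immediately. You merely spell out the combinatorics, the $1/(n-1) = O(1/n)$ deviation, and the (trivial) integrability checks in more detail.
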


\begin{alg}[Uniform edge sampling \parencite{tang_line_2015}]
  \label{alg:unifedge+ns}
  Given a graph $\mcG_n$, number of edges to sample $k$, and number of negative samples $l$ per positive sample,
  \begin{enumerate}[label=\roman*), leftmargin=*]
      \item We form a sample $S_0(\mcG_n)$ by sampling $k$ edges from $\mcG_n$ uniformly and without replacement;
      \item We form a sample set of negative samples $S_{ns}(\mcG_n)$ by drawing, for each $u \in \mcV(S_0(\mcG_n))$, $l$ vertices $v_1, \ldots, v_l$ i.i.d according to the unigram distribution 
      \begin{equation*}
          \mathrm{Ug}_{\alpha}\big( v \,|\, \mcG_n \big) \propto \mathbb{P}\big( v \in \mcV(S_0(\mcG_n)) \,|\, \mcG_n )^{\alpha} 
      \end{equation*}
      and then adjoining $(u, v_i) \to S_{ns}(\mcG_n)$ if $a_{u v_i} = 0$.
  \end{enumerate}
  We then return $S(\mcG_n)$ as the union of $S_0(\mcG_n)$ and $S_{ns}(\mcG_n)$. 
\end{alg}

\begin{lemma} \label{sampling:unif_edge_uni_formula}
  For Algorithm~\ref{alg:unifedge+ns}, Assumption~\ref{assume:slc} holds with
  $s_n = ( \log(n) / n \rho_n )^{-1/2}$,
  \begin{gather*}
      f_n(\lambda_i, \lambda_j, 1 ) = \frac{2k}{\mcE_W \rho_n}, \; f_n(\lambda_i, \lambda_j, 0) = \frac{ 2k l }{ \mcE_W \mcE_W(\alpha) } \big\{ W(\lambda_i, \cdot) W(\lambda_j, \cdot)^{\alpha} + W(\lambda_i, \cdot)^{\alpha} W(\lambda_j, \cdot) \big\}, \\
    \tilde{g}_n(\lambda_i) = \frac{2k W(\lambda_i, \cdot)}{\mcE_W } +  \frac{ 2 k l W(\lambda_i, \cdot)^{\alpha} }{ \mcE_W \mcE_W(\alpha)} \cdot \int_0^1 (1 - \rho_n W(\lambda_i, y) ) W(y, \cdot) \, dy.
  \end{gather*}
\end{lemma}

\begin{alg}[Random walk sampling \parencite{perozzi_deepwalk_2014,grover_node2vec_2016}]
  \label{alg:random_walk}
  Given a graph $\mcG_n$, a walk length $k$, number of negative samples $l$ per positively sampled vertex and unigram parameter $\alpha$, we
  \begin{enumerate}[label=\roman*), leftmargin=*]
      \item Perform a simple random walk on $\mcG_n$ of length $k$, beginning from its stationary distribution, to form a path $(\tilde{v}_i)_{i \leq k+1}$, and report $(\tilde{v}_i, \tilde{v}_{i+1})$ for $i \leq k$ as part of $S_0(\mcG_n)$;
      \item For each vertex $\tilde{v}_i$, we select $l$ vertices $(\eta_j)_{j \leq l}$ independently and identically according to the unigram distribution 
      \begin{equation*}
          \mathrm{Ug}_{\alpha}( v \,|\, \mcG_n) \propto \mathbb{P}\big( \tilde{v}_i = v \text{ for some } i \leq k \,|\, \mcG_n \big)^{\alpha}
      \end{equation*}
      and then form $S_{ns}(\mcG_n)$ as the collection of vertex pairs $(\tilde{v}_i, \eta_j)$ which are not an edge in $\mcG_n$.
  \end{enumerate}
  We then return $S(\mcG_n)$ as the union of $S_0(\mcG_n)$ and $S_{ns}(\mcG_n)$.
\end{alg}

\begin{lemma} \label{sampling:rw_uni_stat_formula}
  For Algorithm~\ref{alg:random_walk}, Assumption~\ref{assume:slc} holds with
  $s_n = (\log(n)/ n \rho_n)^{1/2}$,
  \begin{gather*}
      f_n(\lambda_i, \lambda_j, 1 ) = \frac{2k}{\mcE_W \rho_n}, \; f_n(\lambda_i, \lambda_j, 0) = \frac{ l(k+1) }{\mcE_W \mcE_W(\alpha) } \big\{ W(\lambda_i, \cdot) W(\lambda_j, \cdot)^{\alpha} + W(\lambda_i, \cdot)^{\alpha} W(\lambda_j, \cdot) \big\}, \\
  \tilde{g}_n(\lambda_i)  = \frac{ k W(\lambda_i, \cdot) }{\mcE_W} +  \frac{ (k+1)l W(\lambda_i, \cdot)^{\alpha} }{ \mcE_W(\alpha) \mcE_W } \cdot \int_0^1 (1 - \rho_n W(\lambda_i, y) ) W(y, \cdot) \, dy.
  \end{gather*}
\end{lemma}

Examining the formula for $\tilde{g}_n(\lambda)$ above, we see that for random walk samplers the shrinkage provided to the learned kernel will be greater for vertices with larger degrees. Indeed, as $K(\lambda, \lambda) = \| \eta(\lambda) \|_2^2$ for $K \in \mcZ_d^{\geq 0}(A)$, the larger $\tilde{g}_n(\lambda)$ is, $\| \eta(\lambda) \|_2^2$ will be forced closer towards zero.

\subsubsection{An illustrating example}
\label{sec:theory:illustrate}

We now give a brief illustration of our theoretical results under a simple graphon model. To do so, we consider a sparsified SBM with $\kappa$ communities, each equiprobable, and with probabilities $\rho_n p$ and $\rho_n q$ ($p > q$) denoting the within-community and between-community edge probabilities. Writing $A_i = [(i-1)/\kappa, i/\kappa)$ for $i \in [\kappa]$, this can be represented as a graphon model with graphon $W_n = \rho_n W$, where $W(u, v) = p$ if $(u, v) \in \prod_{i=1}^\kappa A_i \times A_i$ and $W(u, v) = q$ otherwise. 

\begin{theorem}
    \label{thm:example}
    Suppose that the graph arises from the model discussed above, we use a cross-entropy loss and the random walk sampling scheme as described in Algorithm~\ref{alg:random_walk}. Then Theorem~\ref{thm:embed_conv} holds such that, for any minimizer $\bmomega = (\whomega_1, \ldots, \whomega_n)$ satisfying \eqref{eq:embed_conv:embeds} of Theorem~\ref{thm:embed_conv}, we have that
    \begin{equation*}
        \frac{1}{n^2} \sum_{i, j} \big( \langle \whomega_i, \whomega_j \rangle - K_n(\lambda_i, \lambda_j) \big)^2 = o_p(1) \text{ where } K_n(u, v) = (\tilde{K}_n)_{i, j} \text{ if } (u, v) \in A_i \times A_j,
    \end{equation*}
    and $\tilde{K}_n \in \mathbb{R}^{\kappa \times \kappa}$ is defined as the unique positive semi-definite minimizer of the function 
    \begin{equation*}
        - \frac{1}{\kappa^2} \sum_{i, j} \big\{ 2k c_1 \cdot (p \delta_{ij} + q (1 - \delta_{ij} )) \log \sigma( \tilde{K}_{ij}) + 2l (k+1) \log \sigma(-\tilde{K}_{ij}) \big\} + \xi c_2 \| \tilde{K} \|_{*}
    \end{equation*}
    where we write $c_1 := ( (p + (\kappa - 1)q )/\kappa)^{-1}$ and $c_2 := k + l(k+1)(1 - \rho_n c_1^{-1})$. In particular, for the above example we see that the form of the regularizer is exactly the nuclear norm of $\tilde{K}$, and so as $\xi$ increases, the singular values of the minimizer will be shrunk towards zero.
\end{theorem}

\section{Experiments}
\label{sec:exps}

We now examine the performance in using regularized node2vec embeddings for
link prediction and node classification tasks, and illustrate comparable, when not superior, performance to more complicated encoders for network embeddings. We perform experiments on the Cora, CiteSeer and PubMedDiabetes citation network datasets (see Appendix~\ref{sec:app:exps} for more details), which we use as they are commonly used benchmark datasets - see e.g \parencite{hamilton_inductive_2017,velickovic_deep_2018,kipf_variational_2016,hassani_contrastive_2020}.

\begin{figure*}[t!]
  \begin{subfigure}[b]{0.25\linewidth}
  \centering \includegraphics[width=\textwidth]{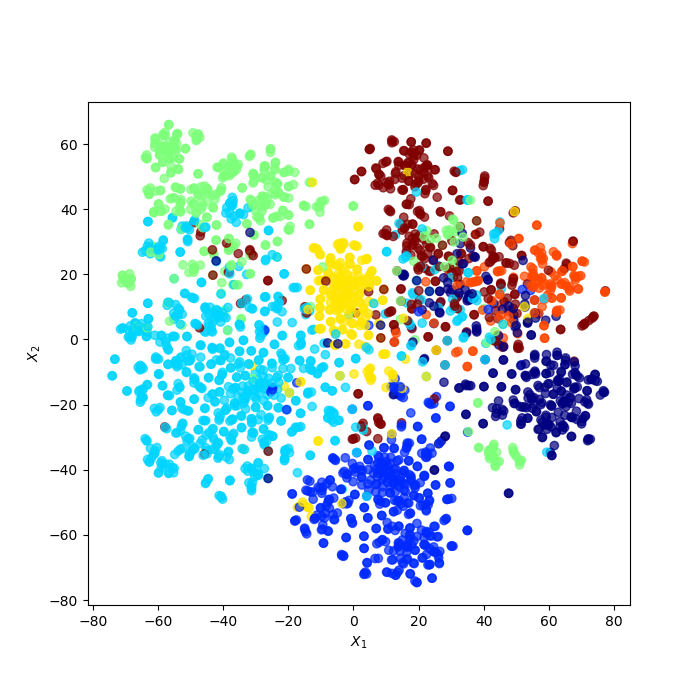}
  \caption{$\xi = 0$}
  \end{subfigure}\hfill%
  \begin{subfigure}[b]{0.25\linewidth}
  \centering \includegraphics[width=\textwidth]{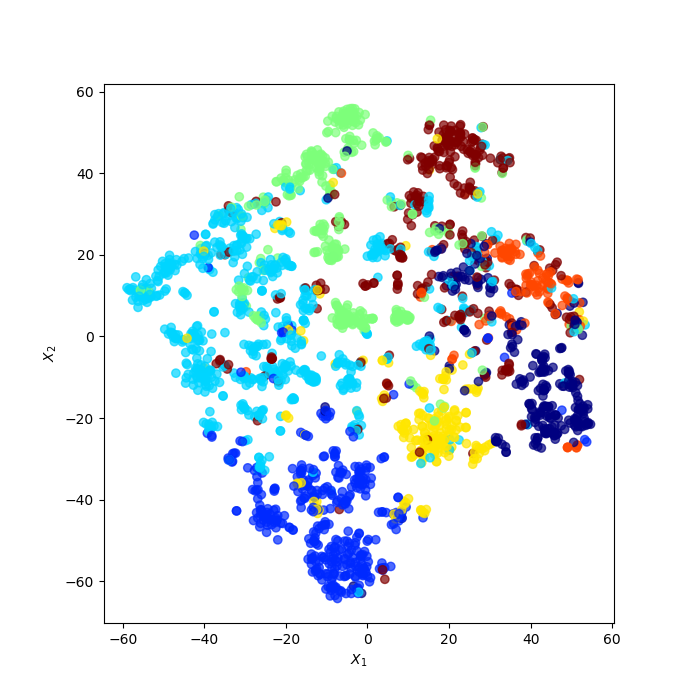}
  \caption{$\xi = 5 \times 10^{-7}$}
  \end{subfigure}\hfill%
  \begin{subfigure}[b]{0.25\linewidth}
    \centering \includegraphics[width=\textwidth]{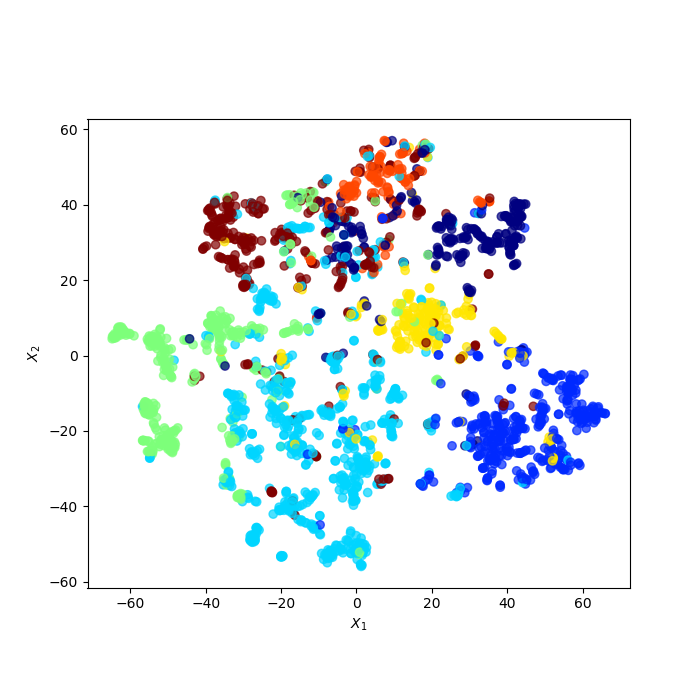}
    \caption{$\xi = 10^{-6}$}
    \end{subfigure}\hfill%
    \begin{subfigure}[b]{0.25\linewidth}
    \centering \includegraphics[width=\textwidth]{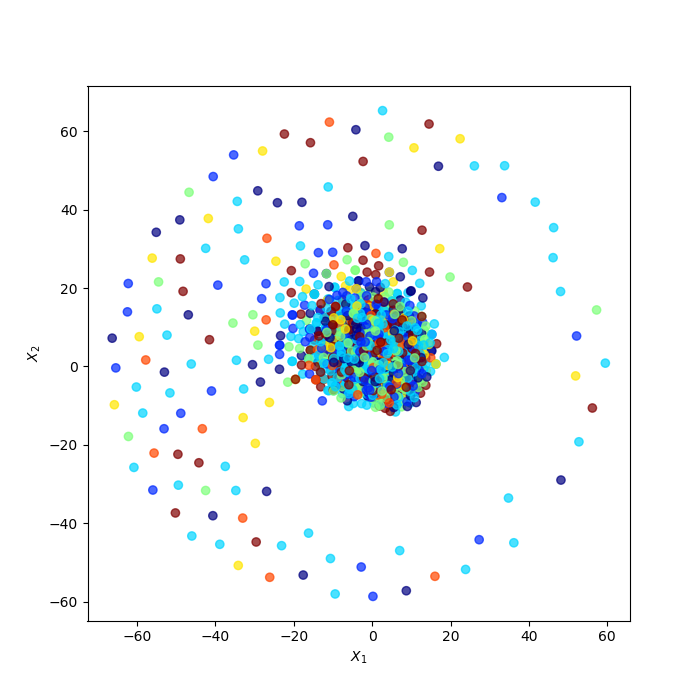}
    \caption{$\xi = 10^{-3}$}
  \end{subfigure}
\caption{TSNE visualizations of Cora network embeddings, learnt using node2vec for different regularization penalties $\xi$, with different colors representing different classes. As $\xi$ increases, the classes begin to cluster together and separate, and then eventually collapse towards the origin.}
\label{fig:tsne_cora}
\end{figure*}

\subsection{Methods used for comparison}
\label{sec:exps:methods}

For our experiments, we consider 128 dimensional node2vec embeddings learned with either no regularization, or a $\ell_2$ penalty on the embedding vectors with weight $\xi \in \{1, 5\} \times 10^{-\{3, 4, 5, 6, 7, 8\}}$; and with or without the node features concatenated. We compare these against methods which incorporate the network and covariate structure together in a non-linear fashion. We consider three methods - a two layer GCN \parencite{kipf_semi-supervised_2017} with 256 dimensional output embeddings trained in an unsupervised fashion through the node2vec loss, a two layer GraphSAGE architecture \parencite{hamilton_inductive_2017} with 256 dimensional output embeddings trained through the node2vec loss, and a single layer 256 dimensional GCN trained using Deep Graph Infomax (DGI) \parencite{velickovic_deep_2018}. We highlight that the GCN, GraphSAGE and DGI
always have access to the nodal features during training. All of our experiments used the Stellargraph \parencite{data61_stellargraph_2018} implementations for each method. Further experimental details are given in Appendix~\ref{sec:app:exps}.

\subsection{Link prediction experiments}

For the link prediction experiments, we create a training graph by removing 10\% of both the edges and non-edges within the network, and use this to learn an embedding of the network. We then form link embeddings by taking the entry-wise product of the corresponding node embeddings, use 10\% of the held-out edges to build a logistic classifier for the link categories, and then evaluate the performance on the remaining edges, repeating this process
50 times.

The PR AUC scores are given in Table~\ref{tab:link_results_pr} and visualized in Figure~\ref{fig:link_class_graphs}. Provided the regularization parameter is chosen optimally, we see an improvement in performance compared to using no regularization, with the jump in performance slightly greater when nodal features are not incorporated
into the embedding. The optimally regularized version of node2vec, even without including the node features, is competitive with the GCN and GraphSAGE (being equal or outperforming at least one of them across the three datasets), and that the version with concatenated node features is as good as the GCN trained using DGI. For all the datasets, we observe a sharp decrease in performance after the optimal weight, suggesting that it needs to be chosen carefully to avoid removing the informative network structure. As seen in Figure~\ref{fig:tsne_cora}, this occurs as the learned embeddings become randomly distributed at the origin once the regularization weight is too large.

\begin{table}
  \caption{PR AUC scores for link prediction experiments, and macro F1 scores for node classification experiments. "n2v" stands for node2vec without regularization; "rn2v" stands for regularized node2vec with the best score over the specified grid of penalty values; "NF" indicates that node features were concatenated to the learned node embeddings.}
  \vspace{5pt}
  \centering
  \label{tab:link_results_pr}
  \begin{tabular}{@{}ccccccc@{}}
  \toprule
  \multirow{2}{*}{Methods} & \multicolumn{3}{c}{PR AUC (link prediction)} 
  & \multicolumn{3}{c}{Macro F1 (node classification)}
  \\ \cmidrule(l){2-4} \cmidrule(l){5-7}
   & Cora & CiteSeer & PubMed & Cora & CiteSeer & PubMed\\ 
   \midrule
  n2v & 0.84 $\pm$ 0.01 & 	0.80 $\pm$ 0.02 & 0.86 $\pm$ 0.01 
  & 0.67 $\pm$ 0.04	& 0.48 $\pm$ 0.03	& 0.76 $\pm$ 0.00\\ 
  rn2v & 0.90 $\pm$ 0.01 &	0.88 $\pm$ 0.02 & 0.91 $\pm$ 0.00
  & 0.73 $\pm$ 0.03 &	0.55 $\pm$ 0.04	& 0.77 $\pm$ 0.00 \\[3pt]
  n2v+NF & 0.88 $\pm$ 0.01 &	0.90 $\pm$ 0.01 & 0.92 $\pm$ 0.00 
  & 0.70 $\pm$ 0.03 &	0.54 $\pm$ 0.03	& 0.79 $\pm$ 0.01\\
  rn2v+NF & 0.92 $\pm$ 0.01	& 0.93 $\pm$ 0.01 & 0.95 $\pm$ 0.00
  & 0.74 $\pm$ 0.03	& 0.58 $\pm$ 0.04	& 0.84 $\pm$ 0.00 \\[3pt]
  GCN & 0.90 $\pm$ 0.01 &	0.87 $\pm$ 0.02 & 0.94 $\pm$ 0.00 
  & 0.67 $\pm$ 0.04 & 0.48 $\pm$ 0.04 & 0.80 $\pm$ 0.01\\
  GSAGE & 0.90 $\pm$ 0.01 &	0.90 $\pm$ 0.01 & 0.88 $\pm$ 0.01 
  & 0.74 $\pm$ 0.04 &	0.56 $\pm$ 0.03 &	0.79 $\pm$ 0.00\\
  DGI & 0.91 $\pm$ 0.01 & 0.93 $\pm$ 0.01 & 0.95 $\pm$ 0.00 & 
  0.76 $\pm$ 0.03	& 0.60 $\pm$ 0.02	& 0.84 $\pm$ 0.00 \\ \bottomrule
  \end{tabular}
\end{table}

\begin{figure*}[t]
  \centering
  \begin{subfigure}[b]{0.30\linewidth}
  \centering \includegraphics[width=0.8\textwidth]{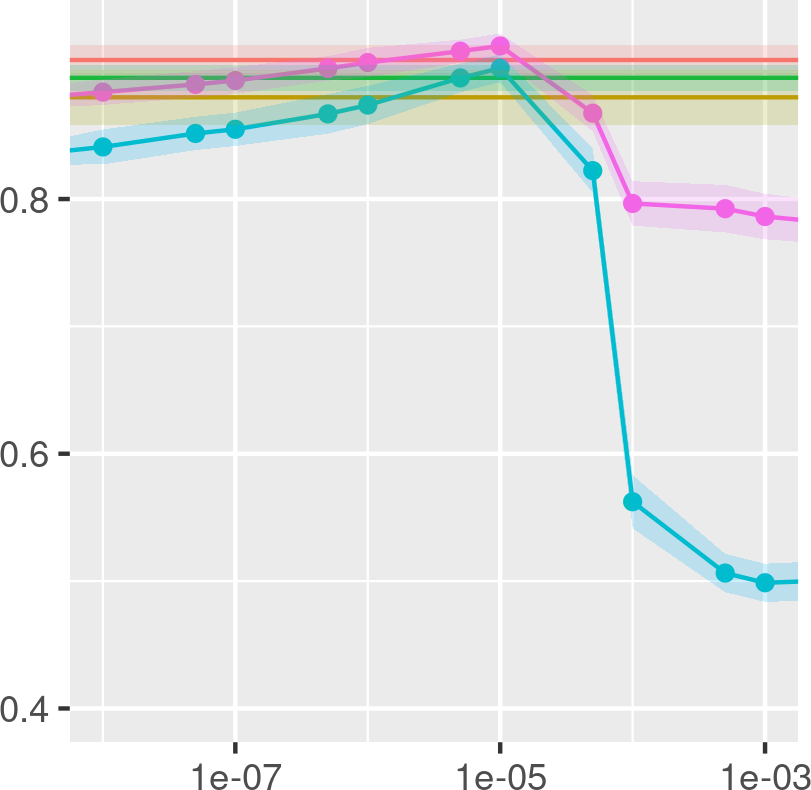}
  \caption{PR AUC scores on Cora}
  \end{subfigure}\hfill%
  \begin{subfigure}[b]{0.30\linewidth}
    \centering \includegraphics[width=0.8\textwidth]{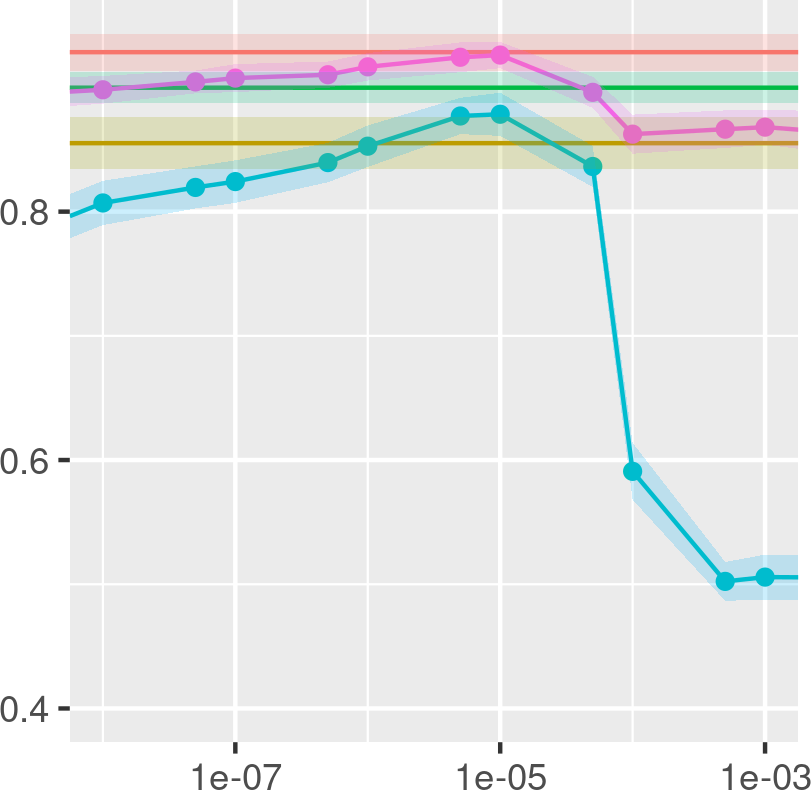}
    \caption{PR AUC scores on CiteSeer}
  \end{subfigure}\hfill%
  \begin{subfigure}[b]{0.30\linewidth}
      \centering \includegraphics[width=0.8\textwidth]{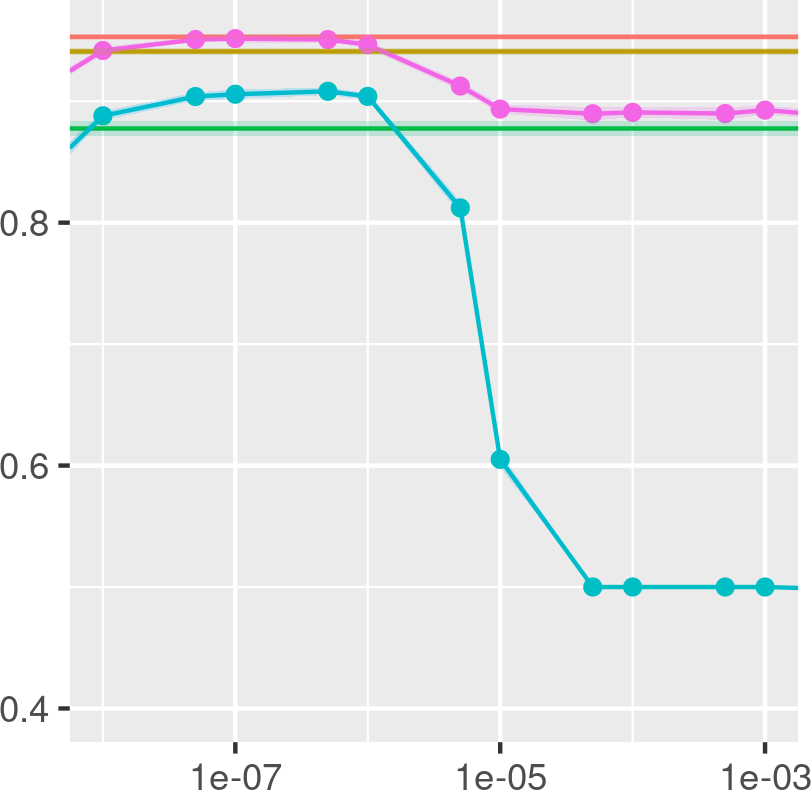}
      \caption{PR AUC scores on PubMed}
    \end{subfigure}%
    \\\vspace{5pt}%
    \begin{subfigure}[b]{0.30\linewidth}
        \centering \includegraphics[width=0.8\textwidth]{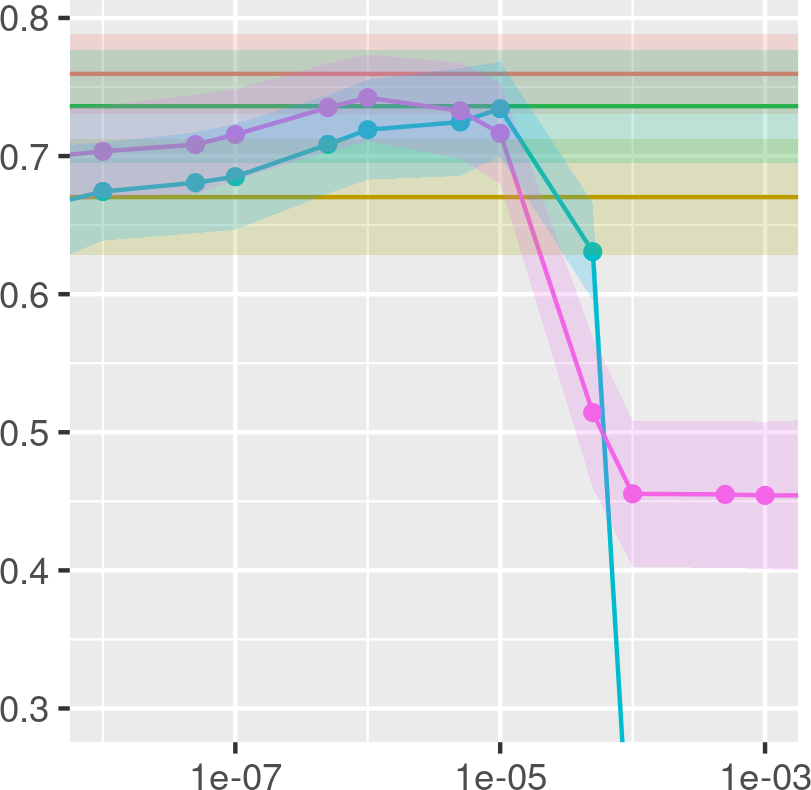}
        \caption{Macro F1 scores on Cora}
        \end{subfigure}\hfill%
        \begin{subfigure}[b]{0.30\linewidth}
          \centering \includegraphics[width=0.8\textwidth]{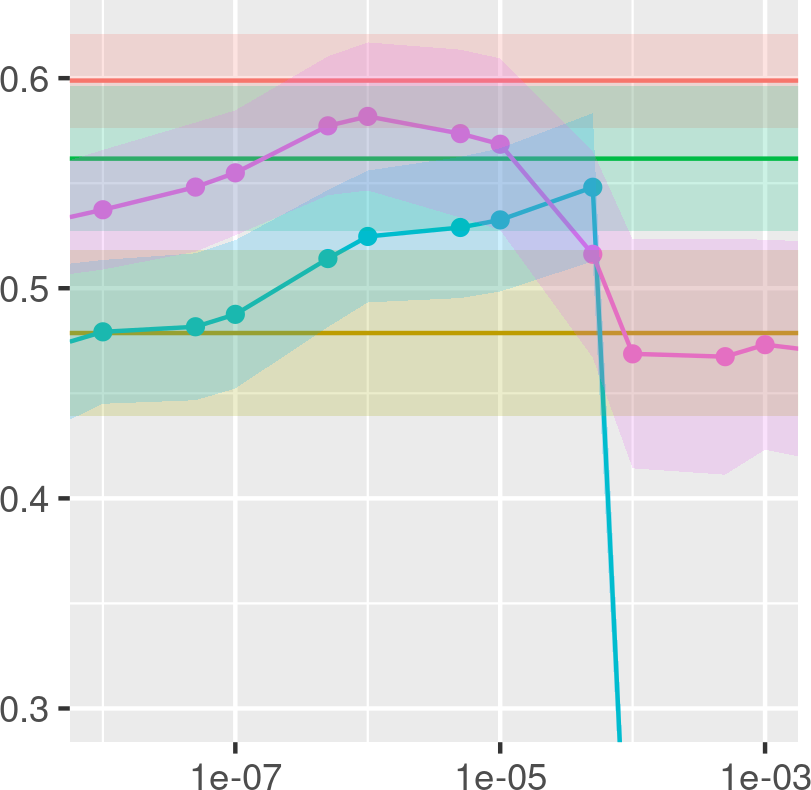}
          \caption{Macro F1 scores on CiteSeer}
        \end{subfigure}\hfill%
        \begin{subfigure}[b]{0.30\linewidth}
            \centering \includegraphics[width=0.8\textwidth]{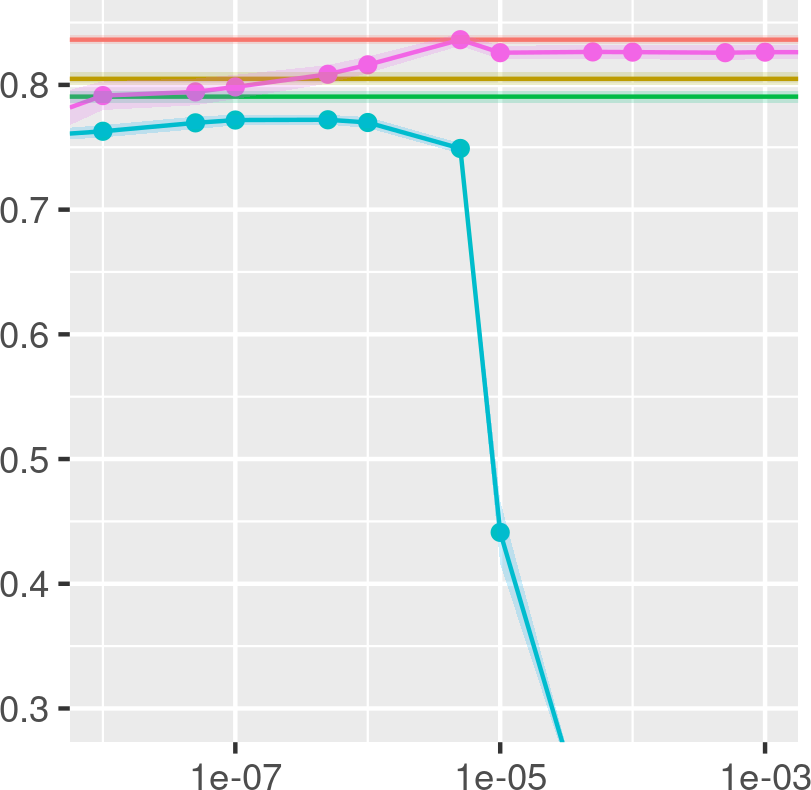}
            \caption{Macro F1 scores on PubMed}
          \end{subfigure}
\caption{Plots of average PR AUC scores for link prediction and macro F1 scores for node classification (bands representing plus/minus one standard deviation), against the regularization penalty used for node2vec with (purple) and without (blue) concatenated node features. The results from GCN, GraphSAGE and DGI are given as horizontal bars in yellow, green and red respectively.}
\label{fig:link_class_graphs}
\end{figure*}

\subsection{Node classification experiments}

To evaluate performance for the node classification task, we 
learn a network embedding without access to the node labels, and
then learn/evaluate a one-versus-rest multinomial node classifier
using 5\%/95\% stratified training/test splits of the node labels.
We repeat this over 25 training runs of the embeddings, and a
further 25 training/test splits for the node classifiers per embedding. Table~\ref{tab:link_results_pr} and Figure~\ref{fig:link_class_graphs} show the average macro F1 scores and their standard deviation for each method and dataset. Similar to the link prediction experiments, we see that the optimally regularized node2vec methods are competitive, if not outperforming, the GCN and GraphSAGE trained through the node2vec loss, and is outperformed by the GCN learned using DGI by no more than two percentage points. In these experiments, the standard deviations correspond partially to the randomness induced by the training/test splits of the node labels, and suggest that the regularized version of node2vec is no less robust to particular choices of training/test splits than the other methods.

Interestingly, we note that the optimal performance on PubMed is given by the regularized node2vec with node features. However, as the highest level of performance is performed when the regularization weight is so large that the learnt embeddings are uninformative (as illustrated by the massive decrease
in performance of the regularized method without node features), this suggests that the nodes can be classified using only the covariate information, and that the network features are not needed. As the dataset only has three distinct
classes corresponding to the academic field, and the node features are TF-IDF embeddings of the academic papers, this is not too surprising.

\section{Conclusion}
\label{sec:conc}

In this paper we have theoretically described the effects of performing $\ell_2$ regularization on network embedding vectors learned by schemes
such as node2vec, describing the asymptotic distribution of the
embedding vectors learned through such schemes, and showed that
the regularization helps to reduce the effective
dimensionality of the learned embeddings by penalizing the singular values
of the limiting distribution of the embeddings towards zero. We 
do so in the non-convex regime $d \ll n$, reflecting how
embedding dimensions are chosen in the real world. 
We moreover highlight empirically that using $\ell_2$ regularization
with node2vec
leads to competitive performance on downstream tasks, when compared to 
embeddings produced from more recent encoding and training architectures. 

We end with a brief discussion of some limitations of our works,
and directions for future work. Generally, graphons are not realistic
models for graphs; we suggest that our work could be
extended to frameworks such as graphexes \parencite{veitch_class_2015,borgs_sparse_2018-1,borgs_sampling_2019} which can produce
more realistic degree distributions for networks, but have
enough underlying latent exchangeability for our arguments
to go through. We ignore aspects of 
optimization, i.e whether the minima of \eqref{eq:framework:emp_risk} 
are obtained in practice, which we believe would be an interesting
area of future research. As for our experimental results, we note that
methods such as GraphSAGE are better than node2vec in that they provide
embeddings for data unobserved during training,
and also scale better to larger networks. Consequently, 
we believe our experiments should be used primarily as motivation to investigate better methods for incorporating
nodal covariates into network embedding models, and how to regularize
embeddings produced by encoder methods such as GCNs or GraphSAGE.

\begin{ack}
    We thank Morgane Austern for helpful discussions, 
    along with several anonymous reviewers for their comments on the current version of the paper, along with a prior version. We also acknowledge computing resources from Columbia University's Shared Research Computing Facility project, which is supported by NIH Research Facility Improvement Grant 1G20RR030893-01, and associated funds from the New York State Empire State Development, Division of Science Technology and Innovation (NYSTAR) Contract C090171, both awarded April 15, 2010. 
\end{ack}

\printbibliography

@article{goyal_graph_2018,
	title = {Graph embedding techniques, applications, and performance: {A} survey},
	volume = {151},
	issn = {0950-7051},
	shorttitle = {Graph embedding techniques, applications, and performance},
	url = {http://www.sciencedirect.com/science/article/pii/S0950705118301540},
	doi = {10.1016/j.knosys.2018.03.022},
	abstract = {Graphs, such as social networks, word co-occurrence networks, and communication networks, occur naturally in various real-world applications. Analyzing them yields insight into the structure of society, language, and different patterns of communication. Many approaches have been proposed to perform the analysis. Recently, methods which use the representation of graph nodes in vector space have gained traction from the research community. In this survey, we provide a comprehensive and structured analysis of various graph embedding techniques proposed in the literature. We first introduce the embedding task and its challenges such as scalability, choice of dimensionality, and features to be preserved, and their possible solutions. We then present three categories of approaches based on factorization methods, random walks, and deep learning, with examples of representative algorithms in each category and analysis of their performance on various tasks. We evaluate these state-of-the-art methods on a few common datasets and compare their performance against one another. Our analysis concludes by suggesting some potential applications and future directions. We finally present the open-source Python library we developed, named GEM (Graph Embedding Methods, available at https://github.com/palash1992/GEM), which provides all presented algorithms within a unified interface to foster and facilitate research on the topic.},
	urldate = {2018-11-10},
	journal = {Knowledge-Based Systems},
	author = {Goyal, Palash and Ferrara, Emilio},
	month = jul,
	year = {2018},
	keywords = {Graph embedding applications, Graph embedding techniques, Python graph embedding methods GEM library},
	pages = {78--94},
}

@article{hamilton_representation_2017,
	title = {Representation {Learning} on {Graphs}: {Methods} and {Applications}.},
	volume = {40},
	url = {http://sites.computer.org/debull/A17sept/p52.pdf},
	number = {3},
	journal = {IEEE Data Eng. Bull.},
	author = {Hamilton, William L. and Ying, Rex and Leskovec, Jure},
	year = {2017},
	note = {Number: 3},
	pages = {52--74},
}

@inproceedings{grover_node2vec_2016,
	title = {node2vec: {Scalable} {Feature} {Learning} for {Networks}},
	isbn = {978-1-4503-4232-2},
	shorttitle = {node2vec},
	url = {http://dl.acm.org/citation.cfm?id=2939672.2939754},
	doi = {10.1145/2939672.2939754},
	urldate = {2018-11-10},
	publisher = {ACM},
	author = {Grover, Aditya and Leskovec, Jure},
	month = aug,
	year = {2016},
	pages = {855--864},
}

@book{lovasz_large_2012,
	series = {Colloquium {Publications}},
	title = {Large {Networks} and {Graph} {Limits}.},
	volume = {60},
	isbn = {978-0-8218-9085-1},
	publisher = {American Mathematical Society},
	author = {Lovász, László},
	year = {2012},
	keywords = {dblp},
}

@article{wolfe_nonparametric_2013,
	title = {Nonparametric graphon estimation},
	url = {http://arxiv.org/abs/1309.5936},
	abstract = {We propose a nonparametric framework for the analysis of networks, based on a natural limit object termed a graphon. We prove consistency of graphon estimation under general conditions, giving rates which include the important practical setting of sparse networks. Our results cover dense and sparse stochastic blockmodels with a growing number of classes, under model misspecification. We use profile likelihood methods, and connect our results to approximation theory, nonparametric function estimation, and the theory of graph limits.},
	urldate = {2019-04-12},
	journal = {arXiv:1309.5936 [math, stat]},
	author = {Wolfe, Patrick J. and Olhede, Sofia C.},
	month = sep,
	year = {2013},
	note = {arXiv: 1309.5936},
	keywords = {Mathematics - Probability, Mathematics - Statistics Theory, 62G05 (Primary) 05C80, 62G20 (Secondary), Mathematics - Combinatorics},
	annote = {Comment: 52 pages; submitted for publication},
}

@article{chatterjee_concentration_2005,
	title = {Concentration inequalities with exchangeable pairs ({Ph}.{D}. thesis)},
	url = {http://arxiv.org/abs/math/0507526},
	abstract = {The purpose of this dissertation is to introduce a version of Stein's method of exchangeable pairs to solve problems in measure concentration. We specifically target systems of dependent random variables, since that is where the power of Stein's method is fully realized. Because the theory is quite abstract, we have tried to put in as many examples as possible. Some of the highlighted applications are as follows: (a) We shall find an easily verifiable condition under which a popular heuristic technique originating from physics, known as the "mean field equations" method, is valid. No such condition is currently known. (b) We shall present a way of using couplings to derive concentration inequalities. Although couplings are routinely used for proving decay of correlations, no method for using couplings to derive concentration bounds is available in the literature. This will be used to obtain (c) concentration inequalities with explicit constants under Dobrushin's condition of weak dependence. (d) We shall give a method for obtaining concentration of Haar measures using convergence rates of related random walks on groups. Using this technique and one of the numerous available results about rates of convergence of random walks, we will then prove (e) a quantitative version of Voiculescu's celebrated connection between random matrix theory and free probability.},
	urldate = {2019-04-22},
	journal = {arXiv:math/0507526},
	author = {Chatterjee, Sourav},
	month = jul,
	year = {2005},
	note = {arXiv: math/0507526},
	keywords = {Mathematics - Probability, 60E15, 60K35, 82C22, 60B15, Mathematical Physics},
	annote = {Comment: Author's Ph.D. thesis (Stanford University, June 2005). Advisor: Persi Diaconis. There was an error in the year of publication given in the previously uploaded version of the thesis on arXiv, which caused some confusion in articles citing the thesis. This error has been corrected in this update},
}

@article{rubin-delanchy_statistical_2017,
	title = {A statistical interpretation of spectral embedding: the generalised random dot product graph},
	shorttitle = {A statistical interpretation of spectral embedding},
	url = {http://arxiv.org/abs/1709.05506},
	abstract = {A generalisation of a latent position network model known as the random dot product graph model is considered. The resulting model may be of independent interest because it has the unique property of representing a mixture of connectivity behaviours as the corresponding convex combination in latent space. We show that, whether the normalised Laplacian or adjacency matrix is used, the vector representations of nodes obtained by spectral embedding provide strongly consistent latent position estimates with asymptotically Gaussian error. Direct methodological consequences follow from the observation that the well-known mixed membership and standard stochastic block models are special cases where the latent positions live respectively inside or on the vertices of a simplex. Estimation via spectral embedding can therefore be achieved by respectively estimating this simplicial support, or fitting a Gaussian mixture model. In the latter case, the use of \$K\$-means, as has been previously recommended, is suboptimal and for identifiability reasons unsound. Empirical improvements in link prediction, as well as the potential to uncover much richer latent structure (than available under the mixed membership or standard stochastic block models) are demonstrated in a cyber-security example.},
	urldate = {2019-07-11},
	journal = {arXiv:1709.05506 [cs, stat]},
	author = {Rubin-Delanchy, Patrick and Priebe, Carey E. and Tang, Minh and Cape, Joshua},
	month = sep,
	year = {2017},
	note = {arXiv: 1709.05506},
	keywords = {Statistics - Machine Learning, Computer Science - Machine Learning},
	annote = {Comment: 25 pages; 7 figures},
}

@book{peypouquet_convex_2015,
	series = {{SpringerBriefs} in {Optimization}},
	title = {Convex {Optimization} in {Normed} {Spaces}: {Theory}, {Methods} and {Examples}},
	isbn = {978-3-319-13709-4},
	shorttitle = {Convex {Optimization} in {Normed} {Spaces}},
	url = {https://www.springer.com/gp/book/9783319137094},
	abstract = {This work is intended to serve as a guide for graduate students and researchers who wish to get acquainted with the main theoretical and practical tools for the numerical minimization of convex functions on Hilbert spaces. Therefore, it contains the main tools that are necessary to conduct independent research on the topic. It is also a concise, easy-to-follow and self-contained textbook, which may be useful for any researcher working on related fields, as well as teachers giving graduate-level courses on the topic. It will contain a thorough revision of the extant literature including both classical and state-of-the-art references.},
	language = {en},
	urldate = {2019-07-27},
	publisher = {Springer International Publishing},
	author = {Peypouquet, Juan},
	year = {2015},
}

@book{barbu_convexity_2012,
	edition = {4},
	series = {Springer {Monographs} in {Mathematics}},
	title = {Convexity and {Optimization} in {Banach} {Spaces}},
	isbn = {978-94-007-2246-0},
	url = {https://www.springer.com/gp/book/9789400722460},
	abstract = {An updated and revised edition of the 1986 title Convexity and Optimization in Banach Spaces, this book provides a self-contained presentation of basic results of the theory of convex sets and functions in infinite-dimensional spaces. The main emphasis is on applications to convex optimization and convex optimal control problems in Banach spaces. A distinctive feature is a strong emphasis on the connection between theory and application. This edition has been updated to include new results pertaining to advanced concepts of subdifferential for convex functions and new duality results in convex programming. The last chapter, concerned with convex control problems, has been rewritten and completed with new research concerning boundary control systems, the dynamic programming equations in optimal control theory and periodic optimal control problems. Finally, the structure of the book has been modified to highlight the most recent progression in the field including fundamental results on the theory of infinite-dimensional convex analysis and includes helpful bibliographical notes at the end of each chapter.},
	language = {en},
	urldate = {2019-07-28},
	publisher = {Springer Netherlands},
	author = {Barbu, Viorel and Precupanu, Teodor},
	year = {2012},
}

@article{qiu_network_2018,
	title = {Network {Embedding} as {Matrix} {Factorization}: {Unifying} {DeepWalk}, {LINE}, {PTE}, and node2vec},
	shorttitle = {Network {Embedding} as {Matrix} {Factorization}},
	url = {http://arxiv.org/abs/1710.02971},
	doi = {10.1145/3159652.3159706},
	abstract = {Since the invention of word2vec, the skip-gram model has significantly advanced the research of network embedding, such as the recent emergence of the DeepWalk, LINE, PTE, and node2vec approaches. In this work, we show that all of the aforementioned models with negative sampling can be unified into the matrix factorization framework with closed forms. Our analysis and proofs reveal that: (1) DeepWalk empirically produces a low-rank transformation of a network's normalized Laplacian matrix; (2) LINE, in theory, is a special case of DeepWalk when the size of vertices' context is set to one; (3) As an extension of LINE, PTE can be viewed as the joint factorization of multiple networks' Laplacians; (4) node2vec is factorizing a matrix related to the stationary distribution and transition probability tensor of a 2nd-order random walk. We further provide the theoretical connections between skip-gram based network embedding algorithms and the theory of graph Laplacian. Finally, we present the NetMF method as well as its approximation algorithm for computing network embedding. Our method offers significant improvements over DeepWalk and LINE for conventional network mining tasks. This work lays the theoretical foundation for skip-gram based network embedding methods, leading to a better understanding of latent network representation learning.},
	urldate = {2019-08-28},
	journal = {Proceedings of the Eleventh ACM International Conference on Web Search and Data Mining  - WSDM '18},
	author = {Qiu, Jiezhong and Dong, Yuxiao and Ma, Hao and Li, Jian and Wang, Kuansan and Tang, Jie},
	year = {2018},
	note = {arXiv: 1710.02971},
	keywords = {Statistics - Machine Learning, Computer Science - Machine Learning, Computer Science - Social and Information Networks},
	pages = {459--467},
	annote = {Comment: 9 pages, published in WSDM 2018 proceedings},
}

@article{veitch_class_2015,
	title = {The {Class} of {Random} {Graphs} {Arising} from {Exchangeable} {Random} {Measures}},
	url = {http://arxiv.org/abs/1512.03099},
	abstract = {We introduce a class of random graphs that we argue meets many of the desiderata one would demand of a model to serve as the foundation for a statistical analysis of real-world networks. The class of random graphs is defined by a probabilistic symmetry: invariance of the distribution of each graph to an arbitrary relabelings of its vertices. In particular, following Caron and Fox, we interpret a symmetric simple point process on \${\textbackslash}mathbb\{R\}\_+{\textasciicircum}2\$ as the edge set of a random graph, and formalize the probabilistic symmetry as joint exchangeability of the point process. We give a representation theorem for the class of random graphs satisfying this symmetry via a straightforward specialization of Kallenberg's representation theorem for jointly exchangeable random measures on \${\textbackslash}mathbb\{R\}\_+{\textasciicircum}2\$. The distribution of every such random graph is characterized by three (potentially random) components: a nonnegative real \$I {\textbackslash}in {\textbackslash}mathbb\{R\}\_+\$, an integrable function \$S: {\textbackslash}mathbb\{R\}\_+ {\textbackslash}to {\textbackslash}mathbb\{R\}\_+\$, and a symmetric measurable function \$W: {\textbackslash}mathbb\{R\}\_+{\textasciicircum}2 {\textbackslash}to [0,1]\$ that satisfies several weak integrability conditions. We call the triple \$(I,S,W)\$ a graphex, in analogy to graphons, which characterize the (dense) exchangeable graphs on \${\textbackslash}mathbb\{N\}\$. Indeed, the model we introduce here contains the exchangeable graphs as a special case, as well as the "sparse exchangeable" model of Caron and Fox. We study the structure of these random graphs, and show that they can give rise to interesting structure, including sparse graph sequences. We give explicit equations for expectations of certain graph statistics, as well as the limiting degree distribution. We also show that certain families of graphexes give rise to random graphs that, asymptotically, contain an arbitrarily large fraction of the vertices in a single connected component.},
	urldate = {2019-12-16},
	journal = {arXiv:1512.03099 [cs, math, stat]},
	author = {Veitch, Victor and Roy, Daniel M.},
	month = dec,
	year = {2015},
	note = {arXiv: 1512.03099},
	keywords = {Mathematics - Statistics Theory, Computer Science - Social and Information Networks, Mathematics - Combinatorics},
	annote = {Comment: 52 pages, 5 figures},
}

@article{holland_stochastic_1983,
	title = {Stochastic blockmodels: {First} steps},
	volume = {5},
	issn = {0378-8733},
	shorttitle = {Stochastic blockmodels},
	url = {http://www.sciencedirect.com/science/article/pii/0378873383900217},
	doi = {10.1016/0378-8733(83)90021-7},
	abstract = {A stochastic model is proposed for social networks in which the actors in a network are partitioned into subgroups called blocks. The model provides a stochastic generalization of the blockmodel. Estimation techniques are developed for the special case of a single relation social network, with blocks specified a priori. An extension of the model allows for tendencies toward reciprocation of ties beyond those explained by the partition. The extended model provides a one degree-of-freedom test of the model. A numerical example from the social network literature is used to illustrate the methods.},
	language = {en},
	number = {2},
	urldate = {2020-04-09},
	journal = {Social Networks},
	author = {Holland, Paul W. and Laskey, Kathryn Blackmond and Leinhardt, Samuel},
	month = jun,
	year = {1983},
	note = {Number: 2},
	pages = {109--137},
}

@book{aliprantis_infinite_2006,
	address = {Berlin Heidelberg},
	edition = {3},
	title = {Infinite {Dimensional} {Analysis}: {A} {Hitchhiker}'s {Guide}},
	isbn = {978-3-540-29586-0},
	shorttitle = {Infinite {Dimensional} {Analysis}},
	url = {https://www.springer.com/gp/book/9783540295860},
	abstract = {This new edition of The Hitchhiker’s Guide has bene?tted from the comments of many individuals, which have resulted in the addition of some new material, and the reorganization of some of the rest. The most obvious change is the creation of a separate Chapter 7 on convex analysis. Parts of this chapter appeared in elsewhere in the second edition, but much of it is new to the third edition. In particular, there is an expanded discussion of support points of convex sets, and a new section on subgradients of convex functions. There is much more material on the special properties of convex sets and functions in ?nite dimensional spaces. There are improvements and additions in almost every chapter. There is more new material than might seem at ?rst glance, thanks to a change in font that - duced the page count about ?ve percent. We owe a huge debt to Valentina Galvani, Daniela Puzzello, and Francesco Rusticci, who were participants in a graduate seminar at Purdue University and whose suggestions led to many improvements, especially in chapters ?ve through eight. We particularly thank Daniela Puzzello for catching uncountably many errors throughout the second edition, and simplifying the statements of several theorems and proofs. In another graduate seminar at Caltech, many improvements and corrections were suggested by Joel Grus, PJ Healy, Kevin Roust, Maggie Penn, and Bryan Rogers.},
	language = {en},
	urldate = {2020-05-12},
	publisher = {Springer-Verlag},
	author = {Aliprantis, Charalambos D. and Border, Kim},
	year = {2006},
	doi = {10.1007/3-540-29587-9},
}

@article{perozzi_deepwalk_2014,
	title = {{DeepWalk}: {Online} {Learning} of {Social} {Representations}},
	shorttitle = {{DeepWalk}},
	url = {http://arxiv.org/abs/1403.6652},
	doi = {10.1145/2623330.2623732},
	abstract = {We present DeepWalk, a novel approach for learning latent representations of vertices in a network. These latent representations encode social relations in a continuous vector space, which is easily exploited by statistical models. DeepWalk generalizes recent advancements in language modeling and unsupervised feature learning (or deep learning) from sequences of words to graphs. DeepWalk uses local information obtained from truncated random walks to learn latent representations by treating walks as the equivalent of sentences. We demonstrate DeepWalk's latent representations on several multi-label network classification tasks for social networks such as BlogCatalog, Flickr, and YouTube. Our results show that DeepWalk outperforms challenging baselines which are allowed a global view of the network, especially in the presence of missing information. DeepWalk's representations can provide \$F\_1\$ scores up to 10\% higher than competing methods when labeled data is sparse. In some experiments, DeepWalk's representations are able to outperform all baseline methods while using 60\% less training data. DeepWalk is also scalable. It is an online learning algorithm which builds useful incremental results, and is trivially parallelizable. These qualities make it suitable for a broad class of real world applications such as network classification, and anomaly detection.},
	urldate = {2020-05-25},
	journal = {Proceedings of the 20th ACM SIGKDD international conference on Knowledge discovery and data mining - KDD '14},
	author = {Perozzi, Bryan and Al-Rfou, Rami and Skiena, Steven},
	year = {2014},
	note = {arXiv: 1403.6652},
	keywords = {I.2.6, Computer Science - Machine Learning, Computer Science - Social and Information Networks, H.2.8, I.5.1},
	pages = {701--710},
	annote = {Comment: 10 pages, 5 figures, 4 tables},
}

@article{velickovic_deep_2018,
	title = {Deep {Graph} {Infomax}},
	url = {http://arxiv.org/abs/1809.10341},
	abstract = {We present Deep Graph Infomax (DGI), a general approach for learning node representations within graph-structured data in an unsupervised manner. DGI relies on maximizing mutual information between patch representations and corresponding high-level summaries of graphs---both derived using established graph convolutional network architectures. The learnt patch representations summarize subgraphs centered around nodes of interest, and can thus be reused for downstream node-wise learning tasks. In contrast to most prior approaches to unsupervised learning with GCNs, DGI does not rely on random walk objectives, and is readily applicable to both transductive and inductive learning setups. We demonstrate competitive performance on a variety of node classification benchmarks, which at times even exceeds the performance of supervised learning.},
	urldate = {2020-05-25},
	journal = {arXiv:1809.10341 [cs, math, stat]},
	author = {Veličković, Petar and Fedus, William and Hamilton, William L. and Liò, Pietro and Bengio, Yoshua and Hjelm, R. Devon},
	month = dec,
	year = {2018},
	note = {arXiv: 1809.10341},
	keywords = {Computer Science - Information Theory, Statistics - Machine Learning, Computer Science - Machine Learning, Computer Science - Social and Information Networks},
	annote = {Comment: To appear at ICLR 2019. 17 pages, 8 figures},
}

@article{marchal_sub-gaussianity_2017,
	title = {On the sub-{Gaussianity} of the {Beta} and {Dirichlet} distributions},
	volume = {22},
	issn = {1083-589X},
	url = {https://projecteuclid.org/euclid.ecp/1507860211},
	doi = {10.1214/17-ECP92},
	abstract = {We obtain the optimal proxy variance for the sub-Gaussianity of Beta distribution, thus proving upper bounds recently conjectured by Elder (2016). We provide different proof techniques for the symmetrical (around its mean) case and the non-symmetrical case. The technique in the latter case relies on studying the ordinary differential equation satisfied by the Beta moment-generating function known as the confluent hypergeometric function. As a consequence, we derive the optimal proxy variance for the Dirichlet distribution, which is apparently a novel result. We also provide a new proof of the optimal proxy variance for the Bernoulli distribution, and discuss in this context the proxy variance relation to log-Sobolev inequalities and transport inequalities.},
	language = {EN},
	urldate = {2020-08-19},
	journal = {Electronic Communications in Probability},
	author = {Marchal, Olivier and Arbel, Julyan},
	year = {2017},
	mrnumber = {MR3718704},
	zmnumber = {06797807},
	note = {Publisher: The Institute of Mathematical Statistics and the Bernoulli Society},
	keywords = {Beta distribution, concentration inequality, Dirichlet distribution, log-Sobolev inequality, sub-Gaussian, transport inequality},
}

@article{reade_eigen-values_1983,
	title = {Eigen-values of {Lipschitz} kernels},
	volume = {93},
	issn = {1469-8064, 0305-0041},
	url = {http://www.cambridge.org/core/journals/mathematical-proceedings-of-the-cambridge-philosophical-society/article/eigenvalues-of-lipschitz-kernels/56110F30494C86F8D7A18D2DB9630677},
	doi = {10.1017/S0305004100060412},
	abstract = {Asymptotic estimates are obtained for the eigen-values of symmetric kernels which satisfy a Lipschitz condition of order α, where 0 {\textless} α ≤ 1, on a bounded region. For 0 {\textless} α {\textless} 1, the eigen-values are shown to be O(1/nα+½) for general kernels, and O(1/nα+1) for positive definite kernels. For α = 1, the eigen-values are shown to be O(1/n+3/2) in general, but only O(1/n2) when the kernel is positive definite. Examples are given to show these estimates are best possible for powers of n.The estimates for general kernels are known. In 1931, Hille and Tamarkin used infinite determinants to obtain weaker estimates in the case 0 {\textless} α {\textless} 1, but the same estimates as ours for α = 1. In 1937, Smithies sharpened the estimates for 0 {\textless} α {\textless} 1 to the ones we give by a variant of the method used by Weyl in 1912 for smooth kernels. Cochran obtained the same estimates in 1972 using Fredholm determinants. Our proof uses Weyl's method in its original form for 0 {\textless} α {\textless} 1, and in Smithies' variant for α = 1, so in all cases simplifies previous proofs.The estimates for positive definite kernels are, as far as we know, new. The proofs use the method of our recent paper on smooth positive definite kernels.},
	language = {en},
	number = {1},
	urldate = {2020-08-21},
	journal = {Mathematical Proceedings of the Cambridge Philosophical Society},
	author = {Reade, J. B.},
	month = jan,
	year = {1983},
	note = {Number: 1
Publisher: Cambridge University Press},
	pages = {135--140},
}

@article{reade_eigenvalues_1983,
	title = {Eigenvalues of {Positive} {Definite} {Kernels}},
	volume = {14},
	issn = {0036-1410},
	url = {http://epubs.siam.org/doi/abs/10.1137/0514012},
	doi = {10.1137/0514012},
	abstract = {The main result is that the eigenvalues of any continuously differentiable positive definite kernel are \$o(\{1 / \{n{\textasciicircum}2 \}\})\$. The method of proof is to approximate the kernel by kernels of finite rank in such a way that the difference is positive definite. The tail of the eigenvalue series can then be related to the trace integral of the difference by means of Mercer’s theorem and a trace norm version of the Weyl–Courant minimax principle. It is conjectured that, for p times continuously differentiable positive definite kernels, the eigenvalues are \$o(\{1 / \{n{\textasciicircum}\{p + 1\} \}\})\$.},
	number = {1},
	urldate = {2020-08-24},
	journal = {SIAM Journal on Mathematical Analysis},
	author = {Reade, J. B.},
	month = jan,
	year = {1983},
	note = {Number: 1
Publisher: Society for Industrial and Applied Mathematics},
	pages = {152--157},
}

@book{sunder_operators_2016,
	series = {Texts and {Readings} in {Mathematics}},
	title = {Operators on {Hilbert} {Space}},
	isbn = {978-981-10-1816-9},
	url = {https://www.springer.com/gp/book/9789811018169},
	abstract = {The primarily objective of the book is to serve as a primer on the theory of bounded linear operators on separable Hilbert space. The book presents the spectral theorem as a statement on the existence of a unique continuous and measurable functional calculus. It discusses a proof without digressing into a course on the Gelfand theory of commutative Banach algebras. The book also introduces the reader to the basic facts concerning the various von Neumann–Schatten ideals, the compact operators, the trace-class operators and all bounded operators.},
	language = {en},
	urldate = {2020-08-27},
	publisher = {Springer Singapore},
	author = {Sunder, V. S.},
	year = {2016},
	doi = {10.1007/978-981-10-1816-9},
}

@book{heil_metrics_2018,
	series = {Applied and {Numerical} {Harmonic} {Analysis}},
	title = {Metrics, {Norms}, {Inner} {Products}, and {Operator} {Theory}},
	isbn = {978-3-319-65321-1},
	url = {https://www.springer.com/gp/book/9783319653211},
	abstract = {This text is a self-contained introduction to the three main families that we encounter in analysis – metric spaces, normed spaces, and inner product spaces – and to the operators that transform objects in one into objects in another. With an emphasis on the fundamental properties defining the spaces, this book guides readers to a deeper understanding of analysis and an appreciation of the field as the “science of functions.”Many important topics that are rarely presented in an accessible way to undergraduate students are included, such as unconditional convergence of series, Schauder bases for Banach spaces, the dual of ℓp topological isomorphisms, the Spectral Theorem, the Baire Category Theorem, and the Uniform Boundedness Principle. The text is constructed in such a way that instructors have the option whether to include more advanced topics.Written in an appealing and accessible style, Metrics, Norms, Inner Products, and Operator Theory is suitable for independent study or as the basis for an undergraduate-level course. Instructors have several options for building a course around the text depending on the level and interests of their students.Key features:Aimed at students who have a basic knowledge of undergraduate real analysis. All of the required background material is reviewed in the first chapter.Suitable for undergraduate-level courses; no familiarity with measure theory is required.Extensive exercises complement the text and provide opportunities for learning by doing.A separate solutions manual is available for instructors via the Birkhäuser website (www.springer.com/978-3-319-65321-1). Unique text providing an undergraduate-level introduction to metrics, norms, inner products, and their associated operator theory.},
	language = {en},
	urldate = {2020-08-27},
	publisher = {Birkhäuser Basel},
	author = {Heil, Christopher},
	year = {2018},
	doi = {10.1007/978-3-319-65322-8},
}

@book{brezis_functional_2011,
	address = {New York London},
	title = {Functional analysis, {Sobolev} spaces and partial differential equations},
	isbn = {978-0-387-70914-7},
	publisher = {Springer},
	author = {Brézis, H},
	year = {2011},
}

@article{mikolov_distributed_2013,
	title = {Distributed {Representations} of {Words} and {Phrases} and their {Compositionality}},
	volume = {26},
	url = {https://papers.nips.cc/paper/2013/hash/9aa42b31882ec039965f3c4923ce901b-Abstract.html},
	language = {en},
	urldate = {2020-12-02},
	journal = {Advances in Neural Information Processing Systems},
	author = {Mikolov, Tomas and Sutskever, Ilya and Chen, Kai and Corrado, Greg S. and Dean, Jeff},
	year = {2013},
	pages = {3111--3119},
	file = {Full Text PDF:C\:\\Users\\adavi\\Zotero\\storage\\45EM3FBZ\\Mikolov et al. - 2013 - Distributed Representations of Words and Phrases a.pdf:application/pdf;Snapshot:C\:\\Users\\adavi\\Zotero\\storage\\LZSG28S3\\9aa42b31882ec039965f3c4923ce901b-Abstract.html:text/html},
}

@article{belkin_laplacian_2003,
	title = {Laplacian {Eigenmaps} for {Dimensionality} {Reduction} and {Data} {Representation}},
	volume = {15},
	issn = {0899-7667},
	url = {https://doi.org/10.1162/089976603321780317},
	doi = {10.1162/089976603321780317},
	abstract = {One of the central problems in machine learning and pattern recognition is to develop appropriate representations for complex data. We consider the problem of constructing a representation for data lying on a low-dimensional manifold embedded in a high-dimensional space. Drawing on the correspondence between the graph Laplacian, the Laplace Beltrami operator on the manifold, and the connections to the heat equation, we propose a geometrically motivated algorithm for representing the high-dimensional data. The algorithm provides a computationally efficient approach to nonlinear dimensionality reduction that has locality-preserving properties and a natural connection to clustering. Some potential applications and illustrative examples are discussed.},
	number = {6},
	urldate = {2021-05-03},
	journal = {Neural Computation},
	author = {Belkin, Mikhail and Niyogi, Partha},
	month = jun,
	year = {2003},
	pages = {1373--1396},
	file = {Full Text PDF:C\:\\Users\\adavi\\Zotero\\storage\\NWWIQSLD\\Belkin and Niyogi - 2003 - Laplacian Eigenmaps for Dimensionality Reduction a.pdf:application/pdf;Snapshot:C\:\\Users\\adavi\\Zotero\\storage\\3LBYPNA2\\Laplacian-Eigenmaps-for-Dimensionality-Reduction.html:text/html},
}

@article{zhang_consistency_2021,
	title = {Consistency of random-walk based network embedding algorithms},
	url = {http://arxiv.org/abs/2101.07354},
	abstract = {Random-walk based network embedding algorithms like node2vec and DeepWalk are widely used to obtain Euclidean representation of the nodes in a network prior to performing down-stream network inference tasks. Nevertheless, despite their impressive empirical performance, there is a lack of theoretical results explaining their behavior. In this paper we studied the node2vec and DeepWalk algorithms through the perspective of matrix factorization. We analyze these algorithms in the setting of community detection for stochastic blockmodel graphs; in particular we established large-sample error bounds and prove consistent community recovery of node2vec/DeepWalk embedding followed by k-means clustering. Our theoretical results indicate a subtle interplay between the sparsity of the observed networks, the window sizes of the random walks, and the convergence rates of the node2vec/DeepWalk embedding toward the embedding of the true but unknown edge probabilities matrix. More specifically, as the network becomes sparser, our results suggest using larger window sizes, or equivalently, taking longer random walks, in order to attain better convergence rate for the resulting embeddings. The paper includes numerical experiments corroborating these observations.},
	urldate = {2021-05-06},
	journal = {arXiv:2101.07354 [cs, stat]},
	author = {Zhang, Yichi and Tang, Minh},
	month = jan,
	year = {2021},
	note = {arXiv: 2101.07354},
	keywords = {Statistics - Machine Learning, Computer Science - Machine Learning, Computer Science - Social and Information Networks},
	file = {arXiv Fulltext PDF:C\:\\Users\\adavi\\Zotero\\storage\\CUKPZK57\\Zhang and Tang - 2021 - Consistency of random-walk based network embedding.pdf:application/pdf;arXiv.org Snapshot:C\:\\Users\\adavi\\Zotero\\storage\\3XGYXHJD\\2101.html:text/html},
}

@article{lei_consistency_2015,
	title = {Consistency of spectral clustering in stochastic block models},
	volume = {43},
	issn = {0090-5364},
	url = {http://arxiv.org/abs/1312.2050},
	doi = {10.1214/14-AOS1274},
	abstract = {We analyze the performance of spectral clustering for community extraction in stochastic block models. We show that, under mild conditions, spectral clustering applied to the adjacency matrix of the network can consistently recover hidden communities even when the order of the maximum expected degree is as small as \${\textbackslash}log n\$, with \$n\$ the number of nodes. This result applies to some popular polynomial time spectral clustering algorithms and is further extended to degree corrected stochastic block models using a spherical \$k\$-median spectral clustering method. A key component of our analysis is a combinatorial bound on the spectrum of binary random matrices, which is sharper than the conventional matrix Bernstein inequality and may be of independent interest.},
	number = {1},
	urldate = {2021-05-07},
	journal = {The Annals of Statistics},
	author = {Lei, Jing and Rinaldo, Alessandro},
	month = feb,
	year = {2015},
	note = {arXiv: 1312.2050},
	keywords = {Statistics - Machine Learning, Mathematics - Statistics Theory},
	annote = {Comment: Published in at http://dx.doi.org/10.1214/14-AOS1274 the Annals of Statistics (http://www.imstat.org/aos/) by the Institute of Mathematical Statistics (http://www.imstat.org)},
	file = {arXiv Fulltext PDF:C\:\\Users\\adavi\\Zotero\\storage\\8KIU8C8D\\Lei and Rinaldo - 2015 - Consistency of spectral clustering in stochastic b.pdf:application/pdf;arXiv.org Snapshot:C\:\\Users\\adavi\\Zotero\\storage\\GMFIU3YJ\\1312.html:text/html},
}

@article{lei_network_2021,
	title = {Network representation using graph root distributions},
	volume = {49},
	issn = {0090-5364, 2168-8966},
	url = {https://projecteuclid.org/journals/annals-of-statistics/volume-49/issue-2/Network-representation-using-graph-root-distributions/10.1214/20-AOS1976.full},
	doi = {10.1214/20-AOS1976},
	abstract = {Exchangeable random graphs serve as an important probabilistic framework for the statistical analysis of network data. In this work, we develop an alternative parameterization for a large class of exchangeable random graphs, where the nodes are independent random vectors in a linear space equipped with an indefinite inner product, and the edge probability between two nodes equals the inner product of the corresponding node vectors. Therefore, the distribution of exchangeable random graphs in this subclass can be represented by a node sampling distribution on this linear space, which we call the graph root distribution. We study existence and identifiability of such representations, the topological relationship between the graph root distribution and the exchangeable random graph sampling distribution and estimation of graph root distributions.},
	number = {2},
	urldate = {2021-05-11},
	journal = {The Annals of Statistics},
	author = {Lei, Jing},
	month = apr,
	year = {2021},
	note = {Publisher: Institute of Mathematical Statistics},
	keywords = {Mathematics - Statistics Theory, Statistics - Methodology, 62E10, 62G05, 62M15, exchangeable random graph, Kreǐn space, network data, spectral embedding},
	pages = {745--768},
	annote = {Comment: 35 pages, 3 figures},
	file = {Full Text PDF:C\:\\Users\\adavi\\Zotero\\storage\\RR7IBHZ3\\Lei - 2021 - Network representation using graph root distributi.pdf:application/pdf;Snapshot:C\:\\Users\\adavi\\Zotero\\storage\\PMG26MJZ\\20-AOS1976.html:text/html},
}

@inproceedings{veitch_empirical_2019,
	title = {Empirical {Risk} {Minimization} and {Stochastic} {Gradient} {Descent} for {Relational} {Data}},
	url = {http://proceedings.mlr.press/v89/veitch19a.html},
	abstract = {Empirical risk minimization is the main tool for prediction problems, but its extension to relational data remains unsolved. We solve this problem using recent ideas from graph sampling theory to (...},
	language = {en},
	urldate = {2021-05-11},
	booktitle = {The 22nd {International} {Conference} on {Artificial} {Intelligence} and {Statistics}},
	publisher = {PMLR},
	author = {Veitch, Victor and Austern, Morgane and Zhou, Wenda and Blei, David M. and Orbanz, Peter},
	month = apr,
	year = {2019},
	note = {ISSN: 2640-3498},
	pages = {1733--1742},
	file = {Full Text PDF:C\:\\Users\\adavi\\Zotero\\storage\\G8FEITQM\\Veitch et al. - 2019 - Empirical Risk Minimization and Stochastic Gradien.pdf:application/pdf;Snapshot:C\:\\Users\\adavi\\Zotero\\storage\\R92G8P6Q\\veitch19a.html:text/html},
}

@inproceedings{hamilton_inductive_2017,
	title = {Inductive {Representation} {Learning} on {Large} {Graphs}},
	volume = {30},
	url = {https://proceedings.neurips.cc/paper/2017/file/5dd9db5e033da9c6fb5ba83c7a7ebea9-Paper.pdf},
	booktitle = {Advances in {Neural} {Information} {Processing} {Systems}},
	publisher = {Curran Associates, Inc.},
	author = {Hamilton, Will and Ying, Zhitao and Leskovec, Jure},
	editor = {Guyon, I. and Luxburg, U. V. and Bengio, S. and Wallach, H. and Fergus, R. and Vishwanathan, S. and Garnett, R.},
	year = {2017},
}

@article{borgs_sparse_2018-1,
	title = {Sparse {Exchangeable} {Graphs} and {Their} {Limits} via {Graphon} {Processes}},
	volume = {18},
	url = {http://jmlr.org/papers/v18/16-421.html},
	number = {210},
	journal = {Journal of Machine Learning Research},
	author = {Borgs, Christian and Chayes, Jennifer T. and Cohn, Henry and Holden, Nina},
	year = {2018},
	pages = {1--71},
}

@article{borgs_sampling_2019,
	title = {Sampling perspectives on sparse exchangeable graphs},
	volume = {47},
	issn = {0091-1798, 2168-894X},
	url = {https://projecteuclid.org/journals/annals-of-probability/volume-47/issue-5/Sampling-perspectives-on-sparse-exchangeable-graphs/10.1214/18-AOP1320.full},
	doi = {10.1214/18-AOP1320},
	abstract = {Recent work has introduced sparse exchangeable graphs and the associated graphex framework, as a generalization of dense exchangeable graphs and the associated graphon framework. The development of this subject involves the interplay between the statistical modeling of network data, the theory of large graph limits, exchangeability and network sampling. The purpose of the present paper is to clarify the relationships between these subjects by explaining each in terms of a certain natural sampling scheme associated with the graphex model. The first main technical contribution is the introduction of sampling convergence, a new notion of graph limit that generalizes left convergence so that it becomes meaningful for the sparse graph regime. The second main technical contribution is the demonstration that the (somewhat cryptic) notion of exchangeability underpinning the graphex framework is equivalent to a more natural probabilistic invariance expressed in terms of the sampling scheme.},
	number = {5},
	urldate = {2021-05-11},
	journal = {The Annals of Probability},
	author = {Borgs, Christian and Chayes, Jennifer T. and Cohn, Henry and Veitch, Victor},
	month = sep,
	year = {2019},
	note = {Publisher: Institute of Mathematical Statistics},
	keywords = {62G05, 60C05, 62D05, 62G09, Graph limits, network analysis, nonparametric estimation, sampling},
	pages = {2754--2800},
	file = {Full Text PDF:C\:\\Users\\adavi\\Zotero\\storage\\2QAUMXLF\\Borgs et al. - 2019 - Sampling perspectives on sparse exchangeable graph.pdf:application/pdf;Snapshot:C\:\\Users\\adavi\\Zotero\\storage\\JK7YLVMF\\18-AOP1320.html:text/html},
}

@article{tang_limit_2018,
	title = {Limit theorems for eigenvectors of the normalized {Laplacian} for random graphs},
	volume = {46},
	issn = {0090-5364, 2168-8966},
	url = {https://projecteuclid.org/journals/annals-of-statistics/volume-46/issue-5/Limit-theorems-for-eigenvectors-of-the-normalized-Laplacian-for-random/10.1214/17-AOS1623.full},
	doi = {10.1214/17-AOS1623},
	abstract = {We prove a central limit theorem for the components of the eigenvectors corresponding to the \$d\$ largest eigenvalues of the normalized Laplacian matrix of a finite dimensional random dot product graph. As a corollary, we show that for stochastic blockmodel graphs, the rows of the spectral embedding of the normalized Laplacian converge to multivariate normals and, furthermore, the mean and the covariance matrix of each row are functions of the associated vertex’s block membership. Together with prior results for the eigenvectors of the adjacency matrix, we then compare, via the Chernoff information between multivariate normal distributions, how the choice of embedding method impacts subsequent inference. We demonstrate that neither embedding method dominates with respect to the inference task of recovering the latent block assignments.},
	number = {5},
	urldate = {2021-05-11},
	journal = {The Annals of Statistics},
	author = {Tang, Minh and Priebe, Carey E.},
	month = oct,
	year = {2018},
	note = {Publisher: Institute of Mathematical Statistics},
	keywords = {62B10, 62H12, 62H30, Chernoff information, convergence of eigenvectors, random dot product graph, spectral clustering, stochastic blockmodels},
	pages = {2360--2415},
	file = {Full Text PDF:C\:\\Users\\adavi\\Zotero\\storage\\5IW85EGX\\Tang and Priebe - 2018 - Limit theorems for eigenvectors of the normalized .pdf:application/pdf;Snapshot:C\:\\Users\\adavi\\Zotero\\storage\\RT9DWP8Z\\17-AOS1623.html:text/html},
}

@article{koltchinskii_random_2000,
	title = {Random {Matrix} {Approximation} of {Spectra} of {Integral} {Operators}},
	volume = {6},
	issn = {1350-7265},
	url = {http://www.jstor.org/stable/3318636},
	doi = {10.2307/3318636},
	abstract = {Let \$H{\textbackslash}colon L\_\{2\}(S,{\textbackslash}scr\{I\},P){\textbackslash}mapsto L\_\{2\}(S,{\textbackslash}scr\{I\},P)\$ be a compact integral operator with a symmetric kernel h. Let Xi, i∈ N, be independent S-valued random variables with common probability law P. Consider the n × n matrix H̃n with entries n-1h(Xi,Xj), 1 ≤ i, j ≤ n (this is the matrix of an empirical version of the operator H with P replaced by the empirical measure Pn), and let Hn denote the modification of H̃n, obtained by deleting its diagonal. It is proved that the l2 distance between the ordered spectrum of Hn and the ordered spectrum of H tends to zero a.s. if and only if H is Hilbert-Schmidt. Rates of convergence and distributional limit theorems for the difference between the ordered spectra of the operators Hn (or H̃n) and H are also obtained under somewhat stronger conditions. These results apply in particular to the kernels of certain functions H = φ(L) of partial differential operators L (heat kernels, Green functions).},
	number = {1},
	urldate = {2021-07-21},
	journal = {Bernoulli},
	author = {Koltchinskii, Vladimir and Giné, Evarist},
	year = {2000},
	note = {Publisher: International Statistical Institute (ISI) and Bernoulli Society for Mathematical Statistics and Probability},
	pages = {113--167},
}

@article{davison_asymptotics_2021,
	title = {Asymptotics of {Network} {Embeddings} {Learned} via {Subsampling}},
	url = {http://arxiv.org/abs/2107.02363},
	abstract = {Network data are ubiquitous in modern machine learning, with tasks of interest including node classification, node clustering and link prediction. A frequent approach begins by learning an Euclidean embedding of the network, to which algorithms developed for vector-valued data are applied. For large networks, embeddings are learned using stochastic gradient methods where the sub-sampling scheme can be freely chosen. Despite the strong empirical performance of such methods, they are not well understood theoretically. Our work encapsulates representation methods using a subsampling approach, such as node2vec, into a single unifying framework. We prove, under the assumption that the graph is exchangeable, that the distribution of the learned embedding vectors asymptotically decouples. Moreover, we characterize the asymptotic distribution and provided rates of convergence, in terms of the latent parameters, which includes the choice of loss function and the embedding dimension. This provides a theoretical foundation to understand what the embedding vectors represent and how well these methods perform on downstream tasks. Notably, we observe that typically used loss functions may lead to shortcomings, such as a lack of Fisher consistency.},
	urldate = {2021-08-31},
	journal = {arXiv:2107.02363 [cs, math, stat]},
	author = {Davison, Andrew and Austern, Morgane},
	month = jul,
	year = {2021},
	note = {arXiv: 2107.02363},
	keywords = {Statistics - Machine Learning, Mathematics - Statistics Theory, Computer Science - Machine Learning},
	annote = {Comment: 98 pages, 3 figures, 1 table},
	file = {arXiv Fulltext PDF:C\:\\Users\\adavi\\Zotero\\storage\\DB5TLUJJ\\Davison and Austern - 2021 - Asymptotics of Network Embeddings Learned via Subs.pdf:application/pdf;arXiv.org Snapshot:C\:\\Users\\adavi\\Zotero\\storage\\23RPPNH6\\2107.html:text/html},
}

@article{kipf_semi-supervised_2017,
	title = {Semi-{Supervised} {Classification} with {Graph} {Convolutional} {Networks}},
	url = {http://arxiv.org/abs/1609.02907},
	abstract = {We present a scalable approach for semi-supervised learning on graph-structured data that is based on an efficient variant of convolutional neural networks which operate directly on graphs. We motivate the choice of our convolutional architecture via a localized first-order approximation of spectral graph convolutions. Our model scales linearly in the number of graph edges and learns hidden layer representations that encode both local graph structure and features of nodes. In a number of experiments on citation networks and on a knowledge graph dataset we demonstrate that our approach outperforms related methods by a significant margin.},
	urldate = {2021-10-13},
	journal = {arXiv:1609.02907 [cs, stat]},
	author = {Kipf, Thomas N. and Welling, Max},
	month = feb,
	year = {2017},
	note = {arXiv: 1609.02907},
	keywords = {Statistics - Machine Learning, Computer Science - Machine Learning},
	annote = {Comment: Published as a conference paper at ICLR 2017},
	file = {arXiv Fulltext PDF:C\:\\Users\\adavi\\Zotero\\storage\\AQ5CHJI7\\Kipf and Welling - 2017 - Semi-Supervised Classification with Graph Convolut.pdf:application/pdf;arXiv.org Snapshot:C\:\\Users\\adavi\\Zotero\\storage\\PIDFWUKU\\1609.html:text/html},
}

@misc{data61_stellargraph_2018,
	title = {{StellarGraph} {Machine} {Learning} {Library}},
	url = {https://github.com/stellargraph/stellargraph},
	publisher = {GitHub},
	author = {Data61, CSIRO's},
	year = {2018},
	note = {Publication Title: GitHub Repository},
}

@inproceedings{zhang_improve_2018,
	address = {New York, NY, USA},
	series = {{CIKM} '18},
	title = {Improve {Network} {Embeddings} with {Regularization}},
	isbn = {978-1-4503-6014-2},
	url = {https://doi.org/10.1145/3269206.3269320},
	doi = {10.1145/3269206.3269320},
	abstract = {Learning network representations is essential for many downstream tasks such as node classification, link prediction, and recommendation. Many algorithms derived from SGNS (skip-gram with negative sampling) have been proposed, such as LINE, DeepWalk, and node2vec. In this paper, we show that these algorithms suffer from norm convergence problem, and propose to use L2 regularization to rectify the problem. The proposed method improves the embeddings consistently. This is verified on seven different datasets with various sizes and structures. The best improvement is 46.41\% for the task of node classification.},
	urldate = {2021-10-14},
	booktitle = {Proceedings of the 27th {ACM} {International} {Conference} on {Information} and {Knowledge} {Management}},
	publisher = {Association for Computing Machinery},
	author = {Zhang, Yi and Lu, Jianguo and Shai, Ofer},
	month = oct,
	year = {2018},
	keywords = {deepwalk, line, network embeddings, node2vec, regularization, skip-gram},
	pages = {1643--1646},
	file = {Full Text PDF:C\:\\Users\\adavi\\Zotero\\storage\\Q6348HU4\\Zhang et al. - 2018 - Improve Network Embeddings with Regularization.pdf:application/pdf},
}

@article{recht_guaranteed_2007,
	title = {Guaranteed {Minimum}-{Rank} {Solutions} of {Linear} {Matrix} {Equations} via {Nuclear} {Norm} {Minimization}},
	url = {https://arxiv.org/abs/0706.4138v1},
	doi = {10.1137/070697835},
	abstract = {The affine rank minimization problem consists of finding a matrix of minimum rank that satisfies a given system of linear equality constraints. Such problems have appeared in the literature of a diverse set of fields including system identification and control, Euclidean embedding, and collaborative filtering. Although specific instances can often be solved with specialized algorithms, the general affine rank minimization problem is NP-hard. In this paper, we show that if a certain restricted isometry property holds for the linear transformation defining the constraints, the minimum rank solution can be recovered by solving a convex optimization problem, namely the minimization of the nuclear norm over the given affine space. We present several random ensembles of equations where the restricted isometry property holds with overwhelming probability. The techniques used in our analysis have strong parallels in the compressed sensing framework. We discuss how affine rank minimization generalizes this pre-existing concept and outline a dictionary relating concepts from cardinality minimization to those of rank minimization.},
	language = {en},
	urldate = {2021-10-14},
	author = {Recht, Benjamin and Fazel, Maryam and Parrilo, Pablo A.},
	month = jun,
	year = {2007},
	file = {Full Text PDF:C\:\\Users\\adavi\\Zotero\\storage\\A3HNRWLM\\Recht et al. - 2007 - Guaranteed Minimum-Rank Solutions of Linear Matrix.pdf:application/pdf;Snapshot:C\:\\Users\\adavi\\Zotero\\storage\\MTB3I23T\\0706.html:text/html},
}

@article{udell_generalized_2015,
	title = {Generalized {Low} {Rank} {Models}},
	url = {http://arxiv.org/abs/1410.0342},
	abstract = {Principal components analysis (PCA) is a well-known technique for approximating a tabular data set by a low rank matrix. Here, we extend the idea of PCA to handle arbitrary data sets consisting of numerical, Boolean, categorical, ordinal, and other data types. This framework encompasses many well known techniques in data analysis, such as nonnegative matrix factorization, matrix completion, sparse and robust PCA, \$k\$-means, \$k\$-SVD, and maximum margin matrix factorization. The method handles heterogeneous data sets, and leads to coherent schemes for compressing, denoising, and imputing missing entries across all data types simultaneously. It also admits a number of interesting interpretations of the low rank factors, which allow clustering of examples or of features. We propose several parallel algorithms for fitting generalized low rank models, and describe implementations and numerical results.},
	urldate = {2021-10-14},
	journal = {arXiv:1410.0342 [cs, math, stat]},
	author = {Udell, Madeleine and Horn, Corinne and Zadeh, Reza and Boyd, Stephen},
	month = may,
	year = {2015},
	note = {arXiv: 1410.0342},
	keywords = {Statistics - Machine Learning, Mathematics - Optimization and Control, Computer Science - Machine Learning},
	annote = {Comment: 84 pages, 19 figures},
	file = {arXiv Fulltext PDF:C\:\\Users\\adavi\\Zotero\\storage\\UJUAFFR2\\Udell et al. - 2015 - Generalized Low Rank Models.pdf:application/pdf;arXiv.org Snapshot:C\:\\Users\\adavi\\Zotero\\storage\\FW9HGLGZ\\1410.html:text/html},
}

@inproceedings{xu_rates_2018,
	title = {Rates of {Convergence} of {Spectral} {Methods} for {Graphon} {Estimation}},
	url = {https://proceedings.mlr.press/v80/xu18a.html},
	abstract = {This paper studies the problem of estimating the graphon function – a generative mechanism for a class of random graphs that are useful approximations to real networks. Specifically, a graph of nnn vertices is generated such that each pair of two vertices iii and jjj are connected independently with probability ρn×f(xi,xj)ρn×f(xi,xj){\textbackslash}rho\_n {\textbackslash}times f(x\_i,x\_j), where xixix\_i is the unknown ddd-dimensional label of vertex iii, fff is an unknown symmetric function, and ρnρn{\textbackslash}rho\_n, assumed to be Ω(logn/n)Ω(log⁡n/n){\textbackslash}Omega({\textbackslash}log n/n), is a scaling parameter characterizing the graph sparsity. The task is to estimate graphon fff given the graph. Recent studies have identified the minimax optimal estimation error rate for d=1d=1d=1. However, there exists a wide gap between the known error rates of polynomial-time estimators and the minimax optimal error rate. We improve on the previously known error rates of polynomial-time estimators, by analyzing a spectral method, namely universal singular value thresholding (USVT) algorithm. When fff belongs to either Hölder or Sobolev space with smoothness index αα{\textbackslash}alpha, we show the error rates of USVT are at most (nρ)−2α/(2α+d)(nρ)−2α/(2α+d)(n{\textbackslash}rho){\textasciicircum}\{ -2 {\textbackslash}alpha / (2{\textbackslash}alpha+d)\}. These error rates approach the minimax optimal error rate log(nρ)/(nρ)log⁡(nρ)/(nρ){\textbackslash}log (n{\textbackslash}rho)/(n{\textbackslash}rho) proved in prior work for d=1d=1d=1, as αα{\textbackslash}alpha increases, i.e., fff becomes smoother. Furthermore, when fff is analytic with infinitely many times differentiability, we show the error rate of USVT is at most logd(nρ)/(nρ)logd⁡(nρ)/(nρ){\textbackslash}log{\textasciicircum}d (n{\textbackslash}rho)/(n{\textbackslash}rho). When fff is a step function which corresponds to the stochastic block model with kkk blocks for some kkk, the error rate of USVT is at most k/(nρ)k/(nρ)k/(n{\textbackslash}rho), which is larger than the minimax optimal error rate by at most a multiplicative factor k/logkk/log⁡kk/{\textbackslash}log k. This coincides with the computational gap observed in community detection. A key ingredient of our analysis is to derive the eigenvalue decaying rate of the edge probability matrix using piecewise polynomial approximations of the graphon function fff.},
	language = {en},
	urldate = {2022-01-20},
	booktitle = {Proceedings of the 35th {International} {Conference} on {Machine} {Learning}},
	publisher = {PMLR},
	author = {Xu, Jiaming},
	month = jul,
	year = {2018},
	note = {ISSN: 2640-3498},
	pages = {5433--5442},
	file = {Full Text PDF:C\:\\Users\\adavi\\Zotero\\storage\\JU4QYHXC\\Xu - 2018 - Rates of Convergence of Spectral Methods for Graph.pdf:application/pdf},
}

@article{gao_rate-optimal_2015,
	title = {Rate-optimal graphon estimation},
	volume = {43},
	issn = {0090-5364, 2168-8966},
	url = {https://projecteuclid.org/journals/annals-of-statistics/volume-43/issue-6/Rate-optimal-graphon-estimation/10.1214/15-AOS1354.full},
	doi = {10.1214/15-AOS1354},
	abstract = {Network analysis is becoming one of the most active research areas in statistics. Significant advances have been made recently on developing theories, methodologies and algorithms for analyzing networks. However, there has been little fundamental study on optimal estimation. In this paper, we establish optimal rate of convergence for graphon estimation. For the stochastic block model with \$k\$ clusters, we show that the optimal rate under the mean squared error is \$n{\textasciicircum}\{-1\}{\textbackslash}log k+k{\textasciicircum}\{2\}/n{\textasciicircum}\{2\}\$. The minimax upper bound improves the existing results in literature through a technique of solving a quadratic equation. When \$k{\textbackslash}leq{\textbackslash}sqrt\{n{\textbackslash}log n\}\$, as the number of the cluster \$k\$ grows, the minimax rate grows slowly with only a logarithmic order \$n{\textasciicircum}\{-1\}{\textbackslash}log k\$. A key step to establish the lower bound is to construct a novel subset of the parameter space and then apply Fano’s lemma, from which we see a clear distinction of the nonparametric graphon estimation problem from classical nonparametric regression, due to the lack of identifiability of the order of nodes in exchangeable random graph models. As an immediate application, we consider nonparametric graphon estimation in a Hölder class with smoothness \${\textbackslash}alpha\$. When the smoothness \${\textbackslash}alpha{\textbackslash}geq1\$, the optimal rate of convergence is \$n{\textasciicircum}\{-1\}{\textbackslash}log n\$, independent of \${\textbackslash}alpha\$, while for \${\textbackslash}alpha{\textbackslash}in(0,1)\$, the rate is \$n{\textasciicircum}\{-2{\textbackslash}alpha/({\textbackslash}alpha+1)\}\$, which is, to our surprise, identical to the classical nonparametric rate.},
	number = {6},
	urldate = {2022-01-20},
	journal = {The Annals of Statistics},
	author = {Gao, Chao and Lu, Yu and Zhou, Harrison H.},
	month = dec,
	year = {2015},
	note = {Publisher: Institute of Mathematical Statistics},
	keywords = {60G05, graphon, Minimax rate, network, Nonparametric regression, Stochastic block model},
	pages = {2624--2652},
	file = {Full Text PDF:C\:\\Users\\adavi\\Zotero\\storage\\HDK2CMJI\\Gao et al. - 2015 - Rate-optimal graphon estimation.pdf:application/pdf;Snapshot:C\:\\Users\\adavi\\Zotero\\storage\\WWMFJGIT\\15-AOS1354.html:text/html},
}

@article{aldous_representations_1981,
	title = {Representations for partially exchangeable arrays of random variables},
	volume = {11},
	issn = {0047-259X},
	url = {https://www.sciencedirect.com/science/article/pii/0047259X81900993},
	doi = {10.1016/0047-259X(81)90099-3},
	abstract = {Consider an array of random variables (Xi,j), 1 ≤ i,j {\textless} ∞, such that permutations of rows or of columns do not alter the distribution of the array. We show that such an array may be represented as functions f(α, ξi, ηj, λi,j) of underlying i.i.d, random variables. This result may be useful in characterizing arrays with additional structure. For example, we characterize random matrices whose distribution is invariant under orthogonal rotation, confirming a conjecture of Dawid.},
	language = {en},
	number = {4},
	urldate = {2022-01-27},
	journal = {Journal of Multivariate Analysis},
	author = {Aldous, David J.},
	month = dec,
	year = {1981},
	keywords = {Exchangeability, spherical matrices},
	pages = {581--598},
	file = {ScienceDirect Full Text PDF:C\:\\Users\\adavi\\Zotero\\storage\\BZG5EA7K\\Aldous - 1981 - Representations for partially exchangeable arrays .pdf:application/pdf;ScienceDirect Snapshot:C\:\\Users\\adavi\\Zotero\\storage\\VJAQ5QDD\\0047259X81900993.html:text/html},
}

@article{robbins_stochastic_1951,
	title = {A {Stochastic} {Approximation} {Method}},
	volume = {22},
	issn = {0003-4851, 2168-8990},
	url = {https://projecteuclid.org/journals/annals-of-mathematical-statistics/volume-22/issue-3/A-Stochastic-Approximation-Method/10.1214/aoms/1177729586.full},
	doi = {10.1214/aoms/1177729586},
	abstract = {Let \$M(x)\$ denote the expected value at level \$x\$ of the response to a certain experiment. \$M(x)\$ is assumed to be a monotone function of \$x\$ but is unknown to the experimenter, and it is desired to find the solution \$x = {\textbackslash}theta\$ of the equation \$M(x) = {\textbackslash}alpha\$, where \${\textbackslash}alpha\$ is a given constant. We give a method for making successive experiments at levels \$x\_1,x\_2,{\textbackslash}cdots\$ in such a way that \$x\_n\$ will tend to \${\textbackslash}theta\$ in probability.},
	number = {3},
	urldate = {2022-03-15},
	journal = {The Annals of Mathematical Statistics},
	author = {Robbins, Herbert and Monro, Sutton},
	month = sep,
	year = {1951},
	note = {Publisher: Institute of Mathematical Statistics},
	pages = {400--407},
	file = {Full Text PDF:C\:\\Users\\adavi\\Zotero\\storage\\Z28R8YLS\\Robbins and Monro - 1951 - A Stochastic Approximation Method.pdf:application/pdf;Snapshot:C\:\\Users\\adavi\\Zotero\\storage\\8NCDRYMV\\1177729586.html:text/html},
}

@article{broido_scale-free_2019,
	title = {Scale-free networks are rare},
	volume = {10},
	issn = {2041-1723},
	url = {http://arxiv.org/abs/1801.03400},
	doi = {10.1038/s41467-019-08746-5},
	abstract = {A central claim in modern network science is that real-world networks are typically "scale free," meaning that the fraction of nodes with degree \$k\$ follows a power law, decaying like \$k{\textasciicircum}\{-{\textbackslash}alpha\}\$, often with \$2 {\textless} {\textbackslash}alpha {\textless} 3\$. However, empirical evidence for this belief derives from a relatively small number of real-world networks. We test the universality of scale-free structure by applying state-of-the-art statistical tools to a large corpus of nearly 1000 network data sets drawn from social, biological, technological, and informational sources. We fit the power-law model to each degree distribution, test its statistical plausibility, and compare it via a likelihood ratio test to alternative, non-scale-free models, e.g., the log-normal. Across domains, we find that scale-free networks are rare, with only 4\% exhibiting the strongest-possible evidence of scale-free structure and 52\% exhibiting the weakest-possible evidence. Furthermore, evidence of scale-free structure is not uniformly distributed across sources: social networks are at best weakly scale free, while a handful of technological and biological networks can be called strongly scale free. These results undermine the universality of scale-free networks and reveal that real-world networks exhibit a rich structural diversity that will likely require new ideas and mechanisms to explain.},
	number = {1},
	urldate = {2022-03-21},
	journal = {Nature Communications},
	author = {Broido, Anna D. and Clauset, Aaron},
	month = dec,
	year = {2019},
	note = {arXiv: 1801.03400},
	keywords = {Computer Science - Social and Information Networks, Physics - Physics and Society, Physics - Data Analysis, Statistics and Probability, Quantitative Biology - Molecular Networks, Statistics - Applications},
	pages = {1017},
	annote = {Comment: 14 pages, 9 figures, 2 tables, 5 appendices},
	file = {arXiv Fulltext PDF:C\:\\Users\\adavi\\Zotero\\storage\\5QEQZBZN\\Broido and Clauset - 2019 - Scale-free networks are rare.pdf:application/pdf;arXiv.org Snapshot:C\:\\Users\\adavi\\Zotero\\storage\\C8KAMHGZ\\1801.html:text/html},
}

@article{zhou_power-law_2020,
	title = {Power-law distribution of degree–degree distance: {A} better representation of the scale-free property of complex networks},
	volume = {117},
	shorttitle = {Power-law distribution of degree–degree distance},
	url = {https://www.pnas.org/doi/10.1073/pnas.1918901117},
	doi = {10.1073/pnas.1918901117},
	number = {26},
	urldate = {2022-03-21},
	journal = {Proceedings of the National Academy of Sciences},
	author = {Zhou, Bin and Meng, Xiangyi and Stanley, H. Eugene},
	month = jun,
	year = {2020},
	note = {Publisher: Proceedings of the National Academy of Sciences},
	pages = {14812--14818},
	file = {Full Text PDF:C\:\\Users\\adavi\\Zotero\\storage\\NW83WMA7\\Zhou et al. - 2020 - Power-law distribution of degree–degree distance .pdf:application/pdf},
}

@article{oono_graph_2021,
	title = {Graph {Neural} {Networks} {Exponentially} {Lose} {Expressive} {Power} for {Node} {Classification}},
	url = {http://arxiv.org/abs/1905.10947},
	abstract = {Graph Neural Networks (graph NNs) are a promising deep learning approach for analyzing graph-structured data. However, it is known that they do not improve (or sometimes worsen) their predictive performance as we pile up many layers and add non-lineality. To tackle this problem, we investigate the expressive power of graph NNs via their asymptotic behaviors as the layer size tends to infinity. Our strategy is to generalize the forward propagation of a Graph Convolutional Network (GCN), which is a popular graph NN variant, as a specific dynamical system. In the case of a GCN, we show that when its weights satisfy the conditions determined by the spectra of the (augmented) normalized Laplacian, its output exponentially approaches the set of signals that carry information of the connected components and node degrees only for distinguishing nodes. Our theory enables us to relate the expressive power of GCNs with the topological information of the underlying graphs inherent in the graph spectra. To demonstrate this, we characterize the asymptotic behavior of GCNs on the Erd{\textbackslash}H\{o\}s -- R{\textbackslash}'\{e\}nyi graph. We show that when the Erd{\textbackslash}H\{o\}s -- R{\textbackslash}'\{e\}nyi graph is sufficiently dense and large, a broad range of GCNs on it suffers from the "information loss" in the limit of infinite layers with high probability. Based on the theory, we provide a principled guideline for weight normalization of graph NNs. We experimentally confirm that the proposed weight scaling enhances the predictive performance of GCNs in real data. Code is available at https://github.com/delta2323/gnn-asymptotics.},
	urldate = {2022-03-21},
	journal = {arXiv:1905.10947 [cs, stat]},
	author = {Oono, Kenta and Suzuki, Taiji},
	month = jan,
	year = {2021},
	note = {arXiv: 1905.10947},
	keywords = {Statistics - Machine Learning, Computer Science - Machine Learning, 05C99, 62M45, G.2.2},
	annote = {Comment: 9 pages, Supplemental material 28 pages. Accepted in International Conference on Learning Representations (ICLR) 2020},
	file = {arXiv Fulltext PDF:C\:\\Users\\adavi\\Zotero\\storage\\KJZHKLWG\\Oono and Suzuki - 2021 - Graph Neural Networks Exponentially Lose Expressiv.pdf:application/pdf;arXiv.org Snapshot:C\:\\Users\\adavi\\Zotero\\storage\\AZ6IIRQ3\\1905.html:text/html},
}

@article{athreya_statistical_2018,
	title = {Statistical {Inference} on {Random} {Dot} {Product} {Graphs}: a {Survey}},
	volume = {18},
	issn = {1533-7928},
	shorttitle = {Statistical {Inference} on {Random} {Dot} {Product} {Graphs}},
	url = {http://jmlr.org/papers/v18/17-448.html},
	abstract = {The random dot product graph (RDPG) is an independent-edge random graph that is analytically tractable and, simultaneously, either encompasses or can successfully approximate a wide range of random graphs, from relatively simple stochastic block models to complex latent position graphs. In this survey paper, we describe a comprehensive paradigm for statistical inference on random dot product graphs, a paradigm centered on spectral embeddings of adjacency and Laplacian matrices. We examine the graph-inferential analogues of several canonical tenets of classical Euclidean inference. In particular, we summarize a body of existing results on the consistency and asymptotic normality of the adjacency and Laplacian spectral embeddings, and the role these spectral embeddings can play in the construction of single- and multi-sample hypothesis tests for graph data. We investigate several real-world applications, including community detection and classification in large social networks and the determination of functional and biologically relevant network properties from an exploratory data analysis of the Drosophila connectome. We outline requisite background and current open problems in spectral graph inference.},
	number = {226},
	urldate = {2022-03-22},
	journal = {Journal of Machine Learning Research},
	author = {Athreya, Avanti and Fishkind, Donniell E. and Tang, Minh and Priebe, Carey E. and Park, Youngser and Vogelstein, Joshua T. and Levin, Keith and Lyzinski, Vince and Qin, Yichen and Sussman, Daniel L.},
	year = {2018},
	pages = {1--92},
	file = {Full Text PDF:C\:\\Users\\adavi\\Zotero\\storage\\QPA2H3W3\\Athreya et al. - 2018 - Statistical Inference on Random Dot Product Graphs.pdf:application/pdf},
}

@article{levin_limit_2021,
	title = {Limit theorems for out-of-sample extensions of the adjacency and {Laplacian} spectral embeddings},
	volume = {22},
	issn = {1533-7928},
	url = {http://jmlr.org/papers/v22/19-852.html},
	abstract = {Graph embeddings, a class of dimensionality reduction techniques designed for relational data, have proven useful in exploring and modeling network structure. Most dimensionality reduction methods allow out-of-sample extensions, by which an embedding can be applied to observations not present in the training set. Applied to graphs, the out-of-sample extension problem concerns how to compute the embedding of a vertex that is added to the graph after an embedding has already been computed. In this paper, we consider the out-of-sample extension problem for two graph embedding procedures: the adjacency spectral embedding and the Laplacian spectral embedding. In both cases, we prove that when the underlying graph is generated according to a latent space model called the random dot product graph, which includes the popular stochastic block model as a special case, an out-of-sample extension based on a least-squares objective obeys a central limit theorem. In addition, we prove a concentration inequality for the out-of-sample extension of the adjacency spectral embedding based on a maximum-likelihood objective. Our results also yield a convenient framework in which to analyze trade-offs between estimation accuracy and computational expenses, which we explore briefly. Finally, we explore the performance of these out-of-sample extensions as applied to both simulated and real-world data. We observe significant computational savings with minimal losses to the quality of the learned embeddings, in keeping with our theoretical results.},
	number = {194},
	urldate = {2022-03-22},
	journal = {Journal of Machine Learning Research},
	author = {Levin, Keith D. and Roosta, Fred and Tang, Minh and Mahoney, Michael W. and Priebe, Carey E.},
	year = {2021},
	pages = {1--59},
	file = {Full Text PDF:C\:\\Users\\adavi\\Zotero\\storage\\2PARE8EU\\Levin et al. - 2021 - Limit theorems for out-of-sample extensions of the.pdf:application/pdf},
}

@article{chanpuriya_node_2020,
	title = {Node {Embeddings} and {Exact} {Low}-{Rank} {Representations} of {Complex} {Networks}},
	url = {http://arxiv.org/abs/2006.05592},
	abstract = {Low-dimensional embeddings, from classical spectral embeddings to modern neural-net-inspired methods, are a cornerstone in the modeling and analysis of complex networks. Recent work by Seshadhri et al. (PNAS 2020) suggests that such embeddings cannot capture local structure arising in complex networks. In particular, they show that any network generated from a natural low-dimensional model cannot be both sparse and have high triangle density (high clustering coefficient), two hallmark properties of many real-world networks. In this work we show that the results of Seshadhri et al. are intimately connected to the model they use rather than the low-dimensional structure of complex networks. Specifically, we prove that a minor relaxation of their model can generate sparse graphs with high triangle density. Surprisingly, we show that this same model leads to exact low-dimensional factorizations of many real-world networks. We give a simple algorithm based on logistic principal component analysis (LPCA) that succeeds in finding such exact embeddings. Finally, we perform a large number of experiments that verify the ability of very low-dimensional embeddings to capture local structure in real-world networks.},
	urldate = {2022-04-01},
	journal = {arXiv:2006.05592 [cs, stat]},
	author = {Chanpuriya, Sudhanshu and Musco, Cameron and Sotiropoulos, Konstantinos and Tsourakakis, Charalampos E.},
	month = oct,
	year = {2020},
	note = {arXiv: 2006.05592},
	keywords = {Statistics - Machine Learning, Computer Science - Data Structures and Algorithms, Computer Science - Machine Learning, Computer Science - Social and Information Networks},
	file = {arXiv Fulltext PDF:C\:\\Users\\adavi\\Zotero\\storage\\AZ5AWKIJ\\Chanpuriya et al. - 2020 - Node Embeddings and Exact Low-Rank Representations.pdf:application/pdf;arXiv.org Snapshot:C\:\\Users\\adavi\\Zotero\\storage\\WA9TEQAI\\2006.html:text/html},
}

@article{tang_line_2015,
	title = {{LINE}: {Large}-scale {Information} {Network} {Embedding}},
	shorttitle = {{LINE}},
	url = {http://arxiv.org/abs/1503.03578},
	doi = {10.1145/2736277.2741093},
	abstract = {This paper studies the problem of embedding very large information networks into low-dimensional vector spaces, which is useful in many tasks such as visualization, node classification, and link prediction. Most existing graph embedding methods do not scale for real world information networks which usually contain millions of nodes. In this paper, we propose a novel network embedding method called the "LINE," which is suitable for arbitrary types of information networks: undirected, directed, and/or weighted. The method optimizes a carefully designed objective function that preserves both the local and global network structures. An edge-sampling algorithm is proposed that addresses the limitation of the classical stochastic gradient descent and improves both the effectiveness and the efficiency of the inference. Empirical experiments prove the effectiveness of the LINE on a variety of real-world information networks, including language networks, social networks, and citation networks. The algorithm is very efficient, which is able to learn the embedding of a network with millions of vertices and billions of edges in a few hours on a typical single machine. The source code of the LINE is available online.},
	urldate = {2022-04-11},
	journal = {Proceedings of the 24th International Conference on World Wide Web},
	author = {Tang, Jian and Qu, Meng and Wang, Mingzhe and Zhang, Ming and Yan, Jun and Mei, Qiaozhu},
	month = may,
	year = {2015},
	note = {arXiv: 1503.03578},
	keywords = {Computer Science - Machine Learning},
	pages = {1067--1077},
	annote = {Comment: WWW 2015},
	file = {arXiv Fulltext PDF:C\:\\Users\\adavi\\Zotero\\storage\\4YFTX45W\\Tang et al. - 2015 - LINE Large-scale Information Network Embedding.pdf:application/pdf;arXiv.org Snapshot:C\:\\Users\\adavi\\Zotero\\storage\\6XJYTC2Y\\1503.html:text/html},
}

@article{le-khac_contrastive_2020,
	title = {Contrastive {Representation} {Learning}: {A} {Framework} and {Review}},
	volume = {8},
	issn = {2169-3536},
	shorttitle = {Contrastive {Representation} {Learning}},
	url = {http://arxiv.org/abs/2010.05113},
	doi = {10.1109/ACCESS.2020.3031549},
	abstract = {Contrastive Learning has recently received interest due to its success in self-supervised representation learning in the computer vision domain. However, the origins of Contrastive Learning date as far back as the 1990s and its development has spanned across many fields and domains including Metric Learning and natural language processing. In this paper we provide a comprehensive literature review and we propose a general Contrastive Representation Learning framework that simplifies and unifies many different contrastive learning methods. We also provide a taxonomy for each of the components of contrastive learning in order to summarise it and distinguish it from other forms of machine learning. We then discuss the inductive biases which are present in any contrastive learning system and we analyse our framework under different views from various sub-fields of Machine Learning. Examples of how contrastive learning has been applied in computer vision, natural language processing, audio processing, and others, as well as in Reinforcement Learning are also presented. Finally, we discuss the challenges and some of the most promising future research directions ahead.},
	urldate = {2022-04-21},
	journal = {IEEE Access},
	author = {Le-Khac, Phuc H. and Healy, Graham and Smeaton, Alan F.},
	year = {2020},
	note = {arXiv: 2010.05113},
	keywords = {Statistics - Machine Learning, Computer Science - Machine Learning},
	pages = {193907--193934},
	annote = {Comment: 28 pages, 9 figures, update with the accepted version in IEEE Access},
	file = {arXiv Fulltext PDF:C\:\\Users\\adavi\\Zotero\\storage\\NE6LPRAA\\Le-Khac et al. - 2020 - Contrastive Representation Learning A Framework a.pdf:application/pdf;arXiv.org Snapshot:C\:\\Users\\adavi\\Zotero\\storage\\U8ABR75P\\2010.html:text/html},
}

@article{borgs_revealing_2018,
	title = {Revealing {Network} {Structure}, {Confidentially}: {Improved} {Rates} for {Node}-{Private} {Graphon} {Estimation}},
	shorttitle = {Revealing {Network} {Structure}, {Confidentially}},
	abstract = {Motivated by growing concerns over ensuring privacy on social networks, we develop new algorithms and impossibility results for fitting complex statistical models to network data subject to rigorous privacy guarantees. We consider the so-called node-differentially private algorithms, which compute information about a graph or network while provably revealing almost no information about the presence or absence of a particular node in the graph. We provide new algorithms for node-differentially private estimation for a popular and expressive family of network models: stochastic block models and their generalization, graphons. Our algorithms improve on prior work, reducing their error quadratically and matching, in many regimes, the optimal nonprivate algorithm. We also show that for the simplest random graph models (\$G(n,p)\$ and \$G(n,m)\$), node-private algorithms can be qualitatively more accurate than for more complex models---converging at a rate of \${\textbackslash}frac\{1\}\{{\textbackslash}epsilon{\textasciicircum}2 n{\textasciicircum}\{3\}\}\$ instead of \${\textbackslash}frac\{1\}\{{\textbackslash}epsilon{\textasciicircum}2 n{\textasciicircum}2\}\$. This result uses a new extension lemma for differentially private algorithms that we hope will be broadly useful.},
	urldate = {2022-04-26},
	journal = {arXiv:1810.02183 [cs, math, stat]},
	author = {Borgs, Christian and Chayes, Jennifer and Smith, Adam and Zadik, Ilias},
	month = oct,
	year = {2018},
	keywords = {Mathematics - Probability, Mathematics - Statistics Theory, Computer Science - Data Structures and Algorithms, Computer Science - Cryptography and Security},
	file = {arXiv Fulltext PDF:C\:\\Users\\adavi\\Zotero\\storage\\7MKEN64H\\Borgs et al. - 2018 - Revealing Network Structure, Confidentially Impro.pdf:application/pdf;arXiv.org Snapshot:C\:\\Users\\adavi\\Zotero\\storage\\LWSJCH2Z\\1810.html:text/html},
}

@inproceedings{borgs_private_2015-1,
	title = {Private {Graphon} {Estimation} for {Sparse} {Graphs}},
	volume = {28},
	url = {https://papers.nips.cc/paper/2015/hash/7250eb93b3c18cc9daa29cf58af7a004-Abstract.html},
	urldate = {2022-04-26},
	booktitle = {Advances in {Neural} {Information} {Processing} {Systems}},
	publisher = {Curran Associates, Inc.},
	author = {Borgs, Christian and Chayes, Jennifer and Smith, Adam},
	year = {2015},
	file = {Full Text PDF:C\:\\Users\\adavi\\Zotero\\storage\\8ZP3Y7HC\\Borgs et al. - 2015 - Private Graphon Estimation for Sparse Graphs.pdf:application/pdf},
}

@article{chatterjee_matrix_2015,
	title = {Matrix estimation by {Universal} {Singular} {Value} {Thresholding}},
	volume = {43},
	issn = {0090-5364, 2168-8966},
	url = {https://projecteuclid.org/journals/annals-of-statistics/volume-43/issue-1/Matrix-estimation-by-Universal-Singular-Value-Thresholding/10.1214/14-AOS1272.full},
	doi = {10.1214/14-AOS1272},
	abstract = {Consider the problem of estimating the entries of a large matrix, when the observed entries are noisy versions of a small random fraction of the original entries. This problem has received widespread attention in recent times, especially after the pioneering works of Emmanuel Candès and collaborators. This paper introduces a simple estimation procedure, called Universal Singular Value Thresholding (USVT), that works for any matrix that has “a little bit of structure.” Surprisingly, this simple estimator achieves the minimax error rate up to a constant factor. The method is applied to solve problems related to low rank matrix estimation, blockmodels, distance matrix completion, latent space models, positive definite matrix completion, graphon estimation and generalized Bradley–Terry models for pairwise comparison.},
	number = {1},
	urldate = {2022-04-27},
	journal = {The Annals of Statistics},
	author = {Chatterjee, Sourav},
	month = feb,
	year = {2015},
	note = {Publisher: Institute of Mathematical Statistics},
	keywords = {Matrix completion, 62G05, 62F12, 05C99, 60B20, Covariance matrix, distance matrix, graphons, Latent space model, low rank matrices, matrix estimation, Singular value decomposition, sochastic blockmodel},
	pages = {177--214},
	file = {Full Text PDF:C\:\\Users\\adavi\\Zotero\\storage\\D8B3Y652\\Chatterjee - 2015 - Matrix estimation by Universal Singular Value Thre.pdf:application/pdf;Snapshot:C\:\\Users\\adavi\\Zotero\\storage\\5J2S97N9\\14-AOS1272.html:text/html},
}

@article{klopp_oracle_2017,
	title = {Oracle {Inequalities} {For} {Network} {Models} and {Sparse} {Graphon} {Estimation}},
	volume = {45},
	issn = {0090-5364},
	url = {https://www.jstor.org/stable/44245780},
	abstract = {Inhomogeneous random graph models encompass many network models such as stochastic block models and latent position models. We consider the problem of statistical estimation of the matrix of connection probabilities based on the observations of the adjacency matrix of the network. Taking the stochastic block model as an approximation, we construct estimators of network connection probabilities—the ordinary block constant least squares estimator, and its restricted version. We show that they satisfy oracle inequalities with respect to the block constant oracle. As a consequence, we derive optimal rates of estimation of the probability matrix. Our results cover the important setting of sparse networks. Another consequence consists in establishing upper bounds on the minimax risks for graphon estimation in the L₂ norm when the probability matrix is sampled according to a graphon model. These bounds include an additional term accounting for the "agnostic" error induced by the variability of the latent unobserved variables of the graphon model. In this setting, the optimal rates are influenced not only by the bias and variance components as in usual nonparametric problems but also include the third component, which is the agnostic error. The results shed light on the differences between estimation under the empirical loss (the probability matrix estimation) and under the integrated loss (the graphon estimation).},
	number = {1},
	urldate = {2022-04-27},
	journal = {The Annals of Statistics},
	author = {Klopp, Olga and Tsybakov, Alexandre B. and Verzelen, Nicolas},
	year = {2017},
	note = {Publisher: Institute of Mathematical Statistics},
	pages = {316--354},
	file = {JSTOR Full Text PDF:C\:\\Users\\adavi\\Zotero\\storage\\WJ4SLD8R\\Klopp et al. - 2017 - ORACLE INEQUALITIES FOR NETWORK MODELS AND SPARSE .pdf:application/pdf},
}

@inproceedings{tu_low-rank_2016,
	title = {Low-rank {Solutions} of {Linear} {Matrix} {Equations} via {Procrustes} {Flow}},
	url = {https://proceedings.mlr.press/v48/tu16.html},
	abstract = {In this paper we study the problem of recovering a low-rank matrix from linear measurements. Our algorithm, which we call Procrustes Flow, starts from an initial estimate obtained by a thresholding scheme followed by gradient descent on a non-convex objective. We show that as long as the measurements obey a standard restricted isometry property, our algorithm converges to the unknown matrix at a geometric rate. In the case of Gaussian measurements, such convergence occurs for a n1 {\textbackslash}times n2 matrix of rank r when the number of measurements exceeds a constant times (n1 + n2)r.},
	language = {en},
	urldate = {2022-05-03},
	booktitle = {Proceedings of {The} 33rd {International} {Conference} on {Machine} {Learning}},
	publisher = {PMLR},
	author = {Tu, Stephen and Boczar, Ross and Simchowitz, Max and Soltanolkotabi, Mahdi and Recht, Ben},
	month = jun,
	year = {2016},
	note = {ISSN: 1938-7228},
	pages = {964--973},
	file = {Full Text PDF:C\:\\Users\\adavi\\Zotero\\storage\\6X6TPWRP\\Tu et al. - 2016 - Low-rank Solutions of Linear Matrix Equations via .pdf:application/pdf},
}

@article{hoffman_variation_1953,
	title = {The variation of the spectrum of a normal matrix},
	volume = {20},
	issn = {0012-7094},
	url = {https://mathscinet.ams.org/mathscinet-getitem?mr=52379},
	urldate = {2022-05-10},
	journal = {Duke Mathematical Journal},
	author = {Hoffman, A. J. and Wielandt, H. W.},
	year = {1953},
	mrnumber = {52379},
	pages = {37--39},
	file = {MathSciNet Snapshot:C\:\\Users\\adavi\\Zotero\\storage\\SZ6SWZX3\\mathscinet-getitem.html:text/html},
}

@article{recht_guaranteed_2010,
	title = {Guaranteed {Minimum}-{Rank} {Solutions} of {Linear} {Matrix} {Equations} via {Nuclear} {Norm} {Minimization}},
	volume = {52},
	issn = {0036-1445},
	url = {https://epubs.siam.org/doi/10.1137/070697835},
	doi = {10.1137/070697835},
	abstract = {The affine rank minimization problem consists of finding a matrix of minimum rank that satisfies a given system of linear equality constraints. Such problems have appeared in the literature of a diverse set of fields including system identification and control, Euclidean embedding, and collaborative filtering. Although specific instances can often be solved with specialized algorithms, the general affine rank minimization problem is NP-hard because it contains vector cardinality minimization as a special case.

In this paper, we show that if a certain restricted isometry property holds for the linear transformation defining the constraints, the minimum-rank solution can be recovered by solving a convex optimization problem, namely, the minimization of the nuclear norm over the given affine space. We present several random ensembles of equations where the restricted isometry property holds with overwhelming probability, provided the codimension of the subspace is sufficiently large.

The techniques used in our analysis have strong parallels in the compressed sensing framework. We discuss how affine rank minimization generalizes this preexisting concept and outline a dictionary relating concepts from cardinality minimization to those of rank minimization. We also discuss several algorithmic approaches to minimizing the nuclear norm and illustrate our results with numerical examples.},
	number = {3},
	urldate = {2022-05-19},
	journal = {SIAM Review},
	author = {Recht, Benjamin and Fazel, Maryam and Parrilo, Pablo A.},
	month = jan,
	year = {2010},
	note = {Publisher: Society for Industrial and Applied Mathematics},
	keywords = {random matrices, 15A52, 90C25, 90C59, convex optimization, compressed sensing, matrix norms, rank, semidefinite programming},
	pages = {471--501},
	file = {Full Text PDF:C\:\\Users\\adavi\\Zotero\\storage\\29BRJL9Y\\Recht et al. - 2010 - Guaranteed Minimum-Rank Solutions of Linear Matrix.pdf:application/pdf},
}

@article{chen_reduced_2013,
	title = {Reduced rank regression via adaptive nuclear norm penalization},
	volume = {100},
	issn = {0006-3444},
	url = {https://doi.org/10.1093/biomet/ast036},
	doi = {10.1093/biomet/ast036},
	abstract = {We propose an adaptive nuclear norm penalization approach for low-rank matrix approximation, and use it to develop a new reduced rank estimation method for high-dimensional multivariate regression. The adaptive nuclear norm is defined as the weighted sum of the singular values of the matrix, and it is generally nonconvex under the natural restriction that the weight decreases with the singular value. However, we show that the proposed nonconvex penalized regression method has a global optimal solution obtained from an adaptively soft-thresholded singular value decomposition. The method is computationally efficient, and the resulting solution path is continuous. The rank consistency of and prediction/estimation performance bounds for the estimator are established for a high-dimensional asymptotic regime. Simulation studies and an application in genetics demonstrate its efficacy.},
	number = {4},
	urldate = {2022-05-19},
	journal = {Biometrika},
	author = {Chen, Kun and Dong, Hongbo and Chan, Kung-Sik},
	month = dec,
	year = {2013},
	pages = {901--920},
	file = {Full Text PDF:C\:\\Users\\adavi\\Zotero\\storage\\6QWS58A5\\Chen et al. - 2013 - Reduced rank regression via adaptive nuclear norm .pdf:application/pdf;Snapshot:C\:\\Users\\adavi\\Zotero\\storage\\5GY9IPSZ\\213666.html:text/html},
}

@article{bach_consistency_2008,
	title = {Consistency of {Trace} {Norm} {Minimization}},
	volume = {9},
	issn = {1532-4435},
	abstract = {Regularization by the sum of singular values, also referred to as the trace norm, is a popular technique for estimating low rank rectangular matrices. In this paper, we extend some of the consistency results of the Lasso to provide necessary and sufficient conditions for rank consistency of trace norm minimization with the square loss. We also provide an adaptive version that is rank consistent even when the necessary condition for the non adaptive version is not fulfilled.},
	journal = {The Journal of Machine Learning Research},
	author = {Bach, Francis R.},
	month = jun,
	year = {2008},
	pages = {1019--1048},
	file = {Full Text PDF:C\:\\Users\\adavi\\Zotero\\storage\\DJJDVH6A\\Bach - 2008 - Consistency of Trace Norm Minimization.pdf:application/pdf},
}

@article{koltchinskii_nuclear-norm_2011,
	title = {Nuclear-{Norm} {Penalization} and {Optimal} {Rates} for {Noisy} {Low}-{Rank} {Matrix} {Completion}},
	volume = {39},
	issn = {0090-5364},
	url = {https://www.jstor.org/stable/41713579},
	abstract = {This paper deals with the trace regression model where n entries or linear combinations of entries of an unknown m₁ x m₂ matrix A₀ corrupted by noise are observed. We propose a new nuclear-norm penalized estimator of A₀ and establish a general sharp oracle inequality for this estimator for arbitrary values of n, m₁, m₂ under the condition of isometry in expectation. Then this method is applied to the matrix completion problem. In this case, the estimator admits a simple explicit form, and we prove that it satisfies oracle inequalities with faster rates of convergence than in the previous works. They are valid, in particular, in the high-dimensional setting m₁ m₂ ≫ n. We show that the obtained rates are optimal up to logarithmic factors in a minimax sense and also derive, for any fixed matrix A₀, a nonminimax lower bound on the rate of convergence of our estimator, which coincides with the upper bound up to a constant factor. Finally, we show that our procedure provides an exact recovery of the rank of A₀ with probability close to 1. We also discuss the statistical learning setting where there is no underlying model determined by A₀, and the aim is to find the best trace regression model approximating the data. As a by-product, we show that, under the restricted eigenvalue condition, the usual vector Lasso estimator satisfies a sharp oracle inequality (i.e., an oracle inequality with leading constant 1).},
	number = {5},
	urldate = {2022-05-19},
	journal = {The Annals of Statistics},
	author = {Koltchinskii, Vladimir and Lounici, Karim and Tsybakov, Alexandre B.},
	year = {2011},
	note = {Publisher: Institute of Mathematical Statistics},
	pages = {2302--2329},
	file = {JSTOR Full Text PDF:C\:\\Users\\adavi\\Zotero\\storage\\IFITTHTL\\Koltchinskii et al. - 2011 - NUCLEAR-NORM PENALIZATION AND OPTIMAL RATES FOR NO.pdf:application/pdf},
}

@article{latouche_variational_2016,
	title = {Variational {Bayes} model averaging for graphon functions and motif frequencies inference in {W}-graph models},
	volume = {26},
	issn = {1573-1375},
	url = {https://doi.org/10.1007/s11222-015-9607-0},
	doi = {10.1007/s11222-015-9607-0},
	abstract = {W-graph refers to a general class of random graph models that can be seen as a random graph limit. It is characterized by both its graphon function and its motif frequencies. In this paper, relying on an existing variational Bayes algorithm for the stochastic block models (SBMs) along with the corresponding weights for model averaging, we derive an estimate of the graphon function as an average of SBMs with increasing number of blocks. In the same framework, we derive the variational posterior frequency of any motif. A simulation study and an illustration on a social network complete our work.},
	language = {en},
	number = {6},
	urldate = {2022-05-19},
	journal = {Statistics and Computing},
	author = {Latouche, Pierre and Robin, Stéphane},
	month = nov,
	year = {2016},
	keywords = {62G05, Stochastic block model, 62F15, Bayesian model averaging, Graphon, Network, Network motif, W-graph},
	pages = {1173--1185},
	file = {Full Text PDF:C\:\\Users\\adavi\\Zotero\\storage\\93IAS2JD\\Latouche and Robin - 2016 - Variational Bayes model averaging for graphon func.pdf:application/pdf},
}

@inproceedings{krogh_simple_1991,
	address = {San Francisco, CA, USA},
	series = {{NIPS}'91},
	title = {A simple weight decay can improve generalization},
	isbn = {978-1-55860-222-9},
	abstract = {It has been observed in numerical simulations that a weight decay can improve generalization in a feed-forward neural network. This paper explains why. It is proven that a weight decay has two effects in a linear network. First, it suppresses any irrelevant components of the weight vector by choosing the smallest vector that solves the learning problem. Second, if the size is chosen right, a weight decay can suppress some of the effects of static noise on the targets, which improves generalization quite a lot. It is then shown how to extend these results to networks with hidden layers and non-linear units. Finally the theory is confirmed by some numerical simulations using the data from NetTalk.},
	urldate = {2022-05-19},
	booktitle = {Proceedings of the 4th {International} {Conference} on {Neural} {Information} {Processing} {Systems}},
	publisher = {Morgan Kaufmann Publishers Inc.},
	author = {Krogh, Anders and Hertz, John A.},
	month = dec,
	year = {1991},
	pages = {950--957},
}

@inproceedings{li_convergence_2019,
	title = {On the {Convergence} of {Stochastic} {Gradient} {Descent} with {Adaptive} {Stepsizes}},
	url = {https://proceedings.mlr.press/v89/li19c.html},
	abstract = {Stochastic gradient descent is the method of choice for large scale optimization of machine learning objective functions. Yet, its performance is greatly variable and heavily depends on the choice of the stepsizes. This has motivated a large body of research on adaptive stepsizes. However, there is currently a gap in our theoretical understanding of these methods, especially in the non-convex setting. In this paper, we start closing this gap: we theoretically analyze in the convex and non-convex settings a generalized version of the AdaGrad stepsizes. We show sufficient conditions for these stepsizes to achieve almost sure asymptotic convergence of the gradients to zero, proving the first guarantee for generalized AdaGrad stepsizes in the non-convex setting. Moreover, we show that these stepsizes allow to automatically adapt to the level of noise of the stochastic gradients in both the convex and non-convex settings, interpolating between O(1/T) and O(1/sqrt(T)), up to logarithmic terms.},
	language = {en},
	urldate = {2022-05-19},
	booktitle = {Proceedings of the {Twenty}-{Second} {International} {Conference} on {Artificial} {Intelligence} and {Statistics}},
	publisher = {PMLR},
	author = {Li, Xiaoyu and Orabona, Francesco},
	month = apr,
	year = {2019},
	note = {ISSN: 2640-3498},
	pages = {983--992},
	file = {Full Text PDF:C\:\\Users\\adavi\\Zotero\\storage\\EJHTRE3X\\Li and Orabona - 2019 - On the Convergence of Stochastic Gradient Descent .pdf:application/pdf;Supplementary PDF:C\:\\Users\\adavi\\Zotero\\storage\\JHN4TGND\\Li and Orabona - 2019 - On the Convergence of Stochastic Gradient Descent .pdf:application/pdf},
}

@phdthesis{bottou_approche_1991,
	type = {These de doctorat},
	title = {Une approche théorique de l'apprentissage connexioniste ; applications à la reconnaissance de la parole},
	copyright = {Licence Etalab},
	url = {https://www.theses.fr/1991PA112023},
	abstract = {La these expose plusieurs points theoriques concernant l'apprentissage connexionniste, provenant des mathematiques des algorithmes adaptatifs et des statistiques. Ces points sont discutes a la lumiere de problemes pratiques poses par la reconnaissance automatique de la parole},
	urldate = {2022-05-19},
	school = {Paris 11},
	author = {Bottou, Léon},
	collaborator = {Fogelman-Soulié, Françoise},
	month = jan,
	year = {1991},
	annote = {Sous la direction de  Françoise Fogelman-Soulié. Soutenue en 1991,à Paris 11 .},
}

@techreport{bottou_optimization_2018,
	title = {Optimization {Methods} for {Large}-{Scale} {Machine} {Learning}},
	url = {http://arxiv.org/abs/1606.04838},
	abstract = {This paper provides a review and commentary on the past, present, and future of numerical optimization algorithms in the context of machine learning applications. Through case studies on text classification and the training of deep neural networks, we discuss how optimization problems arise in machine learning and what makes them challenging. A major theme of our study is that large-scale machine learning represents a distinctive setting in which the stochastic gradient (SG) method has traditionally played a central role while conventional gradient-based nonlinear optimization techniques typically falter. Based on this viewpoint, we present a comprehensive theory of a straightforward, yet versatile SG algorithm, discuss its practical behavior, and highlight opportunities for designing algorithms with improved performance. This leads to a discussion about the next generation of optimization methods for large-scale machine learning, including an investigation of two main streams of research on techniques that diminish noise in the stochastic directions and methods that make use of second-order derivative approximations.},
	number = {arXiv:1606.04838},
	urldate = {2022-05-19},
	institution = {arXiv},
	author = {Bottou, Léon and Curtis, Frank E. and Nocedal, Jorge},
	month = feb,
	year = {2018},
	doi = {10.48550/arXiv.1606.04838},
	note = {arXiv:1606.04838 [cs, math, stat]
type: article},
	keywords = {Statistics - Machine Learning, Mathematics - Optimization and Control, Computer Science - Machine Learning},
	file = {arXiv Fulltext PDF:C\:\\Users\\adavi\\Zotero\\storage\\B7VRN3QF\\Bottou et al. - 2018 - Optimization Methods for Large-Scale Machine Learn.pdf:application/pdf;arXiv.org Snapshot:C\:\\Users\\adavi\\Zotero\\storage\\F7UPI2DP\\1606.html:text/html},
}

@article{ghadimi_stochastic_2013,
	title = {Stochastic {First}- and {Zeroth}-{Order} {Methods} for {Nonconvex} {Stochastic} {Programming}},
	volume = {23},
	issn = {1052-6234},
	url = {https://epubs.siam.org/doi/10.1137/120880811},
	doi = {10.1137/120880811},
	abstract = {In this paper, we introduce a new stochastic approximation type algorithm, namely, the randomized stochastic gradient (RSG) method, for solving an important class of nonlinear (possibly nonconvex) stochastic programming problems. We establish the complexity of this method for computing an approximate stationary point of a nonlinear programming problem. We also show that this method possesses a nearly optimal rate of convergence if the problem is convex. We discuss a variant of the algorithm which consists of applying a postoptimization phase to evaluate a short list of solutions generated by several independent runs of the RSG method, and we show that such modification allows us to improve significantly the large-deviation properties of the algorithm. These methods are then specialized for solving a class of simulation-based optimization problems in which only stochastic zeroth-order information is available.},
	number = {4},
	urldate = {2022-05-19},
	journal = {SIAM Journal on Optimization},
	author = {Ghadimi, Saeed and Lan, Guanghui},
	month = jan,
	year = {2013},
	note = {Publisher: Society for Industrial and Applied Mathematics},
	keywords = {90C25, nonconvex optimization, 62L20, 68Q25, 90C15, simulation-based optimization, stochastic approximation, stochastic programming},
	pages = {2341--2368},
	file = {Full Text PDF:C\:\\Users\\adavi\\Zotero\\storage\\UP9FBZWU\\Ghadimi and Lan - 2013 - Stochastic First- and Zeroth-Order Methods for Non.pdf:application/pdf},
}

@article{bertsekas_gradient_2000,
	title = {Gradient {Convergence} in {Gradient} methods with {Errors}},
	volume = {10},
	issn = {1052-6234},
	url = {https://epubs.siam.org/doi/10.1137/S1052623497331063},
	doi = {10.1137/S1052623497331063},
	abstract = {We consider the gradient method \$x\_\{t+1\}=x\_t+{\textbackslash}g\_t(s\_t+w\_t)\$, where \$s\_t\$ is a descent direction of a function \$f:{\textbackslash}rn{\textbackslash}to{\textbackslash}re\$ and \$w\_t\$ is a deterministic or stochastic error. We assume that \${\textbackslash}gr f\$ is Lipschitz continuous, that the stepsize \${\textbackslash}g\_t\$ diminishes to 0, and that \$s\_t\$ and \$w\_t\$ satisfy standard conditions. We show that either \$f(x\_t){\textbackslash}to-{\textbackslash}infty\$ or \$f(x\_t)\$ converges to a finite value and \${\textbackslash}gr f(x\_t){\textbackslash}to0\$ (with probability 1 in the stochastic case), and in doing so, we remove various boundedness conditions that are assumed in existing results, such as boundedness from below of f, boundedness of \${\textbackslash}gr f(x\_t)\$, or boundedness of xt.},
	number = {3},
	urldate = {2022-05-19},
	journal = {SIAM Journal on Optimization},
	author = {Bertsekas, Dimitri P. and Tsitsiklis, John N.},
	month = jan,
	year = {2000},
	note = {Publisher: Society for Industrial and Applied Mathematics},
	keywords = {62L20, stochastic approximation, 903C0, gradient convergence, gradient methods, incremental gradient methods},
	pages = {627--642},
	file = {Full Text PDF:C\:\\Users\\adavi\\Zotero\\storage\\GDX7P6TR\\Bertsekas and Tsitsiklis - 2000 - Gradient Convergence in Gradient methods with Erro.pdf:application/pdf},
}

@article{kipf_variational_2016,
	title = {Variational {Graph} {Auto}-{Encoders}},
	url = {https://arxiv.org/abs/1611.07308v1},
	doi = {10.48550/arXiv.1611.07308},
	abstract = {We introduce the variational graph auto-encoder (VGAE), a framework for unsupervised learning on graph-structured data based on the variational auto-encoder (VAE). This model makes use of latent variables and is capable of learning interpretable latent representations for undirected graphs. We demonstrate this model using a graph convolutional network (GCN) encoder and a simple inner product decoder. Our model achieves competitive results on a link prediction task in citation networks. In contrast to most existing models for unsupervised learning on graph-structured data and link prediction, our model can naturally incorporate node features, which significantly improves predictive performance on a number of benchmark datasets.},
	language = {en},
	urldate = {2022-08-09},
	author = {Kipf, Thomas N. and Welling, Max},
	month = nov,
	year = {2016},
	file = {Full Text PDF:C\:\\Users\\adavi\\Zotero\\storage\\IN8AIN2K\\Kipf and Welling - 2016 - Variational Graph Auto-Encoders.pdf:application/pdf;Snapshot:C\:\\Users\\adavi\\Zotero\\storage\\825LF2SB\\1611.html:text/html},
}

@inproceedings{hassani_contrastive_2020,
	title = {Contrastive {Multi}-{View} {Representation} {Learning} on {Graphs}},
	url = {https://proceedings.mlr.press/v119/hassani20a.html},
	abstract = {We introduce a self-supervised approach for learning node and graph level representations by contrasting structural views of graphs. We show that unlike visual representation learning, increasing the number of views to more than two or contrasting multi-scale encodings do not improve performance, and the best performance is achieved by contrasting encodings from first-order neighbors and a graph diffusion. We achieve new state-of-the-art results in self-supervised learning on 8 out of 8 node and graph classification benchmarks under the linear evaluation protocol. For example, on Cora (node) and Reddit-Binary (graph) classification benchmarks, we achieve 86.8\% and 84.5\% accuracy, which are 5.5\% and 2.4\% relative improvements over previous state-of-the-art. When compared to supervised baselines, our approach outperforms them in 4 out of 8 benchmarks.},
	language = {en},
	urldate = {2022-08-09},
	booktitle = {Proceedings of the 37th {International} {Conference} on {Machine} {Learning}},
	publisher = {PMLR},
	author = {Hassani, Kaveh and Khasahmadi, Amir Hosein},
	month = nov,
	year = {2020},
	note = {ISSN: 2640-3498},
	pages = {4116--4126},
	file = {Full Text PDF:C\:\\Users\\adavi\\Zotero\\storage\\62L4R6PQ\\Hassani and Khasahmadi - 2020 - Contrastive Multi-View Representation Learning on .pdf:application/pdf;Supplementary PDF:C\:\\Users\\adavi\\Zotero\\storage\\QJ7N66M6\\Hassani and Khasahmadi - 2020 - Contrastive Multi-View Representation Learning on .pdf:application/pdf},
}

\section*{Checklist}

\begin{enumerate}

\item For all authors...
\begin{enumerate}
  \item Do the main claims made in the abstract and introduction accurately reflect the paper's contributions and scope?
    \answerYes{}
  \item Did you describe the limitations of your work?
    \answerYes{See the first paragraph in the related works section; Remark~\ref{rmk:sparse}; the paragraph between Theorem 3 and 4; the conclusion; Remark~\ref{app:rmk:rates_of_convergence} in the appendix re: our rates of convergence.}
  \item Did you discuss any potential negative societal impacts of your work?
    \answerNo{The paper discusses the theoretical effects of a particular type of regularization scheme, and does not advocate for any methodological changes or gives particular applications; I believe that making \emph{any} types of claims about the societal impacts of this work would be excessively speculative, to be frank.}
  \item Have you read the ethics review guidelines and ensured that your paper conforms to them?
    \answerYes{Yes (\url{https://neurips.cc/public/EthicsGuidelines}).}
\end{enumerate}

\item If you are including theoretical results...
\begin{enumerate}
  \item Did you state the full set of assumptions of all theoretical results?
    \answerYes{See Section~\ref{sec:framework} and the relevant theorem statements; the one theorem in the main body which does not state them fully (Theorem~\ref{thm:embed_conv}) mentions the corresponding version of the theorem in the appendix (Theorem~\ref{app:thm:embed_conv_full})
    which has the full regularity conditions.}
    \item Did you include complete proofs of all theoretical results?
    \answerYes{See the appendix.}
\end{enumerate}

\item If you ran experiments...
\begin{enumerate}
  \item Did you include the code, data, and instructions needed to reproduce the main experimental results (either in the supplemental material or as a URL)?
    \answerYes{See Appendix~\ref{sec:app:exps}.}
  \item Did you specify all the training details (e.g., data splits, hyperparameters, how they were chosen)?
    \answerYes{See Section~\ref{sec:exps} and Appendix~\ref{sec:app:exps}.}
        \item Did you report error bars (e.g., with respect to the random seed after running experiments multiple times)?
    \answerYes{See Table~\ref{tab:link_results_pr} and Figure~\ref{fig:link_class_graphs}.}
    \item Did you include the total amount of compute and the type of resources used (e.g., type of GPUs, internal cluster, or cloud provider)?
    \answerYes{See Appendix~\ref{sec:app:exps}.}
\end{enumerate}

\item If you are using existing assets (e.g., code, data, models) or curating/releasing new assets...
\begin{enumerate}
  \item If your work uses existing assets, did you cite the creators?
    \answerYes{See Section~\ref{sec:exps:methods} and Appendix~\ref{sec:app:exps}.}
  \item Did you mention the license of the assets?
    \answerYes{See Appendix~\ref{sec:app:exps}.}
  \item Did you include any new assets either in the supplemental material or as a URL?
    \answerNA{No new assets were created.}
  \item Did you discuss whether and how consent was obtained from people whose data you're using/curating?
    \answerNA{Standard datasets were used, which do not contain information about/curated from individuals.}
  \item Did you discuss whether the data you are using/curating contains personally identifiable information or offensive content?
    \answerNA{Datsets used were standard citation network datasets with no identifiable/textual information provided.}
\end{enumerate}

\item If you used crowdsourcing or conducted research with human subjects...
\begin{enumerate}
  \item Did you include the full text of instructions given to participants and screenshots, if applicable?
    \answerNA{No crowdsourcing/human subjects were used.}
  \item Did you describe any potential participant risks, with links to Institutional Review Board (IRB) approvals, if applicable?
    \answerNA{No crowdsourcing/human subjects were used.}
  \item Did you include the estimated hourly wage paid to participants and the total amount spent on participant compensation?
    \answerNA{No crowdsourcing/human subjects were used.}
\end{enumerate}

\end{enumerate}

\newpage

\appendix
\section{Introduction to exchangeable graph or graphon models}
\label{sec:app:graphon}

In the following discussion, we assume all graphs mentioned are undirected and have no self loops. A \emph{graphon model} refers to a probabilistic model on a graph on a countable set $\mcV \subseteq \mathbb{N}$, defined via a \emph{graphon}, which we define as a symmetric measurable function $W: [0, 1]^2 \to [0, 1]$. To define the law of the graph, for each vertex $u \in \mcV$, we assign an independent latent variable $\lambda_u \sim \mathrm{Unif}[0, 1]$, and then assign edges independently according to the law
\begin{equation}
    a_{uv} | \lambda_u, \lambda_v \sim \mathrm{Bernoulli}\big( W(\lambda_u, \lambda_v) \big)
\end{equation}
independently for $u < v$, and then setting $a_{vu} = a_{uv}$ for $u > v$. 

The Aldous-Hoover theorem \parencite{aldous_representations_1981} then gives the following equivalence between probabilistic models of a graph on a vertex set $\mcV = \mathbb{N}$:
\begin{enumerate}
    \item The law of the adjacency matrix is invariant to joint permutations of its rows and columns; in other words, for any permutation $\tau \in \mathrm{Sym}(\mcV)$ we have that $(a_{uv})_{u,v} \stackrel{d}{=} (a_{\tau(u) \tau(v) })_{u,v}$.
    \item There exists a graphon $W$ such that the law of the adjacency matrix is equivalent to a graphon model with graphon $W$.
\end{enumerate}
The presentation above choosing the latent distribution of the vertices to be uniform is a canonical one, but can be generalized. If we instead assign latent variables $\bm{\lambda}_u \iid Q$ for some probability distribution $Q \subseteq \mathbb{R}^p$ and assign edges $a_{uv} = 1$ independently with probability $\bm{W}(\bm{\lambda}_u, \bm{\lambda}_v)$ for some symmetric function $\bm{W}$, then the law of the graph is still exchangeable, and hence equivalent to a graphon model as presented above. 

One special case of a graphon model is known as a stochastic block model (SBM) \parencite{holland_stochastic_1983}. The typical formulation of a SBM defines a probabilistic model on a network given a number of communities $\kappa$, a probability distribution $(\pi_i)_{i \in [\kappa]}$ on $[\kappa]$, and a symmetric matrix $P \in [0, 1]^{\kappa \times \kappa}$. For $u \in \mcV$, we assign a community $C(u) \in [k]$ 
with probability
\begin{equation}
    \mathbb{P}\big( C(u) = j \big) = \pi_j \text{ for } j \in [\kappa]
\end{equation}
independently for each $u \in \mcV$. Conditional on these 
assignments, we then form the adjacency matrix of the network
via connecting vertices independently with probability
\begin{equation}
    \mathbb{P}\big( a_{uv} = 1 \,|\, C(u), C(v) \big) = P_{C(u), C(v)} = e_{C(u)}^T P e_{C(v)}
\end{equation}
where $e_i \in \mathbb{R}^\kappa$ denotes the $i$-th unit vector in $\mathbb{R}^\kappa$. This can be defined as a graphon model as follows: forming a partition of $[0, 1]$, say $(A_i)_{i \in [\kappa]}$, for
which $|A_i| = \pi_i$ for $i \in [k]$, then we can define a graphon model by using latent variables $\lambda_u \iid \mathrm{Unif}[0, 1]$ and a graphon function
\begin{equation}
    W(u, v) = P_{C(u), C(v)} \text{ for } u, v \in [0, 1] \qquad \text{ where }
    C(u) = j \text{ if } u \in A_j.
\end{equation}
The law of the above graphon model is then identical to that of the SBM defined with $\pi$ and $P$. Such graphons are sometimes referred to as \emph{stepfunctions}, which are
graphons $W$ which are piecewise constant on a partition $\mcP \times \mcP$
of $[0, 1]^2$, where $\mcP$ is a partition of $[0, 1]$.

As presented, graphon models have some shortcomings. For example, by taking expectations, if we have a graphon model on a vertex set $\mcV_n = [n]$, the average number of edges will be $n^2 \int_0^1 \int_0^1 W(x, y) \, dx dy$. This means that graphon models give rise to \emph{dense} graphs, which is not a realistic assumption for naturally occurring networks. One way of accounting for this, particularly when considering sequences of graphs, is to consider the sequence of graphs $\mcG_n$ on $\mcV_n = [n]$ where, for each $n$, the generating graphon used is $W_n = \rho_n W$ where $W$ is a graphon function, and $\rho_n$ is a sparsifying sequence for which $\rho_n \to 0$ as $n \to \infty$. Such graphs are referred to as \emph{sparisified graphon models}.
\section{Expanded derivations from Sections~\ref{sec:intro}~and~\ref{sec:framework}}
\label{sec:app:expand}

\subsection{Stochastic gradient descent and empirical risk minimization}
\label{sec:app:sgd_erm}

We begin with considering the gradient updates of the form 
\begin{align}
    \label{app:emp_risk_step}
    \omega_u \leftarrow \omega_u - \eta \nabla_{\omega_u} \mcL \quad \text{ where } \quad 
    \mcL = \sum_{(i, j) \in \mcP } \ell_{\mcP}( \langle \omega_i, \omega_j \rangle ) + \sum_{(i, j) \in \mcN } \ell_{\mcN}( \langle \omega_i, \omega_j \rangle )
\end{align}
performed at each iteration of stochastic gradient descent, where $\eta > 0$ is a sequence of step sizes, and $\mcP$ and $\mcN$ are random subsets of $\mcV \times \mcV$ formed at every iteration of stochastic gradient descent. Note that we can equivalently write this as
\begin{equation}
    \label{app:emp_risk_step_2}
    \mcL = \sum_{i,j} \big\{ \mathbbm{1}\big[ (i, j) \in \mcP \big] \ell_{\mcP}(\langle \omega_i, \omega_j \rangle ) + \mathbbm{1}\big[ (i, j) \in \mcN \big] \ell_{\mcN}( \langle \omega_i, \omega_j \rangle ) \big\}, 
\end{equation}
where we use indicator terms to allow for the summation to occur over all pairs $(i,j)$. 

Recall that stochastic gradient descent, as introduced in
\textcite{robbins_stochastic_1951}, works by the following principle: suppose
we have a function of the form
\begin{equation}
    F(\theta) := \mathbb{E}_{X \sim Q}\big[ f(X, \theta) \big]
\end{equation}
for some function $f : \mcX \times \Theta \to \mathbb{R}$ and distribution $Q$
on $\mcX$ according to a random variable $X$. Moreover, suppose that we have
access to an \emph{unbiased} estimator of the gradient $\nabla_{\theta} F$ of $F$, say $g(x, \theta)$, so that $\mathbb{E}_{X \sim Q}[ g(X, \theta)] = \nabla_{\theta} F$. One can then show in various settings \parencite[see e.g][]{robbins_stochastic_1951,bottou_approche_1991,bottou_optimization_2018,ghadimi_stochastic_2013,bertsekas_gradient_2000,li_convergence_2019}
that the optimization scheme
\begin{equation}
    \label{app:sgd_gradient_iterations}
    \theta_{t+1} := \theta_{t} - \eta_t g(x_t; \theta)
\end{equation}
where $x_t \iid P$ will converge to a local minima of $F(\theta)$, at
least under some conditions on the step sizes $\eta_t > 0$ and the 
curvature of $F(\theta)$ about its local minima.

Applying this to the scenario in \eqref{app:emp_risk_step_2}, we note that
at each iteration, we sample sets $\mcP \subseteq \mcV \times \mcV$ and
$\mcN \subseteq \mcV \times \mcV$ at each iteration independently across iterations, according
to some probability measures $Q_{\mcP}$ and $Q_{\mcN}$ over $\mcV \times \mcV$.
With these sets, we perform gradient updates as in \eqref{app:sgd_gradient_iterations}
\begin{equation}
    \nabla_{\omega} \mcL = \sum_{i,j} \big\{ \mathbbm{1}\big[ (i, j) \in \mcP \big] \nabla_{\omega} \ell_{\mcP}(\langle \omega_i, \omega_j \rangle ) + \mathbbm{1}\big[ (i, j) \in \mcN \big] \nabla_{\omega} \ell_{\mcN}( \langle \omega_i, \omega_j \rangle ) \big\}
\end{equation}
for each embedding vector $\omega$, which is an unbiased estimator
of $\nabla_{\omega} \mcR$ where
\begin{equation}
    \label{app:emp_risk}
    \mcR = \sum_{i, j} \big\{ \mathbb{P}\big( (i, j) \in \mcP \big) \ell_{\mcP}( \langle \omega_i, \omega_j \rangle) + \mathbb{P}\big( (i, j) \in \mcN \big) \ell_{\mcN}( \langle \omega_i, \omega_j \rangle) \big\}
\end{equation}
as a result of the fact that e.g. $\mathbb{E}[ \mathbbm{1}[(i, j) \in \mcP]] = \mathbb{P}( (i, j) \in \mcP)$. Consequently, the procedure described in
\eqref{app:emp_risk_step} attempts to minimize \eqref{app:emp_risk}.

\subsection{Embedding methods as implicit graphon learning}
\label{sec:app:embed_as_graphon_learning}

Write $\ell_P(y) = - \log \sigma(y)$ and $\ell_{\mcN}(y) = -\log \sigma(-y)$,
and moreover suppose that $\mcP$ and $\mcN$ are randomly drawn subsets
from the sets $\mcE$ and $\mcV \times \mcV \subseteq \mcE$, i.e that
\begin{equation}
    \mcP \subseteq \{ (u, v) \,:\, a_{uv} = 1 \}, \qquad 
    \mcN \subseteq \{ (u, v) \,:\, a_{uv} = 0 \}.
\end{equation}
Letting $\mcV = [n]$ for some integer $n$, note that in the model
\begin{equation}
    a_{uv} \,|\, \omega_u, \omega_v \sim \mathrm{Bernoulli}\big( \sigma( \langle \omega_u, \omega_v \rangle ) \big) \text{ independently for $u < v$}, 
\end{equation}
and setting $a_{vu} = a_{uv}$ for $v < u$, the contribution to the negative log-likelihood of a single edge $(u, v)$ is of the form
\begin{equation}
    \ell((u,v)) = - a_{uv} \log \sigma( \langle \omega_u, \omega_v \rangle) - (1 - a_{uv}) \log \{ 1 - \sigma( \langle \omega_u, \omega_v \rangle) \}.
\end{equation}
(Recall that $\sigma(-y) = 1 - \sigma(y)$ for $y \in \mathbb{R}$.) Note
that the $a_{uv}$ are jointly independent conditional on the collection
of embedding vectors. Now, if we let $\ell_\mcP(y) = -\log\sigma(y)$
and $\ell_\mcN(y) = - \log \sigma(-y)$, as $\mcP$ is a subset of $\mcE$
and $\mcN$ is a subset of $\mcV \times \mcV \setminus \mcE$, the contributions
to the stochastic loss take exactly the form specified in \eqref{eq:emp_risk_step}, as $\ell((u,v)) = \ell_\mcP(\langle \omega_u, \omega_v)$
when $a_{uv} = 1$, and $\ell((u,v)) = \ell_\mcN( \langle \omega_u, \omega_v)$
when $a_{uv} = 0$.

\subsection{Empirical risk when including regularization}
\label{sec:app:samp_reg}

We explain this using the weight decay formulation first, and then show
that one has similar reasoning when using the probabilistic modelling
approach. Note that when considering stochastic gradient iterations
to only update individual parameters at a time, weight decay is applied
per iteration to \emph{only} the parameters to be updated (as otherwise
all of the parameters will be shrunk towards zero while waiting for
the next bona-fide gradient update). Consequently, if we have a stochastic
loss 
\begin{equation}
    \mcL = \sum_{(i, j) \in \mcP } \ell_{\mcP}( \langle \omega_i, \omega_j \rangle ) + \sum_{(i, j) \in \mcN } \ell_{\mcN}( \langle \omega_i, \omega_j \rangle )
\end{equation}
and we write $\mcV(\mcP \cup \mcN)$ for the vertices which belong to either
$\mcP$ or $\mcN$, the gradient updates for any vertex $u \in \mcV(\mcP \cup \mcN)$ take the form 
\begin{equation}
    \omega_u \leftarrow \omega_u - \eta ( \nabla_{\omega_u} \mcL + 2 \xi \omega_u) = \omega_u - \eta \nabla_{\omega_u} \big[ \mcL +  \xi \| \omega_u \|_2^2 \big] 
\end{equation}
and otherwise $\omega_u$ is kept as-is, meaning the gradient updates
correspond to taking gradient updates with the stochastic loss
\begin{equation}
    \sum_{u, v} \Big\{ 1\big[ (u, v) \in \mcP \big] \ell_{\mcP}( \langle \omega_u, \omega_v \rangle ) + 1\big[ (u, v) \in \mcN \big] \ell_{\mcN}( \langle \omega_u, \omega_v \rangle ) \Big\} + \xi \sum_{u} 1\big[ u \in \mcV(\mcP \cup \mcN)\big].
\end{equation}
Consequently, following the same argument as in Appendix~\ref{sec:app:sgd_erm}
gives the form of \eqref{eq:emp_risk_reg}. In the probabilistic modelling formulation, we again note that the contributions to the negative log-likelihood in the subsampling regime should again only arise from vertices belonging to $\mcV(\mcP \cup \mcN)$, and consequently the same argument will give the form of \eqref{eq:emp_risk_reg}.

\subsection{Simplifying the risk for certain positive/negative sampling schemes}
\label{sec:app:get_to_the_risk}

We note that in the setting where $\mcP \subseteq \mcE_n$ and $\mcN \subseteq (\mcV_n \times \mcV_n) \setminus \mcE_n$, we can write $S(\mcG_n) = \mcP \cup \mcN$, and also
\begin{align*}
    \ell(y, a_{ij}) & = \ell_\mcP(y) 1[ a_{ij} = 1] + \ell_{\mcN}(y) 1[ a_{ij} = 0], \\ 
    \mathbb{P}\big( (i, j) \in S(\mcG_n) \,|\, \mcG_n) & = \mathbb{P}\big( (i, j) \in \mcP \,|\, \mcG_n) 1[ a_{ij} = 1] + \mathbb{P}\big( (i, j) \in \mcN \,|\, \mcG_n) 1[ a_{ij} = 0],
\end{align*}
and as a result, we end up obtaining \eqref{eq:framework:emp_risk}
from \eqref{eq:emp_risk_reg}.
\section{Proof of Theorem~\ref{thm:loss_conv}}
\label{sec:app:loss_conv}

We follow the style of argument given in Appendix~C of \parencite{davison_asymptotics_2021}, introducing various intermediate functions and chaining together uniform convergence bounds between these functions over sets containing the minima of both functions; consequently, we
break the proof up into several parts. Note that 
we implicitly let the embedding dimension $d$ 
depend on $n$ throughout. 

Before giving the proof, we state some results from \parencite{davison_asymptotics_2021} which will be used in the following proof.

\begin{prop}[Appendix~C of \parencite{davison_asymptotics_2021}]
    \label{thm:app:loss:bounds}
    Suppose that Assumptions~\ref{assume:slc}~and~\ref{assume:reg} hold. Then we have the following:
    \begin{enumerate}[label=\roman*), leftmargin=*]
        \item (Theorem~30 and Lemma~41 of \parencite{davison_asymptotics_2021}) For the functions
        \begin{align*}
            \widehat{\mcR}_n(\bmomega) &:= \frac{1}{n^2} \sum_{i, j \in [n], i \neq j} f_n(\lambda_i, \lambda_j, a_{ij}) \ell( \langle \omega_i, \omega_j \rangle , a_{ij} ),  \\ 
            \mathbb{E}[ \widehat{\mcR}_n(\bmomega) \,|\, \bm{\lambda}_n ] & := \frac{1}{n^2} \sum_{i, j \in [n], i \neq j} \sum_{x \in \{0, 1\} } \tilde{f}_n(\lambda_i, \lambda_j, x) \ell( \langle \omega_i, \omega_j \rangle, x),
        \end{align*}
        we have that 
        \begin{equation*}
                \sup_{ \bmomega \in ([-A, A]^{d } )^n } \Big| \widehat{\mcR}_n(\bmomega) - \mathbb{E}[ \widehat{\mcR}_n(\bmomega) \,|\, \bm{\lambda}_n ] \Big| = O_p\Big( \frac{ d^{q+1/2}    }{ (n\rho_n)^{1/2}   } \Big).               
        \end{equation*}
        \item (Lemma~37 of \parencite{davison_asymptotics_2021}) For the functions
        \begin{align*}
            \mathbb{E}[ \widehat{\mcR}_n^{\mcP_n}( \bmomega) \,|\, \bm{\lambda}_n  ] & := \frac{1}{n^2} \sum_{i, j \in [n], i \neq j } \sum_{x \in \{0, 1\}} \mcP_n^{\otimes 2}[ \tilde{f}_{n, x}](\lambda_i, \lambda_j) \ell( \langle \omega_i, \omega_j \rangle, x), \\ 
            \mathbb{E}[ \widehat{\mcR}^{\mcP_n}_{n, (1)}(\bmomega) \,|\, \bm{\lambda}_n ] & := \frac{1}{n^2} \sum_{i, j \in [n]} \sum_{x \in \{0, 1\}} \mcP_n^{\otimes 2}[ \tilde{f}_{n, x} ](\lambda_i, \lambda_j) \ell( \langle \omega_i, \omega_j \rangle, x)
        \end{align*}
        where $\mcP_n^{\otimes 2}[\cdot]$ is the stepping operator defined in Appendix~\ref{sec:app:loss:sbm}, we have that
        \begin{equation*}
            \sup_{\bmomega \in ([-A, A]^{d} )^n } \big|   \mathbb{E}[ \widehat{\mcR}^{\mcP_n}_{n}(\bmomega)  \,|\, \bm{\lambda}_n ] - \mathbb{E}[ \widehat{\mcR}^{\mcP_n}_{n, (1)}(\bmomega) \,|\, \bm{\lambda}_n ]  \big| = O_p\Big( \frac{d^q}{n} \Big).
        \end{equation*}
        \item (Lemma~42 and Proposition~44 of \parencite{davison_asymptotics_2021}) Suppose that $X \sim \mathrm{Multinomial}(n; p)$ where
        $p = (p_i)_{i \in [M]}$ for some $M > 0$, where the $p_i > 0$
        and $\sum_{i=1}^M p_i = 1$. Then we have that 
        \begin{equation*}
            \max_{l \in [M]} \big| \frac{n^{-1} X_l - p_l}{p_l} \big|, \max_{l, l' \in [M]} \big| \frac{n^{-2} X_l X_{l'}- p_l p_{l'}}{p_l p_{l'}} \big| = O_p\Big(  \Big(  \frac{ \log M}{n \min_{l} p_l}  \Big)^{1/2}  \Big). 
        \end{equation*}
    \end{enumerate}
\end{prop}

We also require the following lemmas, whose proof are deferred to
Appendix~\ref{sec:app:loss:useful_lemma}.

\begin{lemma} \label{app:loss:relative_convergence}
    For each $n \in \mathbb{N}$, suppose we have a compact set $\Theta_n \subseteq \mathbb{R}^{p(n)}$ for some $p(n)$ with $0 \in \Theta_n$. Moreover, suppose we have non-negative random variables $c_{ijx}^{(n)}$, $\tilde{c}_{ijx}^{(n)}$ for $i, j \in [k(n)]$ and $x \in \{0, 1\}$, and $c_{i}^{(n)}$, $\tilde{c}_{i}^{(n)}$ for $i \in [\kappa(n)]$, which satisfy the conditions
    \begin{gather*}
      \max_{i, j, x} \Big| \frac{ c_{ijx}^{(n)} - \tilde{c}_{ijx}^{(n)} }{ c_{ijx}^{(n)} } \Big| = O_p(r_n), \quad  \max_{i} \Big| \frac{ c_{i}^{(n)} - \tilde{c}_{i}^{(n)} }{ c_{i}^{(n)} } \Big| = O_p(r_n), \\ 
      \sum_{i, j, x} c_{ijx}^{(n)} = O_p(1), \quad \sum_i c_i^{(n)} = O_p(1),
    \end{gather*}
    for some non-negative sequence $r_n \to 0$, where in the above ratios we interpret $0/0 = 1$. Define non-negative continuous functions $\ell_{ijx}^{(n)}, \ell_i^{(n)} : \Theta_n \to \mathbb{R}$ such that $\ell_{ijx}^{(n)}(0), \ell_i^{(n)}(0) \leq C$ for each $i, j \in [k(n)]$, $x \in \{0, 1\}$ and $n \in \mathbb{N}$. Finally, define the functions 
    \begin{align*}
      G_n(\theta) = \sum_{i, j, x} c_{ijx}^{(n)} \ell^{(n)}_{ijx}(\theta)  + \sum_{i} c_{i}^{(n)} \ell_i^{(n)}(\theta), \qquad \widetilde{G}_n(\theta) = \sum_{i, j, x} \tilde{c}_{ijx}^{(n)} \ell^{(n)}_{ijx}(\theta)  + \sum_{i} \tilde{c}_{i}^{(n)} \ell_i^{(n)}(\theta)
    \end{align*}
    for $\theta \in \Theta_n$. Then there exists a sequence of non-empty random measurable sets $\Psi_n$ such that 
    \begin{equation*}
      \mathbb{P}\Big( \argmin_{\theta_n \in \Theta_n} G_n(\theta_n) \cup \argmin_{\theta_n \in \Theta_n} \widetilde{G}_n( \theta_n ) \subseteq \Psi_n \Big) \to 1, \qquad \sup_{ \theta_n \in \Psi_n} \big| G_n( \theta_n ) - \widetilde{G}_n( \theta_n ) \big| = O_p(r_n).
    \end{equation*}
  \end{lemma}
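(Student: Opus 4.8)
The plan is to reduce the two-sided comparison of $G_n$ and $\widetilde{G}_n$ to a single multiplicative pointwise bound, and then to localise that bound to a sublevel set of $G_n$. First I would set
\[
  \epsilon_n := \max\Big\{ \max_{i,j,x} \Big| \frac{ c_{ijx}^{(n)} - \tilde{c}_{ijx}^{(n)} }{ c_{ijx}^{(n)} } \Big|,\ \max_{i} \Big| \frac{ c_i^{(n)} - \tilde{c}_i^{(n)} }{ c_i^{(n)} } \Big| \Big\} = O_p(r_n),
\]
the maximum being over indices with non-zero $c^{(n)}$ (by the $0/0 = 1$ convention the remaining terms are immaterial, and the hypotheses force $c^{(n)}_{ijx} = 0 \Rightarrow \tilde{c}^{(n)}_{ijx} = 0$, likewise for the $c^{(n)}_i$, so such indices drop out of both $G_n$ and $\widetilde{G}_n$). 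Since every $c^{(n)}$, $\tilde{c}^{(n)}$ and $\ell^{(n)}$ is non-negative, $|c^{(n)}_{ijx} - \tilde{c}^{(n)}_{ijx}| \le \epsilon_n c^{(n)}_{ijx}$ and $|c^{(n)}_i - \tilde{c}^{(n)}_i| \le \epsilon_n c^{(n)}_i$, and summing against the $\ell^{(n)}(\theta) \ge 0$ gives, simultaneously over $\theta \in \Theta_n$,
\[
  \big| \widetilde{G}_n(\theta) - G_n(\theta) \big| \ \le\ \epsilon_n\, G_n(\theta).
\]
In particular $(1 - \epsilon_n)G_n \le \widetilde{G}_n \le (1 + \epsilon_n) G_n$, so on the event $\{\epsilon_n < 1/2\}$, which has probability tending to one since $\epsilon_n = O_p(r_n)$ and $r_n \to 0$, we have $G_n(\theta) \le 2\widetilde{G}_n(\theta)$ for all $\theta$.

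Next I would control the objectives at the feasible point $0 \in \Theta_n$: since $\ell^{(n)}_{ijx}(0), \ell^{(n)}_i(0) \le C$,
\[
  G_n(0) \ \le\ C\Big( \sum_{i,j,x} c^{(n)}_{ijx} + \sum_i c^{(n)}_i \Big) \ =\ O_p(1)
\]
by the two summability hypotheses, whence also $\widetilde{G}_n(0) \le (1+\epsilon_n)G_n(0) = O_p(1)$; as $G_n, \widetilde{G}_n \ge 0$, minimisers of $G_n$ (resp. $\widetilde{G}_n$) have value at most $G_n(0)$ (resp. $\widetilde{G}_n(0)$). I would then take $Y_n := 2(G_n(0) + \widetilde{G}_n(0)) + 1 = O_p(1)$ and define the random set $\Psi_n := \{\, \theta \in \Theta_n : G_n(\theta) \le Y_n \,\}$, which is non-empty ($0 \in \Psi_n$) and measurable (continuity of $\theta \mapsto G_n(\theta)$). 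Any $\theta^\star \in \argmin G_n$ has $G_n(\theta^\star) \le G_n(0) \le Y_n$, so $\theta^\star \in \Psi_n$ deterministically; and on $\{\epsilon_n < 1/2\}$ any $\tilde{\theta}^\star \in \argmin \widetilde{G}_n$ has $G_n(\tilde{\theta}^\star) \le 2\widetilde{G}_n(\tilde{\theta}^\star) \le 2\widetilde{G}_n(0) \le Y_n$, so $\tilde{\theta}^\star \in \Psi_n$ there. Intersecting the finitely many probability-$\to 1$ events gives $\mathbb{P}(\argmin G_n \cup \argmin \widetilde{G}_n \subseteq \Psi_n) \to 1$, and then by the pointwise bound and the definition of $\Psi_n$,
\[
  \sup_{\theta \in \Psi_n} \big| G_n(\theta) - \widetilde{G}_n(\theta) \big| \ \le\ \epsilon_n \sup_{\theta \in \Psi_n} G_n(\theta) \ \le\ \epsilon_n Y_n \ =\ O_p(r_n),
\]
which is the second claim.

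The only conceptual point — rather than a genuine obstacle — is that the estimate $|\widetilde{G}_n - G_n| \le \epsilon_n G_n$ is useless on all of $\Theta_n$, where $G_n$ may be arbitrarily large, so it must be restricted to a region where $G_n = O_p(1)$; the subtlety is that the sublevel threshold $Y_n$ has to be taken \emph{random}, anchored to the objective values at a fixed point of $\Theta_n$, so that it is at once $O_p(1)$ and provably large enough to retain both sets of minimisers — the minimisers of $\widetilde{G}_n$ being captured only through the two-sided comparison available on $\{\epsilon_n < 1/2\}$. Everything else is clerical: the $0/0 = 1$ bookkeeping above, the non-emptiness/measurability of $\Psi_n$, and composing the probability-$\to 1$ events.
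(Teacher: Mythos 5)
Your proof is correct and follows essentially the same route as the paper's: anchor a sublevel set of $G_n$ to its value at the feasible point $0$ (which is $O_p(1)$ by the summability hypotheses), use the uniform multiplicative closeness of the coefficients to show the minimisers of $\widetilde{G}_n$ also land in that set with probability tending to one, and then bound $\sup_{\Psi_n}|G_n - \widetilde{G}_n| \le \epsilon_n \sup_{\Psi_n} G_n = O_p(r_n)$. The only cosmetic difference is that the paper phrases the capture step as an inclusion $\widetilde{\Psi}_n \subseteq \Psi_n$ between two explicit sublevel sets with thresholds $C\sum \tilde c$ and $2C\sum c$, whereas you verify directly that each minimiser of $\widetilde{G}_n$ satisfies the $G_n$-threshold; the substance is identical.
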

  
    We note that the condition that $c_{ijx}^{(n)} = (1 + O_p(r_n)) \tilde{c}_{ijx}^{(n)}$ holds uniformly over all $i, j, x$ implies that 
    \begin{equation*}
      \sum_{i, j, x} c_{ijx}^{(n)} = O_p(1) \implies \sum_{i, j, x} \tilde{c}_{ijx}^{(n)} = O_p(1) 
    \end{equation*}
    and so it suffices for either $\sum_{ijx} c_{ijx}^{(n)} = O_p(1)$ or $\sum_{ijx} \tilde{c}_{ijx}^{(n)} = O_p(1)$ to hold, and similarly either $\sum_i c_i^{(n)} = O_p(1)$ or $\sum_{i} \tilde{c}_i^{(n)} = O_p(1)$.  

  \begin{lemma} \label{app:loss:relative_convergence_integral}
    For each $n \in \mathbb{N}$, suppose we have a compact set $\Theta_n \subseteq \mathbb{R}^{p(n)}$ for some $p(n)$ with $0 \in \Theta_n$. Moreover, suppose we have non-negative functions $a_{n, x}, \tilde{a}_{n, x} : [0, 1]^2 \to \mathbb{R}$ and $b_n, \tilde{b}_n : [0, 1] \to \mathbb{R}$ for $n \in \mathbb{N}$, $x \in \{0, 1\}$, such that 
    \begin{gather*}
      \max_{x} \Big\| \frac{ a_{n, x} - \tilde{a}_{n, x} }{  a_{n, x} } \Big\|_{\infty} = O(r_n), \quad \Big\| \frac{ b_n - \tilde{b}_n }{ b_n} \Big\|_{\infty} = O(r_n), \\ 
      \intsq a_{n, x} \, dl dl' = O(1), \quad \int_{[0, 1]} b_n \, dl = O(1) 
    \end{gather*}
    for some non-negative sequence $r_n \to 0$. Define non-negative continuous functions $\ell_{x} : \mathbb{R} \to \mathbb{R}$ for $x \in \{0, 1\}$, along with the functions 
    \begin{align*}
      G_n(\eta) = \intsq \sum_x a_{n, x}(l, l') \ell_x( \langle \eta(l), \eta(l') \rangle ) \, dl dl' + \int_{[0, 1]} b_n(l) \| \eta(l) \|_2^2 \, dl, \\
      \widetilde{G}_n(\eta) = \intsq \sum_x \tilde{a}_{n, x}(l, l') \ell_x( \langle \eta(l), \eta(l') \rangle ) \, dl dl' + \int_{[0, 1]} \tilde{b}_n(l) \| \eta(l) \|_2^2 \, dl
    \end{align*}
    defined over functions $\eta: [0, 1] \to \Theta_n$. For any fixed constant $C > 1$, define the set
    \begin{equation*}
      \Psi_n := \big\{ \eta \,:\, G_n(\eta) \leq C G_n(0) \text{ or } \widetilde{G}_n(\eta) \leq C \widetilde{G}_n(0) \big\}. 
    \end{equation*}
    Provided there exist minima to the functionals $G_n(\eta)$ and $\widetilde{G}_n(\eta)$, we have that
    \begin{equation*}
        \argmin_{\eta} G_n(\eta) \cup \argmin_{\eta} \widetilde{G}_n(\eta) \subseteq \Psi_n \qquad \text{ and } \qquad
      \sup_{\eta \in \Psi_n } \big| G_n(\eta) - \widetilde{G}_n(\eta) \big| = O(r_n).
    \end{equation*}
\end{lemma}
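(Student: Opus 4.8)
The plan is to mirror the proof of Lemma~\ref{app:loss:relative_convergence}, exploiting that here $\Psi_n$ is defined so as to already contain the minima of both functionals, so no passage between two candidate sublevel sets is needed. First I would record that $\eta \equiv 0$ makes every inner product $\langle \eta(l), \eta(l') \rangle$ and norm $\|\eta(l)\|_2^2$ vanish, so $G_n(0) = \sum_x \ell_x(0) \intsq a_{n,x}(l,l') \dldl$ and $\widetilde G_n(0) = \sum_x \ell_x(0) \intsq \tilde a_{n,x}(l,l') \dldl$; by continuity of the $\ell_x$ at $0$ together with the $L^1$ bound $\intsq a_{n,x} \dldl = O(1)$ (and $\intsq \tilde a_{n,x} \dldl \le (1 + O(r_n)) \intsq a_{n,x} \dldl$, a consequence of the relative bound), both $G_n(0)$ and $\widetilde G_n(0)$ are $O(1)$. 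The inclusion $\argmin_\eta G_n(\eta) \cup \argmin_\eta \widetilde G_n(\eta) \subseteq \Psi_n$ is then immediate, since a minimizer $\eta^\star$ of $G_n$ satisfies $G_n(\eta^\star) \le G_n(0) \le C\, G_n(0)$ as $C > 1$ and $G_n(0) \ge 0$, and symmetrically for $\widetilde G_n$.

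Next I would establish the pointwise estimate $|G_n(\eta) - \widetilde G_n(\eta)| \le O(r_n)\, G_n(\eta)$. Splitting the difference into the $a$-part and the $b$-part and pulling out the suprema of the relative perturbations,
\begin{align*}
  |G_n(\eta) - \widetilde G_n(\eta)| &\le \max_x \Big\| \tfrac{a_{n,x} - \tilde a_{n,x}}{a_{n,x}} \Big\|_\infty \intsq \sum_x a_{n,x}(l,l') \ell_x(\langle \eta(l), \eta(l') \rangle) \dldl \\
  &\quad + \Big\| \tfrac{b_n - \tilde b_n}{b_n} \Big\|_\infty \int_{[0,1]} b_n(l) \|\eta(l)\|_2^2 \, dl \;\le\; O(r_n)\, G_n(\eta),
\end{align*}
using non-negativity of the $\ell_x$, $a_{n,x}$ and $b_n$ to bound each integral by $G_n(\eta)$; points where $a_{n,x}$ (resp.\ $b_n$) vanishes contribute nothing, consistent with the $0/0 = 1$ convention, because the relative bound forces $\tilde a_{n,x}$ (resp.\ $\tilde b_n$) to vanish there as well. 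Finiteness of $G_n(\eta)$ throughout is automatic since $\Theta_n$ is compact and the $\ell_x$ continuous, so $\ell_x(\langle \eta(l), \eta(l')\rangle)$ and $\|\eta(l)\|_2^2$ are uniformly bounded, while $a_{n,x}$ and $b_n$ are integrable.

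It then remains to show $\sup_{\eta \in \Psi_n} G_n(\eta) = O(1)$, which is the only step needing real care, since one must treat both branches defining $\Psi_n$. If $\eta \in \Psi_n$ via $G_n(\eta) \le C\, G_n(0)$, this is immediate from $G_n(0) = O(1)$. If instead $\widetilde G_n(\eta) \le C\, \widetilde G_n(0)$, I would invert the relative bounds: for $n$ large enough $a_{n,x} \le (1 + O(r_n)) \tilde a_{n,x}$ and $b_n \le (1 + O(r_n)) \tilde b_n$ pointwise, hence $G_n(\eta) \le (1 + O(r_n)) \widetilde G_n(\eta) \le (1 + O(r_n)) C\, \widetilde G_n(0) = O(1)$. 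Combining this uniform bound with the pointwise estimate from the previous paragraph gives $\sup_{\eta \in \Psi_n} |G_n(\eta) - \widetilde G_n(\eta)| \le O(r_n) \cdot O(1) = O(r_n)$, which is in particular $O_p(r_n)$, completing the argument. I do not anticipate any genuine obstacle here; the points to get right are the two-branch case analysis for the uniform bound on $G_n$ over $\Psi_n$ and the $O(1)$ bounds at $\eta \equiv 0$ deduced from the $L^1$ hypotheses.
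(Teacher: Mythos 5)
Your proposal is correct and follows essentially the same route as the paper, which simply defers to the argument of Lemma~\ref{app:loss:relative_convergence}: show both functionals are $O(1)$ at $\eta \equiv 0$, deduce the inclusion of the minimizers in $\Psi_n$, bound $\sup_{\eta \in \Psi_n} G_n(\eta) = O(1)$ via the two-branch case analysis (inverting the relative bounds on the second branch), and conclude with the pointwise estimate $|G_n(\eta) - \widetilde{G}_n(\eta)| \leq O(r_n) G_n(\eta)$. Your observation that the ``or'' in the definition of $\Psi_n$ removes the need for the high-probability containment step of the discrete lemma is accurate, and the resulting bound is in fact deterministic $O(r_n)$, which implies the stated $O_p(r_n)$.
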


We now begin with the proof of Theorem~\ref{thm:loss_conv}. Throughout, we
understand that an exponent $p$ depends on the choice of loss function,
with $q = 1$ for the cross-entropy loss, and $q = 2$ for the
squared loss; these will then give rise to the values of $p$ in the
exponents within the theorem statement.

\subsection{Replacing the sampling probabilities}

To begin, let 
\begin{equation}
  \widehat{\mcR}_n(\bmomega) := \frac{1}{n^2} \sum_{i, j \in [n], i \neq j} f_n(\lambda_i, \lambda_j, a_{ij}) \ell( \langle \omega_i, \omega_j \rangle , a_{ij} ), \; \widehat{\mcR}_n^{\text{reg}}(\bmomega) := \frac{1}{n} \sum_{i \in [n] } \tilde{g}_n(\lambda_i) \| \omega_i \|_2^2.
\end{equation}
We then note that by applying Lemma~\ref{app:loss:relative_convergence} with
\begin{itemize}[leftmargin=*]
  \item $\Theta^{(n)} = ([-A, A]^{d })^n$, $\theta_n = (\omega_1, \ldots, \omega_n)$ with $\omega_i \in [-A, A]^{d}$;
  \item $c_{ijx}^{(n)} = \mathbb{P}( (i, j) \in S(\mcG_n) \,|\, \mcG_n) \cdot 1[ a_{ij} = x ]$ for $i \neq j$ and $0$ otherwise; $\tilde{c}_{ijx}^{(n)} = n^{-2} f_n(\lambda_i, \lambda_j, x ) 1[ a_{ij} = x]$ for $i \neq j$ and $0$ otherwise (so $\sum_{ijx} \tilde{c}_{ijx}^{(n)} = O_p(1)$ by Markov's inequality and Assumption~\ref{assume:slc});
  \item $c_i^{(n)} = \mathbb{P}( i \in \mcV(S(\mcG_n)) \,|\, \mcG_n)$, $\tilde{c}_i^{(n)} = n^{-1} \tilde{g}_n(\lambda_i)$ (so $\sum_i c_i^{(n)} = O_p(1)$ by Markov's inequality and Assumption~\ref{assume:slc});
  \item $\ell_{ijx} = \ell(\langle \omega_i, \omega_j \rangle, x)$, $r_n = s_n$;
\end{itemize}
and so there exists a sequence of sets $\Psi_n^{(1)}$, containing the minima of both $\mcR_n(\bmomega) + \xi_n \mcR_n^{\text{reg}}(\bmomega)$ and $\widehat{\mcR}_n(\bmomega) + \xi_n \widehat{\mcR}_n^{\text{reg}}(\bmomega)$ with asymptotic probability one, such that 
\begin{equation}
  \label{loss_thm:bound_1}
  \sup_{\bmomega \in \Psi_n^{(1)} } \Big| \big\{ \mcR_n(\bmomega) + \xi_n \mcR_n^{\text{reg}}(\bmomega) \big\} - \big\{ \widehat{\mcR}_n(\bmomega) + \xi_n \widehat{\mcR}_n^{\text{reg}}(\bmomega) \big\} \Big| = O_p( s_n ).
\end{equation}

\subsection{Averaging over the adjacency structure}

We now want to work with the version of the loss averaged over the realizations of the adjacency matrix of the graph $\mcG_n$, and so we introduce the function (writing $\bm{\lambda}_n = (\lambda_1, \ldots, \lambda_n)$)
\begin{equation}
  \mathbb{E}[ \widehat{\mcR}_n(\bmomega) \,|\, \bm{\lambda}_n ] := \frac{1}{n^2} \sum_{i, j \in [n], i \neq j} \sum_{x \in \{0, 1\} } \tilde{f}_n(\lambda_i, \lambda_j, x) \ell( \langle \omega_i, \omega_j \rangle, x).
\end{equation}
By Proposition~\ref{thm:app:loss:bounds}a), we have that 
\begin{equation}
  \label{loss_thm:bound_2}
  \begin{aligned}
  \sup_{ \bmomega \in ([-A, A]^{d } )^n } \Big| \big\{ \widehat{\mcR}_n(\bmomega)  & + \xi_n \widehat{\mcR}_n^{\text{reg}}(\bmomega) \big\} \\
  & - \big\{ \mathbb{E}[ \widehat{\mcR}_n(\bmomega) \,|\, \bm{\lambda}_n ] + \xi_n \widehat{\mcR}_n^{\text{reg}}(\bmomega) \big\} \Big| = O_p\Big( \frac{ d^{q+1/2}    }{ (n\rho_n)^{1/2}   } \Big).
  \end{aligned}
\end{equation}

\begin{remark}
    \label{app:rmk:tighter_bounds}
    This remark can be skipped on a first reading of the theorem proof.
    Here, we discuss how we can obtain tighter
    bounds when imposing the additional constraint
\begin{equation*}
    B^{\infty}_{n, d}(A_2) := \{ \bmomega \in ((\mathbb{R}^d)^n ) \,:\, \Omega_{ij} = B(\omega_i, \omega_j) \text{ satisfies } \| \Omega \|_{\infty} \leq A_2 \}
\end{equation*}
to the domain of optimization of the embedding vectors $\bmomega$ is
imposed. This is particularly natural when considering the squared
loss, which corresponds to optimizing the risk when averaging $(a_{ij} - \langle \omega_i, \omega_j \rangle)^2$ over all pairs $(i, j)$; as a graphon is bounded in $[0, 1]$, there is no
need for $\langle \omega_i, \omega_j \rangle$ to be outside of the
range $[0, 1]$ either. With the understanding that in this remark,
we write $\Omega_{ij} = \langle \omega_i, \omega_j \rangle$ for the
gram matrix of the embedding vectors, we define the sets 
\begin{align*}
    Z_{n, d}(A_1) & := \{ \Omega \in \mathbb{R}^{n \times n} \,:\, \Omega_{ij} = \langle \omega_i, \omega_j \rangle, \| \omega_i \|_{\infty} \leq A_1 \},\\
    Z_{n}^{\infty}(A_2) & := \{ \Omega \in \mathbb{R}^{n \times n} \,:\,
    \max_{i, j} | \Omega_{i,j} | \leq A_2 \}
\end{align*}
for the constraint set placed directly on the induced matrix $\Omega$.

We now highlight that in the proof of Theorem~30 of \parencite{davison_asymptotics_2021} (from which the bound just prior to the remark follows from), one looks to bound 
the variance term
\begin{align*}
    v(\bm{A}_n \,|\, \bm{\lambda}_n ) & \leq \frac{1}{n^2} \Bigg\{ \frac{1}{n^2} \sum_{i \neq j} f_n(\lambda_i, \lambda_j, a_{ij})^2 \big( \ell(\Omega_{ij}, a_{ij}) - \ell(\widetilde{\Omega}_{ij}, a_{ij})  \big)^2  \\ & \qquad \qquad \qquad +  \frac{1}{n^2} \sum_{i \neq j} \mathbb{E}\Big[ f_n(\lambda_i, \lambda_j, a_{ij} )^2 \big( \ell(\Omega_{ij}, a_{ij}) - \ell(\widetilde{\Omega}_{ij}, a_{ij})  \big)^2 \,|\, \bm{\lambda}_n \Big] \Bigg\} 
\end{align*}
by some metric distance between $\Omega$ and $\widetilde{\Omega}$.
To proceed, we use the alternative bound 
\begingroup
\allowdisplaybreaks
\begin{align*}
    v(\bm{A}_n \,|\, \bm{\lambda}_n ) & \leq \frac{1}{n^2} \Bigg\{ \frac{1}{n^2} \sum_{i \neq j} f_n(\lambda_i, \lambda_j, a_{ij})^2 \big( \ell(\Omega_{ij}, a_{ij}) - \ell(\widetilde{\Omega}_{ij}, a_{ij})  \big)^2  \\ & \qquad \qquad \qquad +  \frac{1}{n^2} \sum_{i \neq j} \mathbb{E}\Big[ f_n(\lambda_i, \lambda_j, a_{ij} )^2 \big( \ell(\Omega_{ij}, a_{ij}) - \ell(\widetilde{\Omega}_{ij}, a_{ij})  \big)^2 \,|\, \bm{\lambda}_n \Big] \Bigg\} \\ 
    & \leq \frac{2}{n^4} \Bigg\{ \max_{i, j} f_n(\lambda_i, \lambda_j, a_{ij})^2 \Bigg\} \cdot \losslipconst \max_{i,j} \{ |\Omega_{ij}|, | \widetilde{\Omega}_{ij} | \}^{p-1} \sum_{i,j} \big( \Omega_{ij} - \widetilde{\Omega}_{ij} \big)^2 \\ 
    & \leq \frac{ 2\losslipconst A_{2}^{q-1} }{n^4} \Bigg\{ \max_{i, j,x} f_n(\lambda_i, \lambda_j, x)^2 \Bigg\} \cdot \| \Omega - \widetilde{\Omega} \|_F^2 
\end{align*}
\endgroup
where $\| \cdot \|_F$ denotes the Frobenius norm of a matrix, and we note that for the cross-entropy loss (with $q = 1$) and the squared loss (with $q = 2$) we can write 
\begin{equation*}
    | \ell(y, x) - \ell(y', x) | \leq \losslipconst \max\{ |y|, |y'| \}^{q-1} |y - y' |
\end{equation*}
for a Lipschitz constant $\losslipconst$. We now note
that we can contain $Z_{n}^{\infty}(A_2) \cap Z_{n, d}(A_1)$ within the set
\begin{equation*}
    Z^F_{n, d}(n A_2) := \big\{  \Omega \in \mathbb{R}^{n \times n} \,:\, \Omega \text{ is of rank $\leq d$, } \| \Omega \|_F \leq n A_2 \}
\end{equation*}
We note that with respect to the Frobenius norm, this set has covering
number
\begin{equation*}
    N( Z^F_{n, d}(n A_2), \| \cdot \|_F, \epsilon) \leq \Big( \frac{ C n A_2 }{\epsilon}  \Big)^{2nd}
\end{equation*}
for some absolute constant $C > 0$, and therefore by a similar argument to Lemma~41 in \parencite{davison_asymptotics_2021}, it will be possible to
conclude that $\gamma_2( Z^F_{n, d}(n A_2), \| \cdot \|_F ) \leq C' n^{3/2} d^{1/2}$ for some constant $C'$ depending on $A_1$ and $A_2$, which can then be plugged
into the bound given in Theorem~30 of \parencite{davison_asymptotics_2021}.
For the sampling schemes we consider, $\max_{i,j,x} f_n(\lambda_i, \lambda_j, x) = O(\rho_n^{-2})$, and consequently the bound we obtain is of the order
$(d/n \rho_n^2)^{1/2}$, rather than $(d^3/n\rho_n)^{1/2}$. This bound
is particularly effective in the non-sparse regime; in the sparse regime, one
would hope for a bound of the form $(d/n\rho_n)^{1/2}$, but we are unaware
as to whether such a bound is achievable.
\end{remark}

\subsection{Using a SBM approximation}
\label{sec:app:loss:sbm}

We begin by working in the scenario where Assumption~\ref{assume:reg}b) holds.
Letting $\mcP$ be a partition of $[0, 1]$ into $\kappa$ parts, say $\mcP = (A_{1}, \ldots, A_{\kappa})$, we introduce the stepping operators defined by 
\begin{gather*}
  \mcP^{\otimes 2}[h](x, y) = \frac{1}{ |A_l| |A_{l'}| } \int_{A_l \times A_{l'} } h(x', y') \, dx' dy' \text{ if } (x, y) \in A_l \times A_{l'}, \\
  \mcP[h](x) = \frac{1}{|A_l|} \int_{A_l} h(x') \, dx' \text{ if } x \in A_l
\end{gather*}
for any symmetric measurable function $h: [0, 1]^2 \to \mathbb{R}$ and measurable function $h: [0, 1] \to \mathbb{R}$ respectively. With this, let $\mcP_n = (A_{n1}, \ldots, A_{n\kappa(n)})$ be a sequence of partitions containing $\kappa(n)$ intervals of size $|A_{nl}| \asymp n^{-\alpha}$ for some constant $\alpha > 0$, and then introduce the functions
\begin{align}
  \mathbb{E}[ \widehat{\mcR}_n^{\mcP_n}( \bmomega) \,|\, \bm{\lambda}_n  ] & := \frac{1}{n^2} \sum_{i, j \in [n], i \neq j } \sum_{x \in \{0, 1\}} \mcP_n^{\otimes 2}[ \tilde{f}_{n, x}](\lambda_i, \lambda_j) \ell( \langle \omega_i, \omega_j \rangle, x), \\ 
  \widehat{\mcR}_n^{\text{reg}, \mcP_n}(\bmomega) & := \frac{1}{n} \sum_{i \in [n] } \mcP_n[\tilde{g}_n](\lambda_i) \| \omega_i \|_2^2,
\end{align}
where we make the abbreviation $\tilde{f}_{n, x}(\lambda_i, \lambda_j) := \tilde{f}_n(\lambda_i, \lambda_j, x)$. We note that as $\tilde{f}_{n, x}$ and $\tilde{g}_n$ are uniformly bounded away from zero by $M^{-1}$, and because they are H\"{o}lder of exponent $\beta$, we can apply Lemma~C.6 of \parencite{wolfe_nonparametric_2013} to obtain that 
\begin{equation} 
  \label{loss_thm:holder_bounds}
  \Big\| \frac{ \tilde{f}_{n, x} - \mcP_n^{\otimes 2}[ \tilde{f}_{n, x} ] }{  \tilde{f}_{n, x}   } \Big\|_{\infty}  = O( n^{-\alpha \beta}), \qquad \Big\| \frac{ \tilde{g}_n - \mcP_n[\tilde{g}_n] }{  \tilde{g}_n  } \Big\|_{\infty} = O( n^{-\alpha \beta} ).
\end{equation}
This, along with the uniform boundedness conditions on the $\tilde{f}_{n, x}$ and $\tilde{g}_n$ given in Assumption~\ref{assume:reg}, allow us to apply Lemma~\ref{app:loss:relative_convergence} to find that there exists a sequence of sets $\Psi_n^{(2)}$ for which the minima of both $\mathbb{E}[ \widehat{\mcR}_n(\bmomega) \,|\, \bm{\lambda}_n ] + \xi_n \widehat{\mcR}_n^{\text{reg}}(\bmomega)$ and $\mathbb{E}[ \widehat{\mcR}^{\mcP_n}_n(\bmomega) \,|\, \bm{\lambda}_n ] + \xi_n \widehat{\mcR}_n^{\text{reg}, \mcP_n}(\bmomega)$ are contained within it with asymptotic probability $1$, and 
\begin{equation}
  \label{loss_thm:bound_3}
  \begin{aligned}
  \sup_{\bmomega \in \Psi_n^{(2)} } \Big| \big\{ \mathbb{E}[ \widehat{\mcR}_n(\bmomega) \,|\, \bm{\lambda}_n ] & + \xi_n \widehat{\mcR}_n^{\text{reg}}(\bmomega) \big\} \\ 
  & - \big\{ \mathbb{E}[ \widehat{\mcR}^{\mcP_n}_n(\bmomega) \,|\, \bm{\lambda}_n ] + \xi_n \widehat{\mcR}_n^{\text{reg}, \mcP_n}(\bmomega) \big\} \Big| = O_p( n^{-\alpha \beta }).
  \end{aligned}
\end{equation}
Note that in the case where Assumption~\ref{assume:reg}a) holds, this step
is not necessary, and so we can take the above bound to be equal to zero.

\subsection{Adding the contribution of the diagonal term}

We note that in the definition of $\mathbb{E}[ \widehat{\mcR}^{\mcP_n}_n(\bmomega) \,|\, \bm{\lambda}_n ]$, the summation does not include any $i = j$ terms; if we introduce 
\begin{equation}
  \label{loss_thm:add_diag}
  \mathbb{E}[ \widehat{\mcR}^{\mcP_n}_{n, (1)}(\bmomega) \,|\, \bm{\lambda}_n ] := \frac{1}{n^2} \sum_{i, j \in [n]} \sum_{x \in \{0, 1\}} \mcP_n^{\otimes 2}[ \tilde{f}_{n, x} ](\lambda_i, \lambda_j) \ell( \langle \omega_i, \omega_j \rangle, x),
\end{equation}
then by Proposition~\ref{thm:app:loss:bounds}b), we have that 
\begin{equation}
\begin{aligned}
  \label{loss_thm:bound_4}
  \sup_{\bmomega \in ([-A, A]^{d} )^n } \big|  \{   \mathbb{E}[ \widehat{\mcR}^{\mcP_n}_{n}(\bmomega) & \,|\, \bm{\lambda}_n ] + \xi_n \widehat{\mcR}_n^{\text{reg}, \mcP_n}(\bmomega) \}  \\ & - \{  \mathbb{E}[ \widehat{\mcR}^{\mcP_n}_{n, (1)}(\bmomega) \,|\, \bm{\lambda}_n ] + \xi_n \widehat{\mcR}_n^{\text{reg}, \mcP_n}(\bmomega) \} \big| = O_p\Big( \frac{d^q}{n} \Big).
\end{aligned}
\end{equation}

\subsection{Linking minimizing embeddings to minimizing kernels}
\label{sec:app:loss:min_kernel}

We now want to reason about the minima of the function $\mathbb{E}[ \widehat{\mcR}^{\mcP_n}_{n, (1)}(\bmomega) \,|\, \bm{\lambda}_n ] + \xi_n \widehat{\mcR}_n^{\text{reg}, \mcP_n}(\bmomega)$. We denote
\begin{gather*}
  p_n(l) := |A_{nl}|, \qquad \mcA_n(l) := \{ i \in [n] \,:\, \lambda_i \in A_{nl} \}, \qquad \widehat{p}_n(l) := n^{-1} | \mcA_n(l) |, \\
  c_{f, n}(l, l', x) := \frac{1}{p_n(l) p_n(l') } \int_{A_{nl} \times A_{nl'} } \tilde{f}_n(\lambda, \lambda', x) \, d\lambda d\lambda', \qquad c_{g, n}(l) := \frac{1}{p_n(l) } \int_{A_{nl}} \tilde{g}_n(\lambda) \, d\lambda.
\end{gather*}
Consider writing 
\begin{equation}
  \tilde{\omega}_i = \frac{1}{ | \mcA_n(l) | } \sum_{j \in \mcA_n(l) } \omega_j \text{ if } i \in \mcA_n(l), \qquad \widetilde{\bm{\omega}}_n = (\tilde{\omega}_i)_{i \in [n] },
\end{equation}
given any set of embedding vectors $\bmomega$. As $\ell(y, x)$ is strictly convex in $y \in \mathbb{R}$ for $x \in \{0, 1\}$ and $\| \cdot \|_2^2$ is also strictly convex, by Jensen's inequality we have that
\begingroup
\allowdisplaybreaks
\begin{align*}
  \mathbb{E}[ \widehat{\mcR}^{\mcP_n}_{n, (1)}&(\bmomega) \,|\, \bm{\lambda}_n ] + \xi_n \widehat{\mcR}_n^{\text{reg}, \mcP_n}(\bmomega) \\
  & = \sum_{l, l' \in [\kappa(n)] } \widehat{p}_n(l) \widehat{p}_n(l') \sum_x \Big\{  \frac{c_{f, n}(l, l', x) }{ | \mcA_n(l) | | \mcA_n(l') | } \sum_{ \substack{ i \in \mcA_n(l) \\ j \in \mcA_n(l') } } \ell( \langle \omega_i, \omega_j \rangle, x) \Big\} \\
  &\qquad \qquad \qquad + \sum_{l \in [\kappa(n) ] } \widehat{p}_n(l) \frac{c_{g, n}(l) }{ | \mcA_n(l) | } \sum_{ i \in \mcA_n(l) } \| \omega_i \|_2^2 \\ 
  & \geq \sum_{ l, l' \in [\kappa(n) ] } \widehat{p}_n(l) \widehat{p}_n(l') \sum_x c_{f, n}(l, l', x) \ell\Big(  \sum_{\substack{ i \in \mcA_n(l) \\ j \in \mcA_n(l') } }  \langle \omega_i, \omega_j \rangle, x \Big) \\ 
  & \qquad \qquad \qquad + \sum_{l \in [\kappa(n)] } \widehat{p}_n(l) c_{g, n}(l) \Big\| \frac{1}{| \mcA_n(l) | } \sum_{i \in \mcA_n(l) } \omega_i \Big\|_2^2 \\ 
  & = \sum_{ l, l' \in [\kappa(n) ] } \widehat{p}_n(l) \widehat{p}_n(l') \sum_x c_{f, n}(l, l', x) \ell\Big( \langle \sum_{i \in \mcA_n(l) } \frac{\omega_i}{  | \mcA_n(l) | }, \sum_{j \in \mcA_n(l') } \frac{\omega_j}{ | \mcA_n(l') | } \rangle, x \Big) \\
  & \qquad \qquad + \sum_{l \in [\kappa(n)] } \widehat{p}_n(l) c_{g, n}(l) \Big\| \frac{1}{| \mcA_n(l) | } \sum_{i \in \mcA_n(l) } \omega_i \Big\|_2^2 \\
  & = \mathbb{E}[ \widehat{\mcR}^{\mcP_n}_{n, (1)}( \widetilde{\bm{\omega}}_n ) \,|\, \bm{\lambda}_n ] + \xi_n \widehat{\mcR}_n^{\text{reg}, \mcP_n}( \widetilde{\bm{\omega}}_n ),
\end{align*}
\endgroup
with equality if and only if the $\omega_i$ are equal across each of the sets $\mcA_n(l)$. In particular, this means that to minimize $\mathbb{E}[ \widehat{\mcR}^{\mcP_n}_{n, (1)}(\bmomega) \,|\, \bm{\lambda}_n ] + \xi_n \widehat{\mcR}_n^{\text{reg}, \mcP_n}(\bmomega)$, if we define 
\begin{align*}
  \widehat{I}_n^{\mcP_n}(\tilde{\omega}_1, \ldots, \tilde{\omega}_{\kappa(n) } ) & := \sum_{l, l' \in [\kappa(n) ] }  \widehat{p}_n(l) \widehat{p}_n(l') \sum_x c_{f, n}(l, l', x) \ell( \langle \tilde{\omega}_l, \tilde{\omega}_{l'} \rangle, x ) \\ 
  \widehat{I}_n^{\text{reg}, \mcP_n}( \tilde{\omega}_1, \ldots, \tilde{\omega}_{\kappa(n) } ) & := \sum_{l \in [\kappa(n)] } \widehat{p}_n(l) c_{g, n}(l) \| \tilde{\omega}_l \|_2^2,
\end{align*}
then it suffices to minimize $\widehat{I}_n^{\mcP_n}(\tilde{\omega}_1, \ldots, \tilde{\omega}_{\kappa(n) } )  + \xi_n \widehat{I}_n^{\text{reg}, \mcP_n}(\tilde{\omega}_1, \ldots, \tilde{\omega}_{\kappa(n) } )$, as the $\tilde{\omega}_i$ are constant across $i \in \mcA_n(l)$. In other words, the above argument has just showed that 
\begin{equation}
  \label{loss_thm:bound_5}
  \begin{split}
  \min_{\bmomega \in ([-A, A]^{d } )^n } & \big\{ \mathbb{E}[ \widehat{\mcR}^{\mcP_n}_{n, (1)}(\bmomega)  \,|\, \bm{\lambda}_n ] + \xi_n \widehat{\mcR}_n^{\text{reg}, \mcP_n}(\bmomega) \big\} \\ 
  & = \min_{ (\tilde{\omega}_i) \in ([-A, A]^{d})^{\kappa(n)} } \big\{ \widehat{I}_n^{\mcP_n}(\tilde{\omega}_1, \ldots, \tilde{\omega}_{\kappa(n) } )  + \xi_n \widehat{I}_n^{\text{reg}, \mcP_n}(\tilde{\omega}_1, \ldots, \tilde{\omega}_{\kappa(n) } ) \big\}.
  \end{split}
\end{equation}
We note that $\widehat{I}_n^{\mcP_n}$ and $\widehat{I}_n^{\text{reg}, \mcP_n}$ are stochastic, as they depend on the random variables $\widehat{p}_n(l)$. To remove the stochasticity, we introduce the functions 
\begin{align*}
  I_n^{\mcP_n}(\tilde{\omega}_1, \ldots, \tilde{\omega}_{\kappa(n) } ) & := \sum_{l, l' \in [\kappa(n) ] }  p_n(l) p_n(l') \sum_x c_{f, n}(l, l', x) \ell( \langle \tilde{\omega}_l, \tilde{\omega}_{l'} \rangle, x ) \\ 
  I_n^{\text{reg}, \mcP_n}( \tilde{\omega}_1, \ldots, \tilde{\omega}_{\kappa(n) } ) & := \sum_{l \in [\kappa(n)] } p_n(l) c_{g, n}(l) \| \tilde{\omega}_l \|_2^2.
\end{align*}
As by Proposition~\ref{thm:app:loss:bounds}c) we have that 
\begin{gather*}
  \max_{l, l' \in [\kappa(n) ] } \Big| \frac{ \widehat{p}_n(l) \widehat{p}_n(l') - p_n(l) p_n(l') }{ p_n(l) p_n(l')  } \Big| = \begin{cases} 
    O_p\Big( \Big( \frac{ \log \kappa}{n} \Big)^{1/2} \Big) & \text{ under Assumption~\ref{assume:reg}a) } \\ O_p\Big(  \frac{ \sqrt{\log n} }{ n^{1/2 - \alpha/2} }  \Big) & \text{ under Assumption~\ref{assume:reg}b)}  \end{cases} \\ 
    \max_{l \in [\kappa(n) ]} \Big| \frac{ \widehat{p}_n(l) - p_n(l) }{ p_n(l) } \Big| = \begin{cases} 
        O_p\Big( \Big( \frac{ \log \kappa}{n} \Big)^{1/2} \Big) & \text{ under Assumption~\ref{assume:reg}a) } \\ O_p\Big(  \frac{ \sqrt{\log n} }{ n^{1/2 - \alpha/2} }  \Big) & \text{ under Assumption~\ref{assume:reg}b)}  \end{cases}
\end{gather*}
and moreover that the $\widehat{p}_n(l)$ and $p_n(l)$ sum to $1$, we can apply Lemma~\ref{app:loss:relative_convergence} to argue that there exists a sequence of sets $\Psi_n^{(3)}$ which contains both the minima of $\widehat{I}_n^{\mcP_n}( (\tilde{\omega}_i)_{i=1}^{\kappa(n) } )  + \xi_n \widehat{I}_n^{\text{reg}, \mcP_n}( (\tilde{\omega}_i)_{i=1}^{\kappa(n) } )$ and $I_n^{\mcP_n}( (\tilde{\omega}_i)_{i=1}^{\kappa(n) } )  + \xi_n I_n^{\text{reg}, \mcP_n}( (\tilde{\omega}_i)_{i=1}^{\kappa(n) } )$ with asymptotic probability $1$, and that 
\begin{align} \nonumber 
  \sup_{ (\tilde{\omega}_i )_i \in \Psi_n^{(3)} } \Big| & \big\{ \widehat{I}_n^{\mcP_n}( (\tilde{\omega}_i)_{i=1}^{\kappa(n) } )  + \xi_n \widehat{I}_n^{\text{reg}, \mcP_n}( (\tilde{\omega}_i)_{i=1}^{\kappa(n) } ) \big\} - \big\{  I_n^{\mcP_n}( (\tilde{\omega}_i)_{i=1}^{\kappa(n) } )  + \xi_n I_n^{\text{reg}, \mcP_n}( (\tilde{\omega}_i)_{i=1}^{\kappa(n) } ) \big\} \Big| \\
   & = \begin{cases}  
    O_p\Big( \Big( \frac{ \log \kappa}{n} \Big)^{1/2} \Big) & \text{ under Assumption~\ref{assume:reg}a) } \\ O_p\Big(  \frac{ \sqrt{\log n} }{ n^{1/2 - \alpha/2} }  \Big) & \text{ under Assumption~\ref{assume:reg}b)} 
  \end{cases} \label{loss_thm:bound_6}
\end{align}
To transition from embedding vectors to kernels $K(l, l') = \langle \eta(l), \eta(l') \rangle$ for $\eta: [0, 1] \to [-A, A]^{d}$, we note that as we can write 
\begin{align*}
  \mcI_n^{\mcP_n}[K] & = \sum_{l, l' \in [\kappa(n) ]} p_n(l) p_n(l') \sum_{x} \frac{ c_{f, n}(l, l', x) }{p_n(l) p_n(l') } \int_{A_{nl} \times A_{nl'} } \ell( \langle \eta(\lambda), \eta(\lambda') \rangle, x) \, d\lambda d\lambda', \\
  \mcI_n^{\text{reg}, \mcP_n}[K] & = \sum_{l \in [\kappa(n)] } p_n(l) \frac{ c_{g, n}(l) }{ p_n(l) } \int_{A_{nl} } \| \eta( \lambda ) \|_2^2 \, d \lambda,
\end{align*}
by the same Jensen's inequality argument used to obtain \eqref{loss_thm:bound_5}, we get that 
\begin{equation}
  \begin{split}
    \min_{ (\tilde{\omega}_i)_i \in ([-A, A]^{d})^{\kappa(n) } } \big\{  I_n^{\mcP_n}( (\tilde{\omega}_i)_{i=1}^{\kappa(n) } )  & + \xi_n I_n^{\text{reg}, \mcP_n}( (\tilde{\omega}_i)_{i=1}^{\kappa(n) } ) \big\} \\ 
    & = 
  \min_{K \in \mcZ_d^{\geq 0}(A) } \big\{ \mcI_n^{\mcP_n}[K] + \xi_n \mcI_n^{\text{reg}, \mcP_n}[K] \big\}, 
  \end{split}
\end{equation}
where the correspondence between the minimizing $K(l, l') = \langle \eta(l), \eta(l') \rangle$ and $\tilde{\omega}_i$ is given by $\eta(l) = \tilde{\omega}_i$ for $i \in A_{nl}$. 

The final step is to remove the approximation terms $\mcP_n^{\otimes 2}[\tilde{f}_{n, x}]$ and $\mcP_n[\tilde{g}_n]$ from $\mcI_n^{\mcP_n}[K]$ and $\mcI_n^{\text{reg}, \mcP_n}[K]$ in order to get to $\mcI_n[K]$ and $\mcI_n^{\text{reg}}[K]$. To do so, we can use \eqref{loss_thm:holder_bounds} and Lemma~\ref{app:loss:relative_convergence_integral} to obtain that there exists a set $\Psi_n^{(4)}$ containing both the minima of $\mcI_n[K] + \xi_n \mcI_n^{\text{reg}}[K]$ and $\mcI_n^{\mcP_n}[K] + \xi_n \mcI_n^{\text{reg}, \mcP_n}[K]$ (which exist by Proposition~\ref{app:min_exist}) and 
\begin{equation}
  \label{loss_thm:bound_7}
  \sup_{K \in \Psi_n^{(4)} } \Big| \big\{ \mcI_n^{\mcP_n}[K] + \xi_n \mcI_n^{\text{reg}, \mcP_n}[K] \big\} - \big\{ \mcI_n[K] + \xi_n \mcI_n^{\text{reg}}[K] \big\} \Big| = O( n^{-\alpha \beta} ).
\end{equation}

\subsection{Combining to obtain rates of convergence}

To conclude, we first note that given uniform convergence bounds of two functions on a set containing both of their minima, we can argue convergence of their minimal values; indeed if a set $A$ contains minima $x_f$ and $x_g$ to some functions $g$, then 
\begin{equation*}
  \min_x f(x) - \min_x g(x) = \min_x f(x) - g(x_g) \leq f(x_g) - g(x_g) \leq \sup_{x \in A} | f(x) - g(x) |,
\end{equation*}
and similarly so for $\min_x g(x) - \min_x f(x)$. (We note that Proposition~\ref{app:min_exist} argues that all the relevant infimal values of the minimizers of the $\mcI_n[K] + \xi_n \mcI_n^{\text{reg}}[K]$ are attained.) Therefore, using this fact and chaining together the bounds in \eqref{loss_thm:bound_1},~\eqref{loss_thm:bound_2},~\eqref{loss_thm:bound_3},~\eqref{loss_thm:bound_4},~\eqref{loss_thm:bound_5},~\eqref{loss_thm:bound_6}~and~\eqref{loss_thm:bound_7}, we get when Assumption~\ref{assume:reg}b) holds that
\begin{equation}
  \begin{split}
  \Big| \min_{\bmomega \in ([-A, A]^{d } )^n } \big\{ \mcR_n(\bmomega) + \xi_n \mcR_n^{\text{reg}}(\bmomega) \big\} & - \min_{K \in \mcZ_d^{\geq 0}(A) }  \big\{ \mcI_n[K] + \xi_n \mcI_n^{\text{reg}}[K] \big\} \Big| \\
  & = O_p\Big( s_n + \frac{ d^{3/2} \rho_n^{-1/2} }{n^{1/2} } + n^{-\alpha \beta} + \frac{ \sqrt{ \log n}}{n^{1/2 - \alpha/2} }   \Big).
  \end{split}
\end{equation}
(We note that the $O_p(d^q/n)$ term from \eqref{loss_thm:bound_4} is negligible.) To conclude, we simply pick an optimal choice of $\alpha$, which we take to be $\alpha = 1/(1 + 2\beta)$, which gives the stated bound. In the
case where Assumption~\ref{assume:reg}a) holds, the term from the SBM
approximation disappears and the $\sqrt{ \log n}/n^{1/2 - \alpha/2}$ term
becomes $(\log \kappa / n)^{1/2}$, giving the stated bound in this regime.

\subsection{Proofs of useful lemmata}
\label{sec:app:loss:useful_lemma}

\begin{proof}[Proof of Lemma~\ref{app:loss:relative_convergence}]
  We begin by noting that as each of the $G_n(\theta)$ and $\widetilde{G}_n(\theta)$ are continuous functions defined on compact sets, the minima sets of each of the functions are non-empty. We now define the sets
  \begin{align*}
    \Psi_n & := \Bigg\{ \theta_n \in \Theta^{(n)} \,:\, G_n( \theta_n ) \leq 2 C \sum_{i, j, x} c_{ijx}^{(n)} + 2 C \sum_i c_i^{(n)} \Bigg\}, \\ 
    \widetilde{\Psi}_n & := \Bigg\{ \theta_n \in \Theta^{(n)} \,:\, \widetilde{G}_n( \theta_n ) \leq C \sum_{i, j, x} \tilde{c}_{ijx}^{(n)} + C \sum_i \tilde{c}_i^{(n)} \Bigg\},
  \end{align*}
  and note that $0 \in \Psi_n$, $0 \in \widetilde{\Psi}_n$ for each $n$, and therefore we also have that $\argmin_{\theta_n \in \Theta^{(n)}} G_n(\theta) \subseteq \Psi_n$ and $\argmin_{\theta_n \in \Theta^{(n)}} \widetilde{G}_n(\theta) \subseteq \widetilde{\Psi}_n$. We now want to argue that $\mathbb{P}( \widetilde{\Psi}_n \subseteq \Psi_n ) \to 1$ as $n \to \infty$. Note that for any $\theta_n \in \widetilde{\Psi}_n$, we have that 
  \begin{align*}
    G_n(\theta_n) & = \sum_{i, j, x} \frac{ c_{ijx}^{(n)} }{ \tilde{c}_{ijx}^{(n)} } \tilde{c}_{ijx}^{(n)} \ell^{(n)}_{ijx}(\theta) + \sum_i \frac{ c_i^{(n)} }{ \tilde{c}_i^{(n) } } \tilde{c}_i^{(n)} \ell_i^{(n)}(\theta) \\ 
    & \leq ( 1 + O_p(r_n) ) \widetilde{G}_n(\theta_n) \leq C (1 + O_p(r_n) ) \Big\{ \sum_{i, j, x} \tilde{c}_{ijx}^{(n)} + \sum_i  \tilde{c}_i^{(n)} \Big\}.
  \end{align*}
  As by Cauchy's third inequality we have that 
  \begin{equation*}
    \frac{ \sum_{i, j, x} \tilde{c}_{ijx}^{(n)}   }{ \sum_{i, j, x} c_{ijx}^{(n) }     } \leq \max_{i, j, x} \frac{   \tilde{c}_{ijx}^{(n)}  }{ c_{ijx}^{(n)}    } = 1 + O_p(r_n),
  \end{equation*}
  and similarly $\sum_{i} \tilde{c}_i^{(n)} \leq (1 + O_p(r_n)) \sum_i c_i^{(n)}$, it follows that 
  \begin{equation*}
    G_n(\theta_n) \leq C (1 + O_p(r_n) ) \Big\{ \sum_{i, j, x} \tilde{c}_{ijx}^{(n)} + \sum_i  \tilde{c}_i^{(n)} \Big\} \stackrel{w.h.p}{ \leq } 2 C  \Big\{ \sum_{i, j, x} c_{ijx}^{(n)} + \sum_i  c_i^{(n)} \Big\}
  \end{equation*}
  once $n$ is sufficiently large, and therefore $\theta_n \in \widetilde{\Psi}_n$ for $n$ sufficiently large. In particular, as the above argument holds freely of the choice of $\theta_n$, we have that $\widetilde{\Psi}_n \subseteq \Psi_n$ with asymptotic probability one. With this, we now note that $\sup_{\theta_n \in \Psi_n} G_n(\theta_n) = O_p(1)$ (due to the condition on the sum of the $c_{ijx}^{(n)}$ and $c_i^{(n)}$), and consequently we have that for all $\theta_n \in \Psi_n$
  \begin{align*}
    \big| G_n(\theta_n) - \widetilde{G}_n(\theta_n) \big| & \leq \max_{i, j, x } \Big| \frac{ c_{ijx}^{(n)} - \tilde{c}_{ijx}^{(n)} }{ c_{ijx}^{(n)} } \Big| \cdot \sum_{i, j, x} c_{ijx}^{(n)} \ell^{(n)}_{ijx}(\theta) + \max_{i} \Big| \frac{ c_i^{(n)} - \tilde{c}_i^{(n) } }{ c_i^{(n)} } \Big| \cdot \sum_i c_i^{(n)} \ell^{(n)}_i(\theta) \\
    & \leq O_p(r_n) G_n(\theta_n) \leq O_p(r_n)
  \end{align*}
  with the bound holding uniformly over the choice of $\theta_n$, giving the stated conclusion.
\end{proof}

\begin{proof}[Proof of Lemma~\ref{app:loss:relative_convergence_integral}]
  The proof follows the exact same style of argument as in Lemma~\ref{app:loss:relative_convergence}, so we skip repeating the details.
\end{proof}
\section{Proof of Theorem~\ref{thm:embed_conv}}
\label{sec:app:embed_conv}

Before proving any results, we introduce some useful facts from functional analysis; the terminology and basic properties used below can be found in standard references such as e.g. \parencite{barbu_convexity_2012,brezis_functional_2011,peypouquet_convex_2015}. Throughout, we will write $\mu_n$ to refer to the measure $\mu_n(A) := \int_A \tilde{g}_n d\mu$, define for all Borel sets of $[0, 1]$, where $\mu$ is the regular Lebesgue measure on $[0, 1]$, and write e.g. $L^2([0, 1], \mu_n)$ or $L^2(\mu_n)$ for the associated Lebesgue space of square integrable random variables. We note that as it assumed that the $\tilde{g}_n$ are uniformly bounded away from zero and uniformly bounded above by Assumption~\ref{assume:reg}, $h \in L^2(\mu_n)$ iff $h \in L^2(\mu)$. For any function $K \in L^2([0, 1]^2, \mu_n^{\otimes 2})$ (where we write $\mu_n^{\otimes 2}$ for the product measure of $\mu_n$ with itself), we introduce the associated operator
\begin{equation}
  \label{app:min:k2op}
  T_K : L^2(\mu_n) \to L^2(\mu_n), \qquad T_K[f](x) = \int_0^1 K(x, y) f(y) d\mu_n(y).
\end{equation}
The above operator is Hilbert-Schmidt, where all Hilbert-Schmidt operators $L^2(\mu_n) \to L^2(\mu_n)$ can be written in the above form for some kernel $K \in L^2([0, 1]^2, \mu_n^{\otimes 2})$; moreover $T_K$ is self-adjoint (so $T_K^* = T_K$) iff $K$ is symmetric. The above identification corresponds to an isometric isomorphism between the Hilbert spaces $L^2(\mu_n)$ and the Hilbert-Schmidt operators, via \parencite[e.g][Theorem~8.4.8]{heil_metrics_2018} the formula
\begin{equation}
  \mathrm{Tr}(T_K^* T_L) = \langle K, L \rangle_{L^2([0, 1]^2, \mu_n^{\otimes 2})} = \int_{[0, 1]^2} \overline{K(y, x)} L(x, y) \, d\mu_n(x) d\mu_n(y),
\end{equation}
which gives rise to the corresponding norm formula $\|T_K \|_{HS}^2 = \| K \|_{L^2(\mu_n)}$. Writing $\mcS(L^2(\mu_n))$ for the space of linear operators $L^2(\mu_n) \to L^2(\mu_n)$ with finite trace or nuclear norm $\| T \|_1 < \infty$ (referred to as the space of trace class operators), $\mcK(L^2(\mu_n))$ for the space of compact linear operators $L^2(\mu_n) \to L^2(\mu_n)$, and $\mcB(L^2(\mu_n))$ for the space of bounded linear operators $L^2(\mu_n) \to L^2(\mu_n)$ with norm $\| \cdot \|_{\text{op}}$, we have that \parencite[e.g][Theorem~3.3.9]{sunder_operators_2016}
\begin{itemize}
  \item $\mcS(L^2(\mu_n)) \cong ( \mcK(L^2(\mu_n)))^*$ via the mapping $T \in \mcS(L^2(\mu_n)) \mapsto [ A \mapsto \mathrm{Tr}(AT)]$;
  \item $\mcB(L^2(\mu_n)) \cong ( \mcS(L^2(\mu_n)) )^*$ via the mapping $A \in \mcB(L^2(\mu_n)) \mapsto [ T \mapsto \mathrm{Tr}(AT) ]$.
\end{itemize}
Consequently, this allows us to argue that the trace norm $\| \cdot \|_1$ is weak* lower semi-continuous on $\mcS(L^2(\mu_n))$, and that its closed level sets are weak* compact by Banach-Alaoglu. We also note that we have the inclusions 
\begin{equation*}
  \{\text{finite rank}\} \subset \{\text{trace class}\} \subset \{\text{Hilbert-Schmidt}\} \subset \{\text{compact operators}\} \subset \{\text{bounded operators}\}.
\end{equation*}
Operators which satisify $\langle T_K[f], f \rangle \geq 0$ for all $f \in L^2(\mu_n)$ are called positive operators\footnote{We note that unlike in finite dimensions, we usually do not distingush between operators which are positive definite as compared to being only non-negative definite.}; for positive operators we have that the trace norm is equal simply to the trace. With this, we now are in a position to prove the results needed to talk about minimizers of $\mcI_n[K] + \xi_n \mcI_n^{\text{reg}}[K]$ over various sets of functions $K$.

\begin{prop}
  \label{app:min_exist}
  For $K \in \mcZ_{\text{fr}}^{\geq 0}(A) := \cup_{d \geq 1} \mcZ_d^{\geq 0}(A)$, writing $K\llp = \sum_{i=1}^d \eta_i(l) \eta_i(l')$ for some $d$ and functions $\eta_i : [0, 1] \to [-A, A]$, we define 
  \begin{equation*}
    \mcI_n[K] = \intsq \sum_{x \in \{0, 1\}} \tilde{f}_n(l, l', x) \ell(K\llp, x) \, dl dl', \qquad \mcI_n^{\text{reg}}[K] = \int_{[0, 1]} \| \eta_i(l) \|_2^2 \cdot \tilde{g}_n(l) \, dl,
  \end{equation*}
  where we recall that $\tilde{f}_n$ and $\tilde{g}_n$ are as given in Assumptions~\ref{assume:slc}~and~\ref{assume:reg}, and $\ell(y, x)$ is either the cross-entropy loss or the squared loss function; we introduce a variable
  $q$ for which $q = 1$ applies to the cross-entropy loss, and $q=2$ for the squared loss. Treat $n$ as fixed. Write $\mu_n$ for the measure $\mu_n(A) := \int_A \tilde{g}_n \, d\mu$ where $\mu$ is the Lebesgue measure on $[0, 1]$. Then we have the following: 
  \begin{enumerate}[label=\roman*), leftmargin=*]
    \item For $K \in \mcZ_{\text{fr}}^{\geq 0}(A)$, $\mcI_n^{\text{reg}}[K] = \mathrm{Tr}[T_K]$ where 
    \begin{equation*}
      T_K : L^2(\mu_n) \to L^2(\mu_n), \qquad T_K[f](x) = \int_0^1 K(x, y) f(y) \tilde{g}_n(y) \, dy.
    \end{equation*}
    \item The set $\mcZ_{\text{fr}}^{\geq 0}(A)$ is free of $A$, and 
    so we can let $Z^{\geq 0}$ denote the weak* closure of $\mcZ_{\text{fr}}^{\geq 0}(A)$ in $\mcS(L^2(\mu_n))$. 
    \item $\mcI_n^{\text{reg}}[K]$ extends uniquely to a weak* lower semi-continuous function, namely the trace, on $\mcZ^{\geq 0}$, and to
    the larger domain of the positive trace-class operators on $L^2(\mu_n)$. Consequently, we write $\mcI_n^{\text{reg}}[K] = \mathrm{Tr}[T_K]$ for $K \in \mcZ^{\geq 0}$, or more generally any symmetric function $K$ for which $T_K$ is positive. 
    \item $\mcI_n[K]$ is finite for all symmetric functions $K$ for which $T_K$ is a positive operator and $\mcI_n^{\text{reg}}[K] < \infty$; $\mcI_n[K]$ is strictly convex in $K$; and $\mcI_n[K]$ is weak* lower semi-continuous with respect to the topology on $\mcS(L^2(\mu_n))$. 
    \item We have the local Lipschitz property
    \begin{align*}
        \big| \mcI_n[K] - \mcI_n[L] \big| & \leq 2 M^3\Big( \| K \|_{L^2(\mu_n^{\otimes 2})} + \| L \|_{L^2(\mu_n^{\otimes 2})} \Big)^{q-1} \| K - L \|_{L^2(\mu_n^{\otimes 2})} \\
        & = 2 M^3\Big( \| T_K \|_{\text{HS}} + \| T_L \|_{\text{HS}} \Big)^{q-1} \| T_K - T_L \|_{\text{HS}}.
    \end{align*}
    \item For any $\xi_n \geq 0$, we have that $\mcI_n[K] + \xi_n \mcI_n^{\text{reg}}[K]$ is a strictly convex function in $K$, which is weak* lower semi-continuous with respect to the topology on $\mcS(L^2(\mu_n))$.
    \item For each $d$, there exists at least one minimizer of $\mcI_n[K] + \xi_n \mcI_n^{\text{reg}}[K]$ over $\mcZ_d^{\geq 0}(A)$, and there exists a unique minimizer to $\mcI_n[K] + \xi_n \mcI_n^{\text{reg}}[K]$ over $\mcZ^{\geq 0}$. 
    \item When Assumption~\ref{assume:reg}a) holds, the minima of $\mcI_n[K] + \xi_n \mcI_n^{\text{reg}}[K]$ over $\mcZ^{\geq 0}$ can be determined via a finite dimensional convex program; write $K_n^*$ for such a minima. Moreover, there exists some $r = r(n) \leq \kappa$ such that $K_n^*$ is of rank $r(n)$, and moreover as soon as $d \geq r(n)$ and $A \geq (\kappa - 1) \| K_n^* \|_{\infty}$, we have that the minima of $\mcI_n[K] + \xi_n \mcI_n^{\text{reg}}[K]$ over $\mcZ_d^{\geq 0}(A)$ is unique and equals $K_n^*$.
  \end{enumerate}
\end{prop}

\begin{proof}[Proof of Proposition~\ref{app:min_exist}]
  For i), this follows simply by using the fact that if $K\llp = \langle \eta(l), \eta(l') \rangle$ for some functions $\eta = (\eta_1, \ldots, \eta_d)$, then we have that 
  \begin{equation*}
    T_K[f](x) = \sum_{i=1}^d \eta_i(x) \int_0^1 \eta_i(y) f(y) \tilde{g}_n(y) \, dy = \sum_{i=1}^d (\eta_i) \otimes (\eta_i)^* 
  \end{equation*}
  and consequently as $\mathrm{Tr}[ \nu \otimes \nu^* ] = \nu^*(\nu)$ and the trace is linear, we have that
  \begin{equation*}
    \mathrm{Tr}[T_K] = \sum_{i=1}^d \int_{[0, 1]} \eta_i(y) \eta_i(y) \tilde{g}_n(y) \, dy =  \mcI_n^{\text{reg}}[K].
  \end{equation*}

  Part ii) follows as $Z_{fr}^{\geq 0}(A)$ is free of $A$ as a result of Lemma~52 \parencite{davison_asymptotics_2021}. 
  
  For iii), as $\mcI_n^{\text{reg}}[K]$ is simply the trace of the operator $T_K$, this will continuously extend to giving the trace on $\mcZ^{\geq 0}$, and more generally the positive trace-class operators on $L^2(\mu_n)$. This function is weak* lower semi-continuous as explained above. 
  
  To handle part iv), we note that if $\mcI_n^{\text{reg}}[K] < \infty$, then $T_K$ is trace-class, and consequently the operator $T_K$ is also Hilbert-Schmidt, implying that $K \in L^2(\mu_n^{\otimes 2})$. We note that we have
  \begin{equation}
  \begin{gathered}
    \label{app:exist:fun_facts}
        0 < M^{-1} \leq \tilde{f}_n(l, l', x) \leq M < \infty, \qquad 0 < M^{-1} \leq \tilde{g}_n(l) \leq M < \infty, \\
        | \ell(y, x) - \ell(y', x) | \leq \losslipconst \max\{ |y|, |y'| \}^{q-1} |y - y'|
  \end{gathered}
\end{equation}
  for all $l, l' \in [0, 1]$, $y, y' \in \mathbb{R}$ and $x \in \{0, 1\}$ (where $q = 1$ for the cross-entropy loss, and $q = 2$ for the squared loss),
  for some constants $M, \losslipconst \in (0, \infty)$. It consequently therefore follows that for the cross-entropy loss we have that
  \begin{equation*}
    \mcI_n[K] \leq 2 M^3 \int_{[0, 1]^2} ( \log(2) + | K\llp | ) \, \tilde{g}_n(l) \tilde{g}_n(l') dl dl' \ll \| K \|_{L^1(\mu_n^{\otimes 2})} \leq  \| K \|_{L^2(\mu_n^{\otimes 2})} < \infty.
  \end{equation*}
  A similar argument holds for the squared loss function, after noting that
  $\ell(y, 0) = y^2$ and $\ell(y, 1) \leq 2(2+y^2)$ for all $y \in \mathbb{R}$.
  For the strict convexity, we note that this follows by the strict convexity of the loss functions $\ell(y, x)$, the positivity of the $\tilde{f}_{n}(l, l', x)$, and the fact that multiplying the $\ell(y, x)$ by $\fnone$ and $\fnzero$, integrating, and then adding the two inequalities, will preserve the strict convexity. 
  
  By using the properties stated above in \eqref{app:exist:fun_facts} we also can argue continuity of $\mcI_n[K]$, in that (recalling that $q = 1$ handles
  the cross-entropy loss, and $q =2$ handles the squared loss)
  \begin{align*}
    \big| \mcI_n[K] & - \mcI_n[L] \big|  \leq M \losslipconst \intsq \max\{ |K\llp|, |L\llp| \}^{p-1} | K\llp - L\llp | \, dl dl' \nonumber \\ 
    & \leq 2 M^3 \intsq \big( |K\llp| + |L\llp| \big)^{q-1} \big| K\llp - L\llp \big| \, \tilde{g}_n(l) \tilde{g}_n(l') \, dl dl' \\ 
    & \leq 2 M^3 \Big( \| K \|_{L^q(\mu_n^{\otimes 2})} + \| L \|_{L^q(\mu_n^{\otimes 2})} \Big)^{q-1} \| K - L \|_{L^q(\mu_n^{\otimes 2})} \nonumber \\
    & \leq 2 M^3\Big( \| K \|_{L^2(\mu_n^{\otimes 2})} + \| L \|_{L^2(\mu_n^{\otimes 2})} \Big)^{q-1} \| K - L \|_{L^2(\mu_n^{\otimes 2})} \\
    & = 2 M^3\Big( \| T_K \|_{\text{HS}} + \| T_L \|_{\text{HS}} \Big)^{q-1} \| T_K - T_L \|_{\text{HS}} \\
    & \leq 2 M^3\Big( \| T_K \|_{1} + \| T_L \|_{1} \Big)^{q-1} \| T_K - T_L \|_{1},
  \end{align*}
  which also gives us part v); this is obtained by using \eqref{app:exist:fun_facts} in the first line, the second by using the fact that $\tilde{g}_n$ is bounded below and that $\max\{|a|, |b| \} \leq |a| + |b|$; the third line by H\"{o}lder's inequality and the triangle inequality; the fourth line by Jensen's inequality; the fifth line by the identification between the $L^2$ norms of kernels and the Hilbert-Schmidt norm of their associated operators, and the last line by the fact that the trace norm upper bounds the Hilbert-Schmidt norm. In particular, $\mcI_n[K]$ is norm-continuous with respect to the norm of $L^2(\mu_n^{\otimes 2})$. This plus convexity implies that $\mcI_n[K]$ is weakly lower semi-continuous, in the sense of the weak topology on $L^2(\mu_n^{\otimes 2})$. The restriction of this topology to the trace-class operators is coarser than the weak* topology (by the definition of the weak topology), and therefore $\mcI_n[K]$ is also weak* lower semi-continuous, concluding the arguments for part iv). 
  
  For vi), this follows by using the above parts, the fact that the trace is linear over positive trace-class operators, and that the sum of convex and lower semi-continuous functions remain convex and lower semi-continuous respectively. 
  
  For vii), we first need to discuss some of the properties of the sets $\mcZ_d^{\geq 0}(A)$, $\mcZ_{\text{fr}}^{\geq 0}(A)$ and $\mcZ^{\geq 0}(A)$. We note that by the same argument in Proposition~47 of \parencite{davison_asymptotics_2021} that $\mcZ_d^{\geq 0}(A)$ is weak* closed, and that because of the facts a) $t \mcZ_d^{\geq 0}(A) \subset \mcZ_d^{\geq 0}(A)$ and b) $\mcZ_r^{\geq 0}(A) + \mcZ_s^{\geq 0}(A) = \mcZ_{r+s}^{\geq 0}(A)$, we can conclude that $\mcZ_{\text{fr}}^{\geq 0}(A) = \mcZ_{\text{fr}}^{\geq 0}$ - recall part ii) - is convex. As closures of convex sets are convex, it consequently follows that $\mcZ^{\geq 0}$ is convex and weak* closed. Noting that each of these sets contain $0$, any minimizer $K$ must satisfy
  \begin{equation*}
    \xi_n \mathrm{Tr}[T_K] \leq \mcI_n[K] + \xi_n \mcI_n^{\text{reg}}[K] \leq \mcI_n[0] + \xi_n \mcI_n^{\text{reg}}[0] = \mcI_n[0] \implies \mathrm{Tr}[T_K] \leq \xi_n^{-1} \mcI_n[0].
  \end{equation*}
  As the set $\mcB := \{ K : \mathrm{Tr}[T_K] \leq \xi_n^{-1} \mcI_n[0] \}$ is weak* compact, it therefore follows that when minimizing over $\mcZ_d^{\geq 0}(A)$ and $\mcZ^{\geq 0}$, it suffices to minimize over the weak* compact sets $\mcZ_d^{\geq 0}(A) \cap \mcB$ and $\mcZ^{\geq 0} \cap \mcB$ respectively, and so by Weierstrass' theorem a minimizer must exist. As $\mcI_n[K] + \xi_n \mcI_n^{\text{reg}}[K]$ is strictly convex and $\mcZ^{\geq 0}$ is convex, we therefore also know that the minimizer over this set is unique.

  To end with part viii), we highlight that in Appendix~\ref{sec:app:loss:min_kernel}, it is shown that when $\fnone$, $\fnzero$
  and $\tilde{g}_n(l)$ are piecewise constant, one can relate the minimization
  problem of minimizing $\mcI_n[K] + \xi_n \mcI_n^\text{reg}[K]$ over
  $\mcZ_d^{\geq 0}(A)$ to that of minimizing the function 
  \begin{equation*}
    \sum_{l, l' \in [\kappa ] }  p_n(l) p_n(l') \sum_x c_{f, n}(l, l', x) \ell( \langle \tilde{\omega}_l, \tilde{\omega}_{l'} \rangle, x ) + \sum_{l \in [\kappa] } p_n(l) c_{g, n}(l) \| \tilde{\omega}_l \|_2^2
  \end{equation*}
  over $\tilde{\omega}_l$ for $l \in [\kappa]$ with $ \| \tilde{\omega}_l \|_{\infty} \leq A$ for all $A$ (see Appendix~\ref{sec:app:loss:min_kernel} for a reminder of the relevant notation). In particular, in the case where we allow
  $d = \kappa$, and we relax the constraint on the $\tilde{\omega}_l$, if we write $\tilde{K}_{ll'} = \langle \tilde{\omega}_l, \tilde{\omega}_{l'} \rangle$, then we can write the above function as 
  \begin{equation*}
    \sum_{l, l' \in [\kappa(n) ] }  p_n(l) p_n(l') \sum_x c_{f, n}(l, l', x) \ell( \tilde{K}_{ll'}, x ) + \sum_{l \in [\kappa(n)] } p_n(l) c_{g, n}(l) \tilde{K}_{ll},
  \end{equation*}
  which is a strictly convex function in the matrix $K_{ll'}$, and consequently
  has a unique minimizer over the cone of positive semi-definite matrices; call this matrix $\tilde{K}_n^*$. Supposing that $\tilde{K}_n^*$ is of rank $r(n) \leq \kappa$ (as the matrix is $\kappa \times \kappa$ dimensional and the rank is trivially less than the matrix dimension), if we write $\tilde{K}_n^* = \sum_{i=1}^r(n) \mu_i \phi_i \phi_i^T$ for some eigenvalues $\mu_i > 0$ and orthonormal eigenvectors $\phi_i \in \mathbb{R}^{\kappa}$, then we can identify $K_n^*$ with $\tilde{K}_n^*$
  via letting $K_n^* = \sum_{i=1}^{r(n)} \mu_i \psi_i(l) \psi_i(l')$ where $\psi_i(l) = \phi_{ij}$ for $l \in A_j$. We now highlight that
  one trivially has that $\| \phi_i \|_{\infty} \leq 1$ for all $i$, 
  and moreover that as every row and column sum (ignoring the diagonal)
  is bounded above by $(\kappa - 1) \|\tilde{K}_n^*\|_{\infty}$, by the
  Gershgorin circle theorem the eigenvalues are bounded above by 
  $(\kappa - 1) \|\tilde{K}_n^*\|_{\infty}$ also. Consequently, 
  as soon as $d \geq r(n)$ and $A \geq (\kappa - 1) \|\tilde{K}_n^*\|_{\infty}$,
  $K_n^* \in \mcZ_d^{\geq 0}(A)$, and as a result we have that 
  the minima of $\mcI_n[K] + \xi_n \mcI_n^\text{reg}[K]$ over $ \mcZ_d^{\geq 0}(A)$ is unique and equals $K_n^*$.
\end{proof}

As the above theorem shows that $\mcI_n[K] + \xi_n \mcI_n^{\text{reg}}[K]$ is a strictly convex function, well defined for all symmetric kernels $K$ corresponding to positive, self-adjoint, trace class operators $L^2(\mu_n) \to L^2(\mu_n)$ via the identification $K \to T_K$ given in \eqref{app:min:k2op}, we briefly discuss here the corresponding KKT conditions for constrained minimization. 

\begin{prop}
  \label{app:kkt}
  Let $\mcC$ be a weak* closed set of positive, symmetric, trace class kernels. Then $L$ is the unique minima of $\mcI_n[K] + \xi_n \mcI_n^{\text{reg}}[K]$ over $\mcC$ if and only if there exists some $V \in \mcB(L^2(\mu_n))$ such that 
  \begin{equation*}
    \mathrm{Tr}(V T_{L} ) = \mcI_n^{\text{reg}}[K], \qquad \| V \|_{\text{op}} \leq 1, \qquad \mathrm{Tr}\big( ( T_{\nabla} + \xi_n V)( T_K - T_{L} ) \big) \geq 0 \text{ for all } K \in \mcC,
  \end{equation*}
  where we identify symmetric kernels $K \in L^2(\mu_n^{\otimes 2})$ with operators $L^2(\mu_n) \to L^2(\mu_n)$ as in \eqref{app:min:k2op}, and write $T_{\nabla}$ for the bounded operator $L^2(\mu_n) \to L^2(\mu_n)$ with kernel
  \begin{equation*}
    \nabla \mcI_n[K] = \sum_{x \in \{0, 1\} } \frac{ \tilde{f}_n(l, l', x) \ell'( K\llp, x)  }{ \tilde{g}_n(l) \tilde{g}_n(l') },
  \end{equation*}
  where $\ell'(y, x)$ is the derivative of $\ell(y, x)$ with respect to $y$.
\end{prop}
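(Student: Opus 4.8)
The plan is to read the optimization as a constrained convex minimization in the Banach space $\mcS(L^2(\mu_n))$ of trace-class operators, whose norm dual is $\mcB(L^2(\mu_n))$ with pairing $\langle A, T\rangle = \mathrm{Tr}(AT)$, and to split the objective $\Phi(K) := \mcI_n[K] + \xi_n\mcI_n^{\text{reg}}[K]$ into a smooth convex term $\mcI_n$ and a nonsmooth convex term: since $\mcC$ consists of positive operators, $\mcI_n^{\text{reg}}[K] = \mathrm{Tr}[T_K] = \|T_K\|_1$ agrees there with the nuclear norm, which we extend to all of $\mcS(L^2(\mu_n))$ (the $\iota_\mcC$ constraint then carries the positivity). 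The KKT system will drop out of Fermat's rule $0 \in \partial(\Phi + \iota_\mcC)(T_L)$, the subdifferential sum rule, the differentiability of $\mcI_n$, and the explicit subdifferential of the nuclear norm.

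For the smooth term I would first show $\mcI_n$ is Gâteaux differentiable at every trace-class $K$ and compute its directional derivative by differentiating under the integral sign, which is legitimate by dominated convergence: $\ell'(\cdot,x) = \sigma(\cdot) - x$ and $\ell''(\cdot,x)$ are bounded by $1$, the $\tilde{f}_n$ are bounded by Assumption~\ref{assume:reg}, and every trace-class kernel $H$ lies in $L^2(\mu_n^{\otimes 2}) \subseteq L^1(\mu_n^{\otimes 2})$ on the finite measure space. This gives directional derivative
\[
  \mcI_n'[K;H] = \intsq \sum_{x \in \{0,1\}} \tilde{f}_n(l,l',x)\,\ell'(K\llp,x)\,H\llp \dldl.
\]
Dividing and multiplying by $\tilde{g}_n(l)\tilde{g}_n(l')$ and recognizing the Hilbert--Schmidt/trace pairing $\mathrm{Tr}(T_\nabla T_H) = \int \nabla\llp H\llp\,\tilde{g}_n(l)\tilde{g}_n(l')\dldl$, the directional derivative equals $\mathrm{Tr}(T_\nabla T_H)$ where $\nabla\mcI_n[K]$ is exactly the kernel named in the proposition. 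Since $|\ell'|\le 1$ and $\tilde{g}_n \ge M^{-1}$, this kernel is bounded by $2M^3$, so $T_\nabla$ is Hilbert--Schmidt, hence lies in $\mcB(L^2(\mu_n)) = \mcS(L^2(\mu_n))^*$; combined with convexity of $\mcI_n$ this forces $\partial\mcI_n(T_L) = \{T_\nabla\}$.

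For the nonsmooth term I would invoke the standard description of the subdifferential of a norm at a nonzero point, applied to the nuclear norm $\|\cdot\|_1$ on $\mcS(L^2(\mu_n))$ (dual norm the operator norm on $\mcB(L^2(\mu_n))$): $\partial(\xi_n\mcI_n^{\text{reg}})(T_L) = \{\xi_n V : V \in \mcB(L^2(\mu_n)),\ \|V\|_{\text{op}}\le 1,\ \mathrm{Tr}(VT_L) = \|T_L\|_1\}$, with $\|T_L\|_1 = \mathrm{Tr}[T_L] = \mcI_n^{\text{reg}}[L]$ on the positive cone (and the case $T_L = 0$ automatic). Because $\mcI_n$ and $\|\cdot\|_1$ are finite and norm-continuous everywhere --- indeed $|\mcI_n[K]-\mcI_n[L]| \le 2M^3\|T_K-T_L\|_1$ by \eqref{app:exist:func_cty} --- the Moreau--Rockafellar sum rule applies with no extra constraint qualification, giving $\partial(\Phi + \iota_\mcC)(T_L) = T_\nabla + \xi_n\partial\mcI_n^{\text{reg}}(T_L) + N_\mcC(T_L)$, where $N_\mcC(T_L) = \{W \in \mcB(L^2(\mu_n)) : \mathrm{Tr}(W(T_K - T_L))\le 0\ \forall K \in \mcC\}$. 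Unwinding $0 \in \partial(\Phi + \iota_\mcC)(T_L)$ then yields precisely the three stated conditions (the third being $-(T_\nabla + \xi_n V) \in N_\mcC(T_L)$). The converse direction is the easy half: given such a $V$, summing the subgradient inequalities $\mcI_n[K]\ge\mcI_n[L]+\mathrm{Tr}(T_\nabla(T_K-T_L))$ and $\mcI_n^{\text{reg}}[K]\ge\mcI_n^{\text{reg}}[L]+\mathrm{Tr}(V(T_K-T_L))$ and using the third condition gives $\Phi(K)\ge\Phi(L)$ for all $K\in\mcC$, and strict convexity of $\Phi$ (Proposition~\ref{app:min_exist}) upgrades this to $L$ being \emph{the} minimizer.

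The main obstacle is not any individual computation but the infinite-dimensional convex-analysis bookkeeping: pinning down the correct predual/dual chain $\mcK(L^2(\mu_n)) \subset \mcS(L^2(\mu_n)) \subset \mcB(L^2(\mu_n))$ so that $\mathrm{Tr}(\cdot\,\cdot)$ and $N_\mcC$ are the right objects, verifying the sum rule genuinely applies in this Banach-space setting (this is where the global norm-continuity of $\mcI_n$ from Proposition~\ref{app:min_exist} is used), and correctly identifying the nuclear-norm subdifferential together with its reduction to $\{V : \|V\|_{\text{op}}\le 1,\ \mathrm{Tr}(VT_L) = \mathrm{Tr}(T_L)\}$ on positive operators. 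A secondary care point is that the perturbation in the differentiability step is taken in trace norm while the estimates run in $L^1(\mu_n^{\otimes 2})$; this is harmless since trace-class $\subseteq$ Hilbert--Schmidt $\subseteq L^2 \subseteq L^1$ on the finite measure space $([0,1],\mu_n)$.
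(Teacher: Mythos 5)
Your proposal is correct and follows essentially the same route as the paper's proof: decompose the objective into the Fréchet-differentiable term $\mcI_n$ (whose derivative is represented by $T_{\nabla}$ via the trace pairing) plus the trace norm, compute the norm's subdifferential in the duality $\mcS(L^2(\mu_n))^* \cong \mcB(L^2(\mu_n))$, and combine via the first-order optimality condition over the convex set $\mcC$. You simply make explicit some bookkeeping (dominated convergence for the Gâteaux derivative, the Moreau--Rockafellar sum rule with $\iota_{\mcC}$, and the easy sufficiency direction) that the paper leaves implicit.
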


\begin{proof}[Proof of Proposition~\ref{app:kkt}]
  We begin by deriving the subgradient for both $\mcI_n[K]$ and $\mcI_n^{\text{reg}}[K]$, and then use the rules of subgradient calculus to obtain the KKT conditions. For $\mcI_n[K]$, note that we can write 
  \begin{equation}
    \mcI_n[K] := \intsq \sum_{x \in \{0, 1\} } \frac{ \tilde{f}_n(l, l', x) \ell( K\llp, x) }{ \tilde{g}_n(l) \tilde{g}_n(l') } \, \tilde{g}_n(l) \tilde{g}_n(l') \, dl dl'
  \end{equation}
  and so the subgradient (in terms of the operator) is a singleton, say $T_{\nabla}$, whose sole element is the operator with kernel given by the Fr\'{e}chet derivative of $\mcI_n[K]$
  \begin{equation}
    \nabla \mcI_n[K]\llp = \sum_{x \in \{0, 1\} } \frac{ \tilde{f}_n(l, l', x) \ell'( K\llp, x)  }{ \tilde{g}_n(l) \tilde{g}_n(l') }
  \end{equation}
  \parencite[e.g][Proposition~2.53]{barbu_convexity_2012}. As for $\mcI_n^{\text{reg}}[K]$, we recall that this equals $\mathrm{Tr}[T_K]$, i.e the trace norm of $T_K$, as $K$ is positive. Because the dual space of $S(L^2(\mu_n))$ is the space of bounded operators $L^2(\mu_n) \to L^2(\mu_n)$ equipped with norm $\| \cdot \|_{\text{op}}$, we have that 
  \begin{equation}
    \partial \mcI_n^{\text{reg}}[K] = \big\{ V \in \mcB(L^2(\mu_n)) \,:\, \mathrm{Tr}(V T_K) = \mcI_n^{\text{reg}}[K], \| V \|_{\text{op}} \leq 1 \big\}
  \end{equation}
  \parencite[e.g][Theorem~7.57]{aliprantis_infinite_2006}. Combining the two subgradients together says that $L$ is an optimizer to $\mcI_n[K] + \xi_n \mcI_n^{\text{reg}}[K]$ over $\mcC$ if and only if there exists some $V \in \mcB(L^2(\mu_n))$ such that 
  \begin{equation}
    \mathrm{Tr}(V T_{L} ) = \mcI_n^{\text{reg}}[K], \qquad \| V \|_{\text{op}} \leq 1, \qquad \mathrm{Tr}\big( ( T_{\nabla} + \xi_n V)( T_K - T_{L} ) \big) \geq 0 \text{ for all } K \in \mcC
  \end{equation}
  as stated.
\end{proof}

With this, we now state the full version of Theorem~\ref{thm:embed_conv}, complete with regularity conditions.  

\begin{theorem}
  \label{app:thm:embed_conv_full}
  Suppose that Assumptions~\ref{assume:slc}~and~\ref{assume:reg} hold and that $\xi_n = O(1)$. Write $\mcZ^{\geq 0} = \mathrm{cl}\big( \cup_{d \geq 1} \mcZ_d^{\geq 0}(A) \big)$ for the closure of the union of the $\mcZ_d^{\geq 0}(A)$ with respect to the weak* topology on the trace-class operators $L^2(\mu_n) \to L^2(\mu_n)$ as described in Proposition~\ref{app:min_exist}. For each $n$, let $K_n^*$ denote the unique minimizer to the optimization problem
  \begin{equation*}
    \min_{K \in \mcZ^{\geq 0}(A) } \big\{ \mcI_n[K] + \xi_n \mcI_n^{\text{reg}}[K] \big\},
  \end{equation*}
  and assume that the $K_n^*$ are uniformly bounded in $L^{\infty}([0, 1]^2)$.
  Moreover, suppose that either
  \begin{enumerate}[label=(\Roman*), leftmargin=*]
    \item on the same partition $\mcQ$ as given in Assumption~\ref{assume:reg}a), we have that $K_n^*$ is piecewise constant on $\mcQ \times \mcQ$;  
    \item the $K_n^*$ are all H\"{o}lder($[0,1]$, $\beta^*$, $L^*$) for some constants $\beta^*$ and $L^*$.
  \end{enumerate}
  Then there exists $A'$ (see Lemma~\ref{app:thm:eigenbound} and Lemma~\ref{app:thm:curvature}) such that whenever $A_1, A_2 \geq A'$, for any sequence of minimizers
  \begin{equation*}
    \whbmomega \in \argmin_{\bmomega \in ([-A_1,A_1]^d)^n} \big\{ \mcR_n(\bmomega) + \xi_n \mcR_n^{\text{reg} }(\bmomega) \big\} \text{ such that } \max_{i, j} \big| \langle \whomega_i, \whomega_j \rangle \big| \leq A_2
  \end{equation*}
    we have that under condition (II) that
  \begin{equation*}
    \frac{1}{n^2} \sum_{i, j \in [n]} \Big( \langle \whomega_i, \whomega_j \rangle - K_n^*(\lambda_i, \lambda_j) \Big)^2 = O_p\Big( r_n + d^{-\beta*} + \Big( \frac{ \log(n) }{n} \Big)^{\min\{ \beta, \beta^* \}/2 } \Big),
  \end{equation*}
  where $r_n$ is the relevant rate of convergence in Theorem~\ref{thm:loss_conv}. In particular, there exists a sequence of embedding dimensions $d = d(n)$ such that the above bound is $o_p(1)$. Under condition (I), the above
  rate of convergence can be improved as follows: there exists some
  constant $r \leq \kappa$ such that, as soon as $d \geq r$, we have that
  the above bound is of the order $O_p(r_n)$ only. In particular, as soon as $d \geq r$, the above bound is $o_p(1)$. 
\end{theorem}

\begin{remark}
  The conditions on $K_n^*$ are given in order to give explicit rates of convergence; in order to only argue that we obtain consistency of the bound given above, it suffices to have that the $K_n^*$ are equicontinuous for each $n$. Moreover, this is only necessary in order to relate the minimal values of the $\langle \whomega_i, \whomega_j \rangle$ directly to the values of $K_n^*(\lambda_i, \lambda_j)$; we can still obtain weaker notions of consistency (see e.g. \eqref{eq:weak_consistency}) if we do not impose any continuity requirements. With regards to the assumption that the infinity norm of the matrix $\langle \whomega_i, \whomega_j \rangle$ is bounded with
  $n$, this could be imposed as a constraint in Theorem~\ref{thm:loss_conv}
  to guarantee such a pair of minimizers; as highlighted in Remark~\ref{app:rmk:tighter_bounds}, this can lead to improved dependence on the
  dimension $d$. We highlight that as under the given assumptions on the
  $\fnone$ and $\fnzero$, the unconstrained minimizer when $\xi_n = 0$
  is uniformly bounded in $L^{\infty}([0, 1]^2)$, and so we do not consider
  these assumptions (both on $K_n^*$ and the gram matrix of the embedding vectors) to be restrictive.
\end{remark}

\begin{remark}
    We highlight that we usually expect $\beta = \beta^*$; for example, see Theorem~\ref{thm:calc_minima} for an example with the squared loss.
\end{remark}

\begin{remark}
    \label{app:rmk:rates_of_convergence}
    We briefly discuss the rates of convergence of the above estimator
    when in the dense regime and using the squared loss, as in this
    setting the bound we obtain naturally corresponds to the
    guarantees given in the graphon estimation literature. In particular,
    when $\fnone$, $\fnzero$ and $\tilde{g}_n(l)$ are constant (i.e, free of $l$), Theorem~\ref{thm:calc_minima} guarantees us that the minima
    of $\mcI_n[K] + \xi_n \mcI_n^{\text{reg}}[K]$ corresponds to a version
    of the original generating graphon $W$ whose singular values have been
    subject to a soft-thresholding operator, and we can take $\beta^* = \beta$ also.

    In such a scenario, we then note that if we also take Remark~\ref{app:rmk:tighter_bounds} into account, then the rate of convergence equals
    \begin{equation*}
        s_n + \Big( \frac{d}{n} \Big)^{1/2} + \Big(  \frac{ \log n}{ n^{2\beta/(1 + 2\beta)}} \Big)^{1/2} + d^{-\beta} + \Big( \frac{\log n}{n}  \Big)^{\beta / 2}.
    \end{equation*}
    By choosing the embedding dimension $d$ optimally so that $d = O(n^{1/(1+2\beta)})$, and noting that the $\big( \log n/ n^{2\beta/(1 + 2\beta)} \big)^{1/2}$ term is of a slower order than the $\big( \log n/n \big)^{\beta / 2}$ term, we end up with a rate of convergence 
    \begin{equation*}
        s_n + \Big(  \frac{ \log n}{ n^{2\beta/(1 + 2\beta)}} \Big)^{1/2}.
    \end{equation*}
    Up to logarithmic factors and the sampling term, this is a square root
    of the rate of convergence of the UVST procedure \parencite{xu_rates_2018},
    which is itself a square root of the minimax rates of estimation
    \parencite{gao_rate-optimal_2015}. We suspect that the difference 
    with the rates achieved in \parencite{xu_rates_2018} occurs due to our approach of looking at the rates of convergence
    between the empirical and population risks, rather than being able to
    work directly with the original objective at all times. It would be
    interesting to see whether the rates of convergence can be improved so that,
    up to the sampling term, we end up with the same rates of convergence as in \parencite{xu_rates_2018}. 
\end{remark}

\begin{proof}[Proof of Theorem~\ref{app:thm:embed_conv_full}]
  The idea of the proof is to associate a kernel $\widehat{K}$ to a minimizer $\whbmomega$ of $\mcR_n(\bmomega) + \xi_n \mcR_n^{\text{reg}}(\bmomega)$ over $([-A, A]^d)^n$, and then argue from the uniform convergence results developed in the proof of Theorem~\ref{thm:loss_conv} that this requires $\widehat{K}$ to be close to the minimizer of $\mcI_n[K] + \xi_n \mcI_n^{\text{reg}}[K]$. Consequently, we can then use the curvature of this function about its minima to derive consistency guarantees. 

  To associate a kernel $K$ to a collection of embedding vectors $\bmomega$, we begin by writing $\lambda_{n, (i)}$ for the associated order statistics of $\bm{\lambda}_n = (\lambda_1, \ldots, \lambda_n)$, and let $\pi_n$ be the mapping which sends $i$ to the rank of $\lambda_i$. We then define the sets
  \begin{equation*}
    A_{n, i} = \Big[ \frac{i - 1/2}{n+1}, \frac{i + 1/2}{n+1} \Big] \qquad \text{ for } i \in [n],
  \end{equation*}
  and define the sequence of functions 
  \begin{equation*}
    \widehat{K}_n(l, l') = \langle \widehat{\eta}(l), \widehat{\eta}(l') \rangle \qquad \text{ where } \qquad \widehat{\eta}(l) = \begin{cases} \whomega_i & \text{ if } l \in A_{n, \pi_n(i)}, \\ 
      0 & \text{ otherwise.} \end{cases}
  \end{equation*}
  for any sequence $\whbmomega$ of minimizers to $\mcR_n(\bmomega) + \xi_n \mcR_n^{\text{reg}}(\bmomega)$. The idea of the proof is to then focus on upper and lower bounding the quantity \begin{equation*} 
    \big\{ \mcI_n[\widehat{K}_n]+\xi_n \mcI_n^{\text{reg}}[\widehat{K}_n] \big\} - \big\{ \mcI_n[K_n^*]+\xi_n \mcI_n^{\text{reg}}[K_n^*] \big\},
  \end{equation*}
  where $K_n^*$ is the minimizer of $\mcI_n[K]+\xi_n \mcI_n^{\text{reg}}[K]$ over $\mcZ^{\geq 0}(A)$. 

  \textbf{Step 1: Bounding from above.} Begin by noting from the triangle inequality we have that 
  \begin{align*}
    \big\{ \mcI_n[\widehat{K}_n] & +\xi_n \mcI_n^{\text{reg}}[\widehat{K}_n] \big\} - \big\{ \mcI_n[K_n^*]+\xi_n \mcI_n^{\text{reg}}[K_n^*] \big\} \\
    & \leq \Big| \big\{ \mcI_n[\widehat{K}_n]+\xi_n \mcI_n^{\text{reg}}[\widehat{K}_n] \big\} - \big\{ \mcR_n(\whbmomega) +\xi_n \mcR_n^{\text{reg}}(\whbmomega) \big\} \Big| \tag{I} \\
    & + \Big| \big\{ \mcR_n[\whbmomega]+\xi_n \mcR_n^{\text{reg}}(\whbmomega) \big\} - \min_{K \in \mcZ_d^{\geq 0}(A) } \big\{ \mcI_n[K]+\xi_n \mcI_n^{\text{reg}}[K] \big\} \Big| \tag{II} \\
    & + \Big| \min_{K \in \mcZ_d^{\geq 0}(A)} \big\{ \mcI_n[K]+\xi_n \mcI_n^{\text{reg}}[K] \big\} - \min_{K \in \mcZ^{\geq 0}(A) } \big\{ \mcI_n[K]+\xi_n \mcI_n^{\text{reg}}[K] \big\} \Big|. \tag{III}
  \end{align*}
  We want to bound each of the terms (I), (II) and (III). By using Lemma~\ref{app:thm:embed_to_kernel}, Theorem~\ref{thm:loss_conv} and Lemma~\ref{app:thm:eigenbound} respectively, we end up being able to bound the above quantity by $O_p(q_n)$, where
  \begin{equation*}
    q_n = \begin{cases} r_n & \text{ if (I) holds} \\ 
        r_n + \big( \log(n)/n \big)^{\beta/2} + d^{-\beta^*} & \text{ if (II) holds.}
    \end{cases}
  \end{equation*}

  \textbf{Step 2: Bounding from below.} Let $q_n$ denote the upper bound on the rate of convergence of $\big\{ \mcI_n[\widehat{K}_n]+\xi_n \mcI_n^{\text{reg}}[\widehat{K}_n] \big\} - \big\{ \mcI_n[K_n^*]+\xi_n \mcI_n^{\text{reg}}[K_n^*] \big\}$ as developed above. Then by Lemma~\ref{app:thm:curvature}, we have that 
  \begin{equation*}
    \label{eq:weak_consistency}
    \intsq \Big( \widehat{K}_n\llp - K_n^*\llp \Big)^2 \, dl dl' = O_p(q_n).
  \end{equation*}
    If we then define the function 
  \begin{equation*}
    \overline{K}_n^*\llp = \begin{cases} K_n^*(\lambda_i, \lambda_j) & \text{ if } (l, l') \in A_{n, \pi_n(i) } \times A_{n, \pi_n(j) }, \\
      0 & \text{ otherwise} 
    \end{cases}
  \end{equation*}
  then by the same arguments as in the proof of Lemma~\ref{app:thm:embed_to_kernel} we get that 
  \begin{equation*}
    \intsq \big( \overline{K}_n^*\llp - K_n^*\llp \big)^2 \, dl dl' = \begin{cases} O_p(n^{-1/2}) & \text{ if (I) holds} \\ 
        O_p\big( (\log(n)/n)^{\beta^*/2} \big) & \text{ if (II) holds.}

    \end{cases}
  \end{equation*}
  Consequently, as a result of the triangle inequality we get that 
  \begin{align*}
    \frac{1}{(n+1)^2} & \sum_{i, j \in [n] } \big( K_n^*(\lambda_i, \lambda_j)  - \langle \whomega_i, \whomega_j \rangle \big)^2 \\ 
    & = \intsq \big( \overline{K}_n^*\llp - \widehat{K}_n\llp \big)^2 \, dl dl' = \begin{cases} 
        O_p(q_n) & \text{ if (I) holds } \\
        O_p\big( q_n + (\log(n)/n)^{\beta^*/2} \big) & \text{ if (II) holds,} \end{cases}
  \end{align*}
  giving the desired result.
\end{proof}
 
\subsection{Additional lemmata}

\begin{lemma}
  \label{app:thm:embed_to_kernel}
  Under the assumptions and notation of Theorem~\ref{app:thm:embed_conv_full}, we have that
  \begin{equation*}
    \Big| \big\{ \mcI_n[\widehat{K}_n]+\xi_n \mcI_n^{\text{reg}}[\widehat{K}_n] \big\} - \big\{ \mcR_n(\whbmomega) +\xi_n \mcR_n^{\text{reg}}(\whbmomega) \big\} \Big| = O_p\big( r_n + (\log(n) / n)^{\beta/2} \big),
  \end{equation*}
  where $r_n$ is the convergence rate in Theorem~\ref{thm:loss_conv} when
  condition (II) holds. When condition (I) holds, the rate of
  convergence can be improved to be simply $O_p(r_n)$. 
\end{lemma}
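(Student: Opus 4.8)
The plan is to route between the two sides through the chain of intermediate functionals already built in the proof of Theorem~\ref{thm:loss_conv}, but now evaluated at the single point $\whbmomega$ instead of supremized over a set. Since $\whbmomega$ minimizes $\mcR_n(\bmomega)+\xi_n\mcR_n^{\text{reg}}(\bmomega)$, it lies in the set $\Psi_n^{(1)}$ of \eqref{loss_thm:bound_1} with asymptotic probability one, so \eqref{loss_thm:bound_1} gives
\[
\big|\{\mcR_n(\whbmomega)+\xi_n\mcR_n^{\text{reg}}(\whbmomega)\}-\{\widehat{\mcR}_n(\whbmomega)+\xi_n\widehat{\mcR}_n^{\text{reg}}(\whbmomega)\}\big|=O_p(s_n),
\]
and \eqref{loss_thm:bound_2}, being uniform over $([-A,A]^d)^n$, gives
\[
\big|\{\widehat{\mcR}_n(\whbmomega)+\xi_n\widehat{\mcR}_n^{\text{reg}}(\whbmomega)\}-\{\mathbb{E}[\widehat{\mcR}_n(\whbmomega)\,|\,\bm{\lambda}_n]+\xi_n\widehat{\mcR}_n^{\text{reg}}(\whbmomega)\}\big|=O_p\big(A^3d^{3/2}(\rho_n n)^{-1/2}\big);
\]
both are $O_p(r_n)$. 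Since $\mcR_n(0)+\xi_n\mcR_n^{\text{reg}}(0)=\mcR_n(0)=\log 2\cdot\!\sum_{i\neq j}\mathbb{P}((i,j)\in S(\mcG_n)\mid\mcG_n)=O_p(1)$ (Assumption~\ref{assume:slc} and Markov), minimality of $\whbmomega$ together with these two displays also forces the data-fit part $\mathbb{E}[\widehat{\mcR}_n(\whbmomega)\,|\,\bm{\lambda}_n]$ and the scaled penalty $\xi_n\widehat{\mcR}_n^{\text{reg}}(\whbmomega)$ to be individually $O_p(1)$ (both terms being non-negative); this $O_p(1)$ control is exactly what the remaining step will consume.

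The only genuinely new estimate is the comparison of $\mathbb{E}[\widehat{\mcR}_n(\whbmomega)\,|\,\bm{\lambda}_n]+\xi_n\widehat{\mcR}_n^{\text{reg}}(\whbmomega)$ with $\mcI_n[\widehat{K}_n]+\xi_n\mcI_n^{\text{reg}}[\widehat{K}_n]$. Here I would use that $\widehat{K}_n$ is constant equal to $\langle\whomega_i,\whomega_j\rangle$ on each cell $A_{n,\pi_n(i)}\times A_{n,\pi_n(j)}$ and that $\widehat{\eta}\equiv 0$ off $\bigcup_k A_{n,k}$ (a set of Lebesgue measure $1/(n+1)$), so that
\[
\mcI_n[\widehat{K}_n]=\sum_{i\neq j}\sum_{x\in\{0,1\}}\Big(\tfrac{1}{(n+1)^2}\widehat{f}_{ij,x}\Big)\ell(\langle\whomega_i,\whomega_j\rangle,x)+O(A^2d/n),
\]
where $\widehat{f}_{ij,x}$ is the average of $\tilde f_n(\cdot,\cdot,x)$ over the cell and the $O(A^2d/n)$ absorbs both the leftover rectangles (bounded since $\tilde f_n$ and $\ell(0,\cdot)$ are bounded) and the $i=j$ cells (bounded since $\ell(\|\whomega_i\|^2,x)\le\log 2+dA^2$); an identical decomposition holds for $\mcI_n^{\text{reg}}[\widehat{K}_n]$ against $\widehat{\mcR}_n^{\text{reg}}(\whbmomega)$. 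The cell averages are then multiplicative perturbations of the pointwise values: every $l\in A_{n,k}$ satisfies $|l-\lambda_{n,(k)}|\le\frac{1}{2(n+1)}+\max_k|\lambda_{n,(k)}-\frac{k}{n+1}|=O_p(\sqrt{\log n/n})$ by the standard uniform deviation bound for the order statistics of a uniform sample, so the H\"older property and the uniform lower bound $M^{-1}$ on $\tilde f_{n,x},\tilde g_n$ from Assumption~\ref{assume:reg} give
\[
\widehat{f}_{ij,x}=\big(1+O_p((\log n/n)^{\beta/2})\big)\tilde f_n(\lambda_i,\lambda_j,x),\qquad \textstyle\int_{A_{n,\pi_n(i)}}\tilde g_n(l)\,dl=\tfrac{1}{n+1}\big(1+O_p((\log n/n)^{\beta/2})\big)\tilde g_n(\lambda_i),
\]
uniformly in $i,j,x$. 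Because both functionals are evaluated at the common point $\whbmomega$, the elementary inequality $\sum_k|a_k-b_k|t_k\le(\max_k|a_k/b_k-1|)\sum_k b_kt_k$ for $b_k,t_k\ge 0$ (the core estimate inside the proof of Lemma~\ref{app:loss:relative_convergence}, so the sublevel-set machinery of that lemma is not needed here) gives that this last difference is $O_p((\log n/n)^{\beta/2})\cdot\big(\mathbb{E}[\widehat{\mcR}_n(\whbmomega)\,|\,\bm{\lambda}_n]+\xi_n\widehat{\mcR}_n^{\text{reg}}(\whbmomega)\big)=O_p((\log n/n)^{\beta/2})$ by the $O_p(1)$ bounds above. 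Chaining the three contributions yields the claimed $O_p(r_n+(\log n/n)^{\beta/2})$.

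The main obstacle is this last comparison, for two reasons. First, $\ell(\langle\whomega_i,\whomega_j\rangle,x)$ is only a priori of size $O(dA^2)$, so an additive bound on the cell averages of $\tilde f_n$ would be hopelessly lossy; the estimate must be multiplicative, with the relative error $(\log n/n)^{\beta/2}$ turned into an absolute error by pairing it against the $O_p(1)$ value of the risk at $\whbmomega$ rather than against $\sum_{i,j}\sup_y\ell(y,x)$. Second, since $\max_k|\lambda_{n,(k)}-k/(n+1)|\asymp\sqrt{\log n/n}$ can dominate the cell width $1/(n+1)$, one cannot assert $\lambda_{n,(k)}\in A_{n,k}$; the H\"older bound must be applied to the distance from a generic point of $A_{n,k}$ to $\lambda_{n,(k)}$ via the bin centre $k/(n+1)$ and the triangle inequality, which is precisely what manufactures the $(\log n/n)^{\beta/2}$ rate.
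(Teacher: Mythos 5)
Your proposal is correct and follows essentially the same route as the paper's proof: chain the minimizer $\whbmomega$ through the intermediate functionals $\widehat{\mcR}_n+\xi_n\widehat{\mcR}_n^{\text{reg}}$ and $\mathbb{E}[\widehat{\mcR}_n\,|\,\bm{\lambda}_n]+\xi_n\widehat{\mcR}_n^{\text{reg}}$ from Theorem~\ref{thm:loss_conv} to absorb an $O_p(r_n)$ error, then compare cell averages of $\tilde f_{n,x},\tilde g_n$ to their pointwise values via the uniform order-statistic deviation and the H\"older condition, converting the resulting multiplicative $O_p((\log n/n)^{\beta/2})$ perturbation into an additive one by pairing it with the $O_p(1)$ value of the risk at $\whbmomega$. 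The two technical points you flag (the necessity of relative rather than absolute control of the cell averages, and the triangle-inequality detour through the bin centre because $\lambda_{n,(k)}$ need not lie in $A_{n,k}$) are exactly the ones the paper's argument handles.
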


\begin{proof}[Proof of Lemma~\ref{app:thm:embed_to_kernel}]
    We begin by handling what occurs when condition (II) of Theorem~\ref{app:thm:embed_conv_full} holds, and then detail what changes
    when condition (I) holds instead.
  
  Begin by defining the quantities 
  \begin{gather*}
    \tilde{c}_{f, n}(i, j, x) := \frac{1}{ | A_{n, \pi_n(i)} | |A_{n, \pi_n(j)} |} \int_{ A_{n, \pi_n(i)} \times A_{n, \pi_n(j)} } \tilde{f}_n(l, l', x) \, dl dl', \\ \tilde{c}_{g,n}(l) = \frac{1}{ |A_{n, \pi_n(i)} | }  \int_{A_{n, \pi_n(i) }} \tilde{g}_n(l) \, dl,
  \end{gather*}
  and note that as 
  \begin{equation*}
    \max_{ i \in [n] } \Big| \lambda_{n, (i) } - \frac{i}{n+1} \Big| = O_p\Big( \Big( \frac{\log(2n) }{n} \Big)^{1/2} \Big)
  \end{equation*}
  \parencite[by e.g][Theorem~2.1]{marchal_sub-gaussianity_2017}, we get that 
  \begin{align*}
    \big| \tilde{c}_{f, n}&(i, j, x) - \tilde{f}_n(\lambda_i, \lambda_j, x) \big| \\
    & \leq \frac{1}{ |A_{n, \pi_n(i) } | |A_{n, \pi_n(j) } | } \int_{A_{n, \pi_n(i)} \times A_{n, \pi_n(j)} } \big| \tilde{f}_n(l, l', x) - \tilde{f}_n(\lambda_{n, ( \pi_n(i) ) }, \lambda_{n, ( \pi_n(j) ) }, x ) \big| \, dl dl'  \\ 
    & \leq L \sup_{ (l, l') \in A_{n, \pi_n(i)} \times A_{n, \pi_n(j)} } \| (l, l') - ( \lambda_{n, ( \pi_n(i)) }, \lambda_{n, (\pi_n(j)) } ) \|_2^{\beta} \\ 
    & \leq L 2^{\beta/2} \Big(  \frac{1}{2n} + \max_{i \in [n] } \big| \lambda_{n, (i) } - \frac{i}{n+1} \big| \Big)^{\beta} = O_p\Big( \Big( \frac{\log n}{n} \Big)^{\beta/2} \Big),
  \end{align*}
  uniformly for all $i, j$, and similarly 
  \begin{equation*}
    \big| \tilde{c}_n(i) - \tilde{g}_n(\lambda_i) \big| = O_p\Big( \Big( \frac{\log n}{n} \Big)^{\beta/2} \Big)
  \end{equation*}
  uniformly over $i$. Using the fact that $\widehat{K}_n$ is piecewise constant, we can then write 
  \begin{align*}
    \mcI_n&[\widehat{K}_n]+\xi_n \mcI_n^{\text{reg}}[\widehat{K}_n] \\
    & = \frac{1}{(n+1)^2} \sum_{(i, j) \in [n] } \sum_{x \in \{0, 1\} } \ell( \langle \whomega_i, \whomega_j \rangle, x) \tilde{c}_{f, n}(i, j, x) + \xi_n \sum_{i \in [n] } \| \whomega_i \|_2^2 \tilde{c}_{g, n}(i) + \frac{2 (n-1) c_{\ell}}{(n+1)^2},
  \end{align*}
  where $c_{\ell}$ is a constant which depends on the choice of the 
  loss function.
  Introducing the function (compare with $\mathbb{E}[ \widehat{\mcR}^{\mcP_n}_{n, (1) }(\bmomega) \,|\, \bm{\lambda}_n ]$ from \eqref{loss_thm:add_diag})
  \begin{equation*}
    \mathbb{E}[ \widehat{\mcR}_{n, (1) }(\bmomega) \,|\, \bm{\lambda}_n ] := \frac{1}{n^2} \sum_{i, j \in [n] } \sum_{x \in \{0, 1\} } \tilde{f}_{n}(\lambda_i, \lambda_j, x) \ell( \langle \omega_i, \omega_j \rangle, x),
  \end{equation*}
  it follows that
  \begingroup 
  \allowdisplaybreaks 
  \begin{align*}
    \big| \big\{ \mcI_n[\widehat{K}_n] &+\xi_n \mcI_n^{\text{reg}}[\widehat{K}_n] \big\} - \big\{ \mcR_n(\whbmomega) +\xi_n \mcR_n^{\text{reg}}(\whbmomega) \big\} \big| \\
    & \leq \Big| \frac{1}{(n+1)^2} \sum_{i, j \in [n] } \sum_{x \in \{0, 1\} } \ell(\langle \whomega_i, \whomega_j \rangle, x) \big\{ \tilde{c}_{f, n}(i, j, x) - \tilde{f}_n(\lambda_i, \lambda_j, x) \big\} \nonumber \\
    & \qquad \qquad + \frac{\xi_n}{n+1} \sum_{i \in [n]} \| \whomega_i\|_2^2 \big\{ \tilde{c}_{g, n}(l) - \tilde{g}_n(\lambda_i) \big\} \Big| \tag{A} \\ 
     & + \big| \big\{ \mathbb{E}[ \widehat{\mcR}_{n, (1) }(\whbmomega) \,|\, \bm{\lambda}_n ] + \xi_n \widehat{\mcR}_n^{\text{reg}}(\whbmomega) \big\} - \big\{ \mcR_n(\whbmomega) + \xi_n \mcR_n^{\text{reg}}(\whbmomega) \big\} \big| \tag{B} \\ 
     & + O(n^{-1}) \big\{ \mathbb{E}[ \widehat{\mcR}_{n, (1) }(\whbmomega) \,|\, \bm{\lambda}_n ] + \xi_n \widehat{\mcR}_n^{\text{reg}}(\whbmomega) \big\} \tag{C} + O(n^{-1}).
  \end{align*}
  \endgroup
  From the proof\footnote{We note that the step where the `diagonal term' of including/excluding the sums of $\ell(\langle \omega_i, \omega_j \rangle, x)$ can be carried out before or after the stepping approximation step.} of Theorem~\ref{thm:loss_conv}, we know that the (B) term is of the order $O_p(r_n)$, and consequently via the uniform convergence bounds developed throughout the proof, this will also imply that $\big\{ \mathbb{E}[ \widehat{\mcR}_{n, (1) }(\whbmomega) \,|\, \bm{\lambda}_n ] + \xi_n \widehat{\mcR}_n^{\text{reg}}(\whbmomega) \big\} = O_p(1)$ and consequently the term in (C) will be of the order $O_p(n^{-1})$. For (A), we begin by noting that (A) can be bounded via the triangle inequality and the observations above by 
  \begin{equation*}
    \text{(A)} \leq \Big( \frac{1}{n^2} \sum_{i, j \in [n] } \sum_{x \in \{0, 1\} } \ell( \langle \whomega_i, \whomega_j \rangle, x) + \frac{1}{n} \sum_{i \in [n] } \| \whomega_i \|_2^2 \Big) \cdot O_p\Big( \Big( \frac{\log n}{n} \Big)^{\beta/2} \Big).
  \end{equation*} 
  To conclude, we just need to argue that 
  \begin{equation*}
    \frac{1}{n^2} \sum_{i, j \in [n] } \sum_{x \in \{0, 1\} } \ell( \langle \whomega_i, \whomega_j \rangle, x) + \frac{1}{n} \sum_{i \in [n] } \| \whomega_i \|_2^2  = O_p(1).
  \end{equation*}
  To see this, we note that this simply follows by using the fact that $\big\{ \mathbb{E}[ \widehat{\mcR}_{n, (1) }(\whbmomega) \,|\, \bm{\lambda}_n ] + \xi_n \widehat{\mcR}_n^{\text{reg}}(\whbmomega) \big\} = O_p(1)$ (as argued above) and the fact that the $\fnone$, $\fnzero$ and $\tilde{g}_n$ are assumed to be uniformly bounded below by $M^{-1}$. 

  When condition $(I)$ holds instead, we need to change the style
  of argument. Note that when $\mcQ = (A_1, \ldots, A_{\kappa})$,
  if we define the sets
  \begin{gather*}
    N_{\lambda, n, k} := \{ j \,:\, \lambda_j \in A_k \}, \quad N_{A, n, k} = \{ j \,:\, A_{n, \pi_n(j)} \; A_k \} \\
    M_{n, k} = N_{\lambda, n, k} \cap N_{A, n, k}, \quad M_n = \bigcup_{k = 1}^{\kappa} M_{n, k},
  \end{gather*}
  then by Theorem~63 of \parencite{davison_asymptotics_2021}, we have that
  $|M_n| \geq n - O_p(\sqrt{n})$, $|M_n^c| \leq O_p(\sqrt{n})$. To make
  use of this, note that
  \begin{equation*}
    \big| \tilde{c}_{f, n} (i, j, x) - \tilde{f}_n(\lambda_i, \lambda_j, x) \big| = \begin{cases} 0 & \text{ if } i, j \in M_n \\ M & \text{ otherwise,} \end{cases}
  \end{equation*}
  and also that  
  \begin{equation*}
    \big| \tilde{c}_n(i) - \tilde{g}_n(\lambda_i) \big| = \begin{cases} 0 & \text{ if } i \in M_n \\ M & \text{ otherwise.} \end{cases}
  \end{equation*}
  Writing $c_{\ell, 2} = \max\{ \ell(A_2, 1), \ell(A_2, 0), \ell(-A_2, 0), \ell(-A_2, 1) \}$, the bound in (A) is replaced by
  \begin{align*}
    \text{(A)} & \leq M  \Big( c_{\ell, 2} \frac{ |M_n^c|^2 + 2|M_n||M_n^c|}{(n+1)^2 } + \frac{ \xi_n |M_n^c|}{n+1} \Big) = O_p( n^{-1/2}),
  \end{align*}
  and so the argument progresses through as before, except that we can drop 
  the $(\log n/n)^{\beta/2}$ term in the overall rate of convergence.
\end{proof}

\begin{lemma} 
  \label{app:thm:eigenbound}
  Under the assumptions and notation of Theorem~\ref{app:thm:embed_conv_full}, there exists $A'$ such that whenever $A \geq A'$, we have, under condition (II) of Theorem~\ref{app:thm:embed_conv_full}, that
  \begin{equation*}
    \sup_{n \geq 1} \Big| \min_{K \in \mcZ_d^{\geq 0}(A)} \big\{ \mcI_n[K]+\xi_n \mcI_n^{\text{reg}}[K] \big\} - \min_{K \in \mcZ^{\geq 0}} \big\{ \mcI_n[K]+\xi_n \mcI_n^{\text{reg}}[K] \big\} \Big| = O(d^{-\beta^*}).
  \end{equation*}
  When condition (I) holds instead, then there exists $r \leq \kappa$
  and $A' < \infty$ such that, as soon as $d \geq r$ and $A \geq A'$, we have that
  \begin{equation*}
    \sup_{n \geq 1} \Big| \min_{K \in \mcZ_d^{\geq 0}(A)} \big\{ \mcI_n[K]+\xi_n \mcI_n^{\text{reg}}[K] \big\} - \min_{K \in \mcZ^{\geq 0}} \big\{ \mcI_n[K]+\xi_n \mcI_n^{\text{reg}}[K] \big\} \Big| = 0.
  \end{equation*}
\end{lemma}

\begin{proof}[Proof of Lemma~\ref{app:thm:eigenbound}]
    We begin with the argument under condition (II) first,
    and then highlight how the details change when condition (I) holds instead.
    Note that by the spectral theorem for self-adjoint compact operators, we can write for each $n$ the eigendecomposition
  \begin{equation*}
    K_n^*(l, l') = \sum_{i=1}^{\infty} \lambda_i(K_n^*) \psi_{n,i}(l) \psi_{n,i}(l')
  \end{equation*}
  where the $\lambda_i(K_n^*)$ are non-negative, monotone decreasing in $i$ for each $n$, and satisfy the bound $\lambda_i(K_n^*) = O( d^{-(1 + \beta^*)} )$ \parencite{reade_eigenvalues_1983}, and are uniformly bounded above by $\| K_n^* \|_{L^2(\mu_n^{\otimes 2})} \leq M^2 \sup_{n \geq 1} \| K_n^* \|_{\infty}$. As for the eigenfunctions, we note that they are orthonormal in that $\langle \psi_{n, i}, \psi_{n, j} \rangle_{L^2(\mu_n)} = \delta_{ij}$. Moreover, as the image of the operator $T_{K_n^*}$ under the unit $L^2(\mu_n)$ ball lies within the class of H\"{o}lder($[0, 1]$, $\beta^*$, $L^*$) functions, the $\psi_{n,i}$ are each H\"{o}lder($[0, 1]$, $\beta^*$, $L^*$), and as they are uniformly bounded in $L^2(\mu_n)$, they will also be uniformly bounded (across $i$ and $n$) in $L^{\infty}([0, 1])$ too
  (see e.g. Lemma~\ref{app:other_results:holder_eigenfunction}). Consequently, writing 
  \begin{equation*}
    K_{n, d}^*(l, l') = \sum_{i=1}^d \lambda_i(K_n^*) \psi_{n, i}(l) \psi_{n, i}(l')
  \end{equation*}
  for the best rank $d$ approximation to $K_n^*$, it follows that $K_{n, d}^* \in \mcZ_d^{\geq 0}(A)$ for any $A \geq A' = M \sqrt{ \sup_{n \geq 1} \| K_n^*\|_{\infty} } \cdot \sup_{n, i} \| \psi_{n, i} \|_{\infty}$. As a result, we have that 
  \begin{align*}
    \min_{K \in \mcZ_d^{\geq 0}(A)} \big\{ \mcI_n[K]+\xi_n \mcI_n^{\text{reg}}[K] \big\} & - \min_{K \in \mcZ^{\geq 0}} \big\{ \mcI_n[K]+\xi_n \mcI_n^{\text{reg}}[K] \big\} \\
    & \leq \big\{ \mcI_n[K_{n,d}^*]+\xi_n \mcI_n^{\text{reg}}[K_{n,d}^*] \big\} - \big\{ \mcI_n[K_n^*]+\xi_n \mcI_n^{\text{reg}}[K_n^*] \big\}.
  \end{align*}
  In order to obtain the final bound, we then note that by the local-Lipschitz property derived in Proposition~\ref{app:min_exist}v), in addition to the fact that the trace is linear and equals the sum of the eigenvalues of the operator, we get that (where we use $\lesssim$ to hide unimportant constants)
  \begin{align*}
    \big\{ \mcI_n[K_{n,d}^*] & +\xi_n \mcI_n^{\text{reg}}[K_{n,d}^*] \big\} - \big\{ \mcI_n[K_n^*]+\xi_n \mcI_n^{\text{reg}}[K_n^*] \big\} \\ 
    & \lesssim (2 \|K_n^* \|_{L^2(\mu_n^{\otimes 2})} )^{q-1} \cdot \| K_{n, d}^* - K_n^* \|_{L^2(\mu_n^{\otimes 2})} + \xi_n \big| \mathrm{Tr}[ T_{K_n^*} - T_{K_{n, d}^* } ] \big| \\ 
    & = O\Big( \sup_{n \geq 1} \| K_n^* \|_{\infty}^{q-1} \Big( \sum_{i=d+1}^{\infty} d^{-2(1+\beta^*)} \Big)^{1/2} + \sum_{i=d+1}^{\infty} d^{-(1+\beta^*)} \Big) = O( d^{-\beta^*} ),
  \end{align*}
  as desired, noting that the bound on the RHS holds uniformly in $n$. 

  We highlight that in the case where condition (I) holds,
  we know by the last part of Proposition~\ref{app:min_exist} that, for each $n$, there exists $r(n) \leq \kappa$ such that once $d \geq r(n)$ 
  and $A \geq (\kappa - 1) \| K_n^* \|_{\infty}$, the minima of $\mcI_n[K]+\xi_n \mcI_n^{\text{reg}}[K]$ over 
  $\mcZ_d^{\geq 0}(A)$ equals the minima over the set $\mcZ^{\geq 0}$.
  Consequently, under the assumptions stated, letting $r = \tilde{r} = \sup_{n \geq 1} r(n) \leq \kappa$ and $A' = (\kappa - 1) \sup_{n \geq 1} \|K_n^* \|_{\infty}$ gives the stated result.
\end{proof}

\begin{lemma}
  \label{app:thm:curvature}
  When $\ell(y, x)$ is the cross-entropy loss, under the assumptions and notation of Theorem~\ref{app:thm:embed_conv_full}, for any $K \in \mcZ^{\geq 0}$ such that $\| K \|_{\infty} < \infty$, we have that 
  \begin{align*}
    \big\{ \mcI_n[K]+\xi_n \mcI_n^{\text{reg}}[K] \big\}  - \big\{ \mcI_n[K_n^*]+\xi_n \mcI_n^{\text{reg}}[K_n^*] \big\}  \geq \frac{1}{2M} \intsq \big(  K_n^*\llp - K\llp \big)^2 \, dl dl'
  \end{align*}
  for some constant $M$ which depends on $C_M := \max\{ \| K \|_{\infty}, \sup_n \| K_n^* \|_{\infty} \} < \infty$; in particular one
  can take $M = (e^{C_M}/(1+e^{C_M})^2)^{-1}$. When $\ell(y, x)$ is the squared
  loss, one can relax the requirement that $\| K \|_{\infty} < \infty$,
  and can take $M = 1/2$ instead. 
\end{lemma}

\begin{proof}[Proof of Lemma~\ref{app:thm:curvature}]
    We give the details for the cross-entropy loss, as the argument
    for the squared loss is the same, except for that the requirement
    that $\| K \|_{\infty} < \infty$ can be dropped. To begin, we note
    that for any $y$ and $y'$ such that $|y|, |y'| \leq A$ for some constant
    $A$, we have that $\ell''(y, x) = e^{y}/(1 + e^y)^2 \leq e^{A}/(1 + e^A)^2 > 0$, and consequently $\ell(y, x)$ is strongly convex in $y$ on the domain
    $|y| \leq A$ for all $x \in \{0, 1\}$. As a result, we have the inequality
    \begin{equation*}
        \ell(y, x) \geq \ell(y', x) + (y - y')\ell'(y', x) +  \frac{ e^A}{2(1+e^A)^2} (y - y')^2
    \end{equation*}
    for $x \in \{0, 1\}$ and all $y, y'$ with $|y|, |y'| \leq A$. After multiplying the above inequality by the $\fnone$ and $\fnzero$ separately, adding the two inequalities together, and integrating, we obtain the inequality
    \begin{align*}
        \mcI_n[K] \geq  \mcI_n[K'] + \intsq \nabla \mcI_n[K'] &(K\llp - K'\llp) \, dl dl'  \\ 
        & + \frac{1}{2M} \intsq \Big( K\llp - K'\llp  \Big)^2 \, dl dl'
    \end{align*}
  for any $K, K' \in \mcZ^{\geq 0}(A)$ for which $\| K \|_{\infty}, \| K' \|_{\infty} < \infty$, where $M$ depends on the value of $C_M := \max\{ \| K \|_{\infty}, \| K '\|_{\infty} \}$; in particular, we have that $M = (e^{C_M}/(1+e^{C_M})^2)^{-1}$. Note that under our assumptions, the $K_n^*$ are uniformly bounded in $L^{\infty}([0, 1]^2)$, and consequently it follows that for any $K \in \mcZ^{\geq 0}(A)$ which is bounded in $L^{\infty}([0, 1]^2)$ that 
  \begingroup 
  \allowdisplaybreaks 
  \begin{align*}
    \big\{ &\mcI_n[K] +\xi_n \mcI_n^{\text{reg}}[K] \big\} - \big\{ \mcI_n[K_n^*]+\xi_n \mcI_n^{\text{reg}}[K_n^*] \big\} \\
    & \stackrel{(a)}{\geq}  \intsq \nabla \mcI_n[K_n^*] (K\llp - K_n^*\llp) \, dl dl'  + \frac{1}{2M} \intsq \big( K\llp - K_n^*\llp \big)^2 \, dl dl' \\
    & \qquad \qquad  + \xi_n \mcI_n^{\text{reg}}[K] - \xi_n \mcI_n^{\text{reg}}[K_n^*] \\ 
    & \stackrel{(b)}{\geq} \mathrm{Tr}( T_{\nabla}( T_K - T_{K_n^*})) + \xi_n \mathrm{Tr}(V^*(T_K - T_{K_n^*})) + \frac{1}{2M} \intsq  \big( K\llp - K_n^*\llp \big)^2 \, dl dl' \\ 
    & \stackrel{(c)}{\geq} \frac{1}{2M} \intsq \big(  K\llp - K_n^*\llp \big)^2 \, dl dl'.
  \end{align*}
  \endgroup
  To obtain this, in (a) we substituted in the bound on $ \mcI_n[K] - \mcI_n[K']$ stated above. In (b), we used the isometry between the trace inner product on operators and the corresponding inner product of the kernels, and the KKT conditions stating the existence of a bounded operator $V^*$ for which $\mathrm{Tr}(V^* T_{K_n^*}) = \mcI_n^{\text{reg}}[K_n^*]$ and $\| V^* \|_{\text{op}} \leq 1$; the latter property consequently implies that $\mcI_n^{\text{reg}}[K] \geq \mathrm{Tr}(V^* K)$ by the variational formulation of the trace. In (c), we then use the fact that $K_n^*$ is optimal provided that $\mathrm{Tr}( (T_{\nabla} + \xi_n V^*)( T_K - T_{K_n^*})) \geq 0$. 
\end{proof}
\section{Proof of additional theorems from Section~\ref{sec:theory:embed}}
\label{sec:app:extra}

In this section, we write $\mu_i(K)$ for either the $i$-th largest
eigenvalue of a symmetric matrix $K$, or the $i$-th largest eigenvalue
of a self-adjoint operator with
kernel $K$ (as introduced in the beginning of Appendix~\ref{sec:app:embed_conv}. We write $\sigma_i(K)$ for the corresponding singular values; recall that for a matrix $K \in \mathbb{R}^{n \times d}$, we have that $\sigma_r(K)^2 = \mu_r(KK^T)$ for any $r \leq \min\{n, d \}$, and that for a self-adjoint positive definite matrix or
operator $K$, we have that $\sigma_r(K) = \mu_r(K)$ for all $r$. 

Before proving Theorems~\ref{thm:procrustes}~and~\ref{thm:procrustes_2}, we require a brief lemma.

\begin{lemma}
    \label{app:other_results:kmat_lemma_1}
    Let $K: [0, 1]^2 \to \mathbb{R}$ be the kernel of
    a symmetric, positive operator
    which is either a) piecewise constant on a partition $\mcQ \times \mcQ$ where $\mcQ$ is a partition of $[0, 1]$ of size $\kappa$, or b) continuous.
    Suppose moreover that $K$ has rank exactly equal to $r$, where
    \begin{equation}
        K(x, y) = \sum_{i=1}^r \psi_i(x) \psi_i(y)
    \end{equation}
    for some non-zero, orthogonal functions $\phi_i: [0, 1] \to \mathbb{R}$
    which are piecewise continuous. Then if $\lambda_i$ are i.i.d $\mathrm{Unif}([0,1])$ and we define the random matrix $(K_{\lambda})_{ij} := K(\lambda_i, \lambda_j)$, then $K_{\lambda}$ is of rank $\leq r$, and with asymptotic
    probability $1$ as $n \to \infty$, has rank exactly equal to $r$.
\end{lemma}

\begin{proof}[Proof of Lemma~\ref{app:other_results:kmat_lemma_1}]
    Note that if we write $\Psi_{ij} = \psi_j(\lambda_i) \in \mathbb{R}^{n \times r}$, then as $K_{\lambda} = \Psi \Psi^T$, we know that the rank of $K_{\lambda}$ must be less than or equal to $r$. For the second part,
    we note that under the given conditions, we can apply Corollary~5.5 of \parencite{koltchinskii_random_2000} to the matrix $n^{-1} K_{\lambda}$;
    under a), the diagonal summability condition needed follows trivially, and under b), Mercer's
    theorem gives the diagonal summability condition needed, with
    the other conditions being satisfied as a result of $K$ being finite rank.
    Consequently we have that 
    \begin{gather*}
        \mu_r(n^{-1} K_{\lambda}) = \mu_r(K) + O_p(n^{-1/2}) > \frac{1}{2} \mu_r(K) \text{ with probability $\to 1$}.
    \end{gather*}
    In particular, with asymptotic probability $1$, $n^{-1} K_{\lambda}$ is
    of full rank, and therefore so is $K_{\lambda}$.
\end{proof}

\begin{proof}[Proof of Theorem~\ref{thm:procrustes}]
    To save on notation, we write $K$ for $K_n^*$, $K_{\lambda}$
    for the matrix $(K(\lambda_i, \lambda_j))_{ij}$, and $\phi_i(l)$ for
    the $\phi_{n,i}(l)$. We note 
    that Lemma~\ref{app:other_results:kmat_lemma_1} gives
    the guarantee that $K_{\lambda}$ is asymptotically of exact rank
    $r$. Writing $\Psi_{\lambda} \in \mathbb{R}^{n \times r}$ for the matrix $(\phi_j(\lambda_i))$ for $i \in [n]$ and $j \in [r]$, the same argument
    in Lemma~\ref{app:other_results:kmat_lemma_1}  guarantees that the singular value
    $\sigma_r(n^{-1/2} \Psi_{\lambda})^2 = \mu_r(n^{-1} K_{\lambda}) \geq \tfrac{1}{2} \mu_r(K) > 0$ with asymptotic probability $1$, and
    therefore we can work on an event where the $r$-th highest singular
    value of $n^{-1/2} \Psi_{\lambda}$ is uniformly bounded away from zero.
    
    With this, we can now apply Lemma~5.4 of \parencite{tu_low-rank_2016}, which states that for any matrices $U, V \in \mathbb{R}^{n \times r}$, we have that
    \begin{equation*}
        \min_{Q \in O(r)} \| U - V Q \|_F^2 \leq \frac{1}{2 (\sqrt{2} - 1) \sigma_r^2(V)} \| UU^T - VV^T \|_F^2,
    \end{equation*}
    where $\sigma_d(V)$ is the $d$-th largest singular value of the matrix $V$. We recall that $\sigma_r(V)^2 = \mu_r(VV^T)$. Applying this to $U = n^{-1/2} \bmomega$ and
    $V = n^{-1/2} \Psi_{\lambda}$, followed by the above remark, gives the
    desired result.
\end{proof}

\begin{proof}[Proof of Theorem~\ref{thm:procrustes_2}]
    For this, we begin by noting that as $\tilde{G}$ is defined to be
    a best rank $r$ approximation to the matrix $G$, $n^{-1} \tilde{G}$ is a
    best rank $r$ approximation to the matrix $n^{-1} G$, and consequently we have that
    \begin{equation}
        \label{eq:procrustes_2_1}
        n^{-2} \| \tilde{G} - G \|_F^2 = \| n^{-1} \tilde{G} - n^{-1} G \|_F^2 = \sum_{i=r+1}^d \mu_i( n^{-1} G)^2
    \end{equation}
    by the Eckart–Young–Mirsky theorem. To proceed, we then recall that as $G_{ij} = \langle \whomega_i, \whomega_j \rangle$, and moreover we have that $\mu_i(K_{\lambda}) = 0$ for $i \geq r + 1$ we have that
    \begin{equation}
        \label{eq:procrustes_2_2}
    \begin{aligned}
        \sum_{i=r+1}^d \mu_i( n^{-1} G)^2 & = \sum_{i=r+1}^d \big( \mu_i( n^{-1} G) - \mu_i( n^{-1} K_{\lambda}) \big)^2 \\ 
        & \leq \sum_{i=1}^d \big( \mu_i( n^{-1} G) - \mu_i( n^{-1} K_{\lambda}) \big)^2  \leq \| n^{-1} G - n^{-1} K_{\lambda} \|_F^2 = o_p(1)
    \end{aligned}
    \end{equation}
    where the last inequality follows by the Weilandt-Hoffman inequality \parencite{hoffman_variation_1953}, giving the first part of the theorem statement. The second part of the
    theorem statement then follows by applying the proof of Theorem~\ref{thm:procrustes} to the matrix $\tilde{\Omega}$, noting that
    \begin{equation*}
        n^{-2} \| \tilde{G} - K_{\lambda} \|_F^2 \leq n^{-2} \| \tilde{G} - G \|_F^2 + n^{-2} \| G - K_{\lambda} \|_F^2 \leq 2 n^{-2} \| G - K_{\lambda} \|_F^2 = o_p(1)
    \end{equation*}
    by the triangle inequality and by combining \eqref{eq:procrustes_2_1} and \eqref{eq:procrustes_2_2} together.
\end{proof}

Before proving Theorem~\ref{thm:calc_minima}, we require a lemma about
the eigenfunctions of an operator whose kernel is H\"{o}lder continuous.

\begin{lemma}
    \label{app:other_results:holder_eigenfunction}
    Suppose that $K: [0, 1]^2 \to \mathbb{R}$ is symmetric and H\"{o}lder$([0, 1]^2, \beta, L)$ continuous. Then the eigenfunctions of the associated
    operator $T_K$ are H\"{o}lder$([0, 1], \beta, L)$ continuous, 
    and moreover are uniformly bounded in $L^{\infty}([0, 1])$.
\end{lemma}

\begin{proof}[Proof of Lemma~\ref{app:other_results:holder_eigenfunction}]
    We begin by noting that for any function $f \in L^2([0, 1])$, we have
    that
    \begin{align*}
        \big| T_K[f](x) - T_K[f](y) \big| & \leq \int_0^1 | K(x, z) - K(y, z) | \cdot | f(z) | \, dx \\ 
        & \leq L \| f \|_1 | x - y |^{\beta} \leq L \| f \|_2 | x - y |^{\beta},
    \end{align*} 
    and therefore the image of the unit ball $\| f \|_2 = 1$ consists
    of H\"{o}lder$([0, 1], \beta, L)$ continuous functions; consequently,
    so are the eigenvectors. Moreover, we note that the image of such a ball
    gives functions which are uniformly bounded in $L^{\infty}([0, 1])$; indeed,
    writing $g = T_K[f]$, and picking any $x \in [0, 1]$, we have that
    \begin{equation*}
        | g(x) | \leq | g(x) - g(y) | + | g(y) | \text{ for all $y \in [0, 1]$}
    \end{equation*}
    and therefore by integrating against $y$ we end up with 
    \begin{align*}
        | g(x) |  \leq \int_0^1 | g(x) - g(y) | \, dy + \int_0^1 |g(y)| \, dy \leq L \int_0^1 |x - y |^{\beta} \, dy + \| g\|_1 \leq L + 1
    \end{align*}
    as $\| g\|_1 \leq \|g\|_2 = 1$, and $|x - y|^{\beta} \leq 1$ for all $x, y \in [0, 1]$. 
\end{proof}

\begin{proof}[Proof of Theorem~\ref{thm:calc_minima}]
    Without loss of generality, suppose that $c_1 = c_2 = 1$; otherwise, we can just rescale the regularization constant $\xi$ so that, up to constant scaling, the objective is the same as one with $c_1 = c_2 = 1$. Now,
    recall that by the spectral theorem for self-adjoint operators and
    Lemma~\ref{app:other_results:holder_eigenfunction}, we can write
    \begin{equation*}
        T_W[f] = \sum_{i=1}^{\infty} \mu_i(W) \langle f, \phi_i \rangle \phi_i \qquad \text{ and } \qquad W(l, l') = \sum_{i=1}^{\infty} \mu_i(W) \phi_i(l) \phi_i(l')
    \end{equation*}
    where the latter sum converges in $L^2$, the $\mu_i(W)$ are sorted in
    monotone decreasing absolute value, the $(\phi_i)_{i \geq 1}$ are
    orthonormal eigenfunctions which are H\"{o}lder($[0, 1], \beta, L$)
    and uniformly bounded in $L^\infty([0, 1])$. We now want to study
    the minimizer of the function
    \begin{equation*}
        \| T_W - T_L \|_{\mathrm{HS}}^2 + \xi \| T_L \|_1
    \end{equation*}
    over all positive kernels $L$, where we have phrased the problem
    entirely in terms of the associated operators. To do so, we begin
    by writing
    \begin{equation*}
        T_L = T_{L^{\parallel}} + T_{L^{\perp}} \qquad \text{ where } L^{\parallel}(x, y) = \sum_{n=1}^{\infty} \mu_i \phi_n(x) \phi_n(y), 
    \end{equation*}
    for some $\mu_i \geq 0$, where $L^{\perp}$ is symmetric, positive and orthogonal to $L^{\parallel}$ in that $L^{\perp}[\phi] = 0$ for any $\phi \in \mathrm{cl}\{ \mathrm{span}(\phi_1, \phi_2, \ldots) \}$. 
    We can then argue that any minimizer $L$ must have $L^{\perp} = 0$.
    Indeed, we have that by orthogonality of $T_{L^{\perp}}$ to both
    $T_W$ and $T_{L^{\parallel}}$, we get the decomposition
    \begin{equation*}
        \| T_W -  T_L \|_{\mathrm{HS}}^2 = \| T_W - T_{L^{\parallel}} \|_{\mathrm{HS}}^2 + \| T_{L^{\perp}} \|_{\mathrm{HS}}^2 
    \end{equation*}
    and so $ \| T_W - T_{L^{\parallel}} \|_{\text{HS}}^2 \leq \| T_W - T_L \|_{\text{HS}}^2$ with equality if and only if $T_{L^{\perp}} = 0$; and moreover $\| T_{L^{\parallel}} \|_1 \leq \| T_L \|_1$. As $T_{L^{\perp}} = 0$, we can then show that the objective function equals
    \begin{equation*}
        \sum_{i=1}^{\infty} ( \mu_i - \mu_i(W))^2 + \xi \sum_{i=1}^{\infty} \mu_i. 
    \end{equation*}
    To minimize this, we note that we can minimize each term in the sum
    over $\mu_i \geq 0$ by taking 
    $\widehat{\mu}_i = ( \mu_i(W) - \xi)_{+}$. In particular, as 
    the eigenvalues of $W$ decay as $O(i^{-(1/2 + \beta)})$ \parencite{reade_eigen-values_1983}, it follows that for 
    $i \geq N$ where $N = O(\xi^{-2/(2 + 2\beta)})$, we have 
    that $\widehat{\mu}_i = 0$. Consequently, we get that
    \begin{equation*}
        \widehat{L}(x, y) = \sum_{i=1}^N \widehat{\mu}_i \phi_i(x) \phi_i(y)
    \end{equation*}
    is the minimizing positive kernel. We now note that as the $\phi_i$
    are uniformly bounded in $L^{\infty}([0, 1])$, and the $\widehat{\mu}_i$
    are bounded above also, we can argue that $\widehat{L}$ will belong to
    some set $\mcZ_d^{\geq 0}(A)$ for some $A$ sufficiently large and 
    any $d \geq N$, and consequently $\widehat{L} \in \mcZ^{\geq 0}$ also.
    In particular, this means that $\widehat{L}$ is the minimizer of the
    objective function over the set $\mcZ^{\geq 0}$. Finally,
    we then note that as the eigenfunctions are H\"{o}lder and we
    have a finite sum of terms of the form $\widehat{\mu}_i \phi_i(x) \phi_i(y)$, this plus the boundedness of the eigenfunctions will imply that
    $\widehat{L}$ is H\"{o}lder of exponent $\beta$ also. 
\end{proof}
\section{Proof of results in Section~\ref{sec:samp}}
\label{sec:app:samp_proof}

In this section, given triangular arrays $(X_{ni})$ and $(Y_{ni})$ for $i \in I_n$, $n \geq 1$, we use the notation $X_{n, i} = (1 + O_p(r_n)) Y_{n, i}$ to be equivalent to saying that $\max_{i \in I_n} | X_{ni} / Y_{ni} - 1| = O_p(r_n)$. Before giving the proofs of Lemmas~\ref{sampling:psamp_formula},~\ref{sampling:unif_edge_uni_formula}~and~\ref{sampling:rw_uni_stat_formula}, we require the following result.

\begin{lemma}
    \label{app:formula:conc}
    Suppose that Assumption~2 holds. Let $g : [0, 1]^2 \to [0, \infty]$ be a bounded measurable function which is bounded away from zero, and define
    \begin{equation*}
        T_{n, i} = \frac{1}{n-1} \sum_{j \in [n] \setminus \{i \} } a_{ij} g(\lambda_j),  \qquad \text{ so } \qquad \mathbb{E}[ T_{n, i} \,|\, \lambda_i ] = \rho_n \int_0^1 W(\lambda_i, y) g(y) \, dy.
    \end{equation*}
    Then for all $t \geq 0$ we have that 
    \begin{equation*}
        \mathbb{P}\Big(  \Big| \frac{ T_{n, i} }{ \mathbb{E}[T_{n, i} \,|\, \lambda_i ] } - 1 \Big| \geq t \,|\, \lambda_i \Big) \leq 2\exp\Big( \frac{ - n \mathbb{E}[T_{n, i} \,|\, \lambda_i ]  t^2 }{ 8 \| g \|_{\infty} (1+t) } \Big)
    \end{equation*}
    and whence that $T_{n, i} = \mathbb{E}[T_{n, i} \,|, \lambda_i] (1 + O_p((\log n/n \rho_n )^{1/2} ) )$. Similarly, if we write 
    \begin{equation*}
        \widetilde{T}_{n, i} = \frac{1}{n-1} \sum_{j \in [n] \setminus \{i \} } (1 - a_{ij} ) g(\lambda_j), \text{ so } \mathbb{E}[ \widetilde{T}_{n, i} \,|\, \lambda_i ] = \int_0^1 (1 - \rho_n W(\lambda_i, y) ) g(y) \, dy, 
    \end{equation*}
    then $\widetilde{T}_{n, i} = \mathbb{E}[ \widetilde{T}_{n, i} \,|, \lambda_i] (1 + O_p((\log n/n)^{1/2} ) )$.
\end{lemma}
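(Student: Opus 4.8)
The plan is to recognize $T_{n,i}$ as, conditionally on $\lambda_i$, an empirical average of $n-1$ i.i.d.\ bounded random variables, apply a Bernstein inequality to obtain the stated tail bound, and then convert this into the uniform-in-$i$ statement by a union bound. First I would fix $i$ and condition on $\lambda_i$ only. Since the $\lambda_j$ ($j \neq i$) are i.i.d.\ uniform and $a_{ij}\mid\lambda_i,\lambda_j\sim\mathrm{Bern}(\rho_n W(\lambda_i,\lambda_j))$ independently, the summands $Z_j := a_{ij}g(\lambda_j)$ are i.i.d.\ given $\lambda_i$, lie in $[0,\|g\|_\infty]$ (we read the hypothesis as $\|g\|_\infty<\infty$), and have conditional mean exactly $\mu_i := \mathbb{E}[Z_j\mid\lambda_i] = \rho_n\int_0^1 W(\lambda_i,y)g(y)\,dy = \mathbb{E}[T_{n,i}\mid\lambda_i]$. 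This is the key bookkeeping point: conditioning on $\lambda_i$ alone already produces an i.i.d.\ sum with the \emph{right} centering, so no separate step is needed to pass from $\mathbb{E}[\,\cdot\mid\bm{\lambda}_n]$ to $\mathbb{E}[\,\cdot\mid\lambda_i]$. Since $Z_j\le\|g\|_\infty$, the conditional second moment is at most $\|g\|_\infty\mu_i$.

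Next I would apply Bernstein's inequality to $T_{n,i}=(n-1)^{-1}\sum_{j\ne i}Z_j$, using per-term variance proxy $\|g\|_\infty\mu_i$ and uniform bound $\|g\|_\infty$; this yields a bound of the form $\mathbb{P}(|T_{n,i}-\mu_i|\ge t\mu_i\mid\lambda_i)\le 2\exp(-c(n-1)t^2\mu_i/(\|g\|_\infty(1+t)))$ for a universal constant $c$, and a crude comparison of constants (using $n-1\ge n/2$) shows this is dominated by the claimed $2\exp(-n\mu_i t^2/(8\|g\|_\infty(1+t)))$. To upgrade to the $O_p$ statement I would use Assumption~\ref{assume:reg}: since $W\ge C>0$ and $g$ is bounded below by some $\delta>0$, we have $\mu_i\ge C\delta\rho_n$ uniformly in $\lambda_i$, and because $\rho_n=\omega(\log n/n)$ the choice $t=t_n:=\kappa(\log n/(n\rho_n))^{1/2}\to 0$ makes the exponent at most $-\kappa^2 C\delta\log n/(16\|g\|_\infty)$ for $n$ large, which is $\le -2\log n$ once $\kappa$ is taken large enough. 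A union bound over $i\in[n]$ then gives $\max_i|T_{n,i}/\mathbb{E}[T_{n,i}\mid\lambda_i]-1|=O_p(t_n)$, which is the first claim.

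For $\widetilde T_{n,i}$ the argument is repeated verbatim with $\widetilde Z_j:=(1-a_{ij})g(\lambda_j)$; the only change is that $\tilde\mu_i:=\mathbb{E}[\widetilde T_{n,i}\mid\lambda_i]=\int_0^1(1-\rho_n W(\lambda_i,y))g(y)\,dy$ is bounded below by a \emph{constant} for $n$ large (since $\rho_n\to 0$ and $W\le C'<1$, so $1-\rho_n W\ge 1/2$ eventually), rather than by something of order $\rho_n$. Hence the Bernstein exponent scales like $n t^2$ rather than $n\rho_n t^2$, and the same steps with $t_n=\kappa(\log n/n)^{1/2}$ give $\widetilde T_{n,i}=\mathbb{E}[\widetilde T_{n,i}\mid\lambda_i](1+O_p((\log n/n)^{1/2}))$.

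The argument is essentially routine; the only mildly delicate points are the conditioning bookkeeping noted above, matching the particular constant $8(1+t)$ (which is just a loosened form of the textbook Bernstein bound, so any valid constant suffices), and observing that the two different convergence rates are attributable entirely to the different lower bounds ($\asymp\rho_n$ versus $\asymp 1$) on the respective conditional means, combined with the sparsity condition $\rho_n=\omega(\log n/n)$ from Assumption~\ref{assume:reg}.
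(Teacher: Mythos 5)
Your proposal is correct, and it reaches the stated bound by a genuinely different (and more elementary) route than the paper. The paper proves the concentration inequality via Stein's method of exchangeable pairs: it constructs the pair $\big((\bm{\lambda}_{n,-i},\bm{A}_{n,i}),(\tilde{\bm{\lambda}}_{n,-i},\tilde{\bm{A}}_{n,i})\big)$ by resampling a uniformly chosen index $J$ (redrawing $\tilde\lambda_J$ and $\tilde a_{iJ}$), verifies the self-bounding variance condition $v(X)\leq Bf(X)+C$ with $B,C\asymp \|g\|_\infty/((n-1)\mathbb{E}[T_{n,i}\,|\,\lambda_i])$, and invokes Chatterjee's Theorem~3.9 to get the multiplicative Bernstein-type tail. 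You instead observe that, conditionally on $\lambda_i$ alone, the pairs $(\lambda_j,a_{ij})_{j\neq i}$ are genuinely i.i.d.\ — $\lambda_j\sim U[0,1]$ and $a_{ij}\,|\,\lambda_i,\lambda_j\sim\mathrm{Bern}(\rho_n W(\lambda_i,\lambda_j))$ independently — so $Z_j=a_{ij}g(\lambda_j)$ is an i.i.d.\ bounded sequence with conditional mean $\mu_i=\mathbb{E}[T_{n,i}\,|\,\lambda_i]$ and conditional second moment at most $\|g\|_\infty\mu_i$, and classical Bernstein applies directly. Your resulting exponent $(n-1)t^2\mu_i/(2\|g\|_\infty(1+t/3))$ dominates the stated $n t^2\mu_i/(8\|g\|_\infty(1+t))$ for $n\geq 2$, so the lemma's constant is recovered. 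The remaining steps — the uniform lower bound $\mu_i\geq C\delta\rho_n$ (resp.\ $\tilde\mu_i$ bounded below by a constant, which follows from $W\leq C'<1$), the choice $t_n\asymp(\log n/(n\rho_n))^{1/2}$ (resp.\ $(\log n/n)^{1/2}$) made admissible by $\rho_n=\omega(\log n/n)$, and the union bound over $i\in[n]$ — are identical in substance to the paper's. What the exchangeable-pairs machinery buys in the paper is consistency with the toolkit used in the companion reference and robustness to settings where the summands are not conditionally independent; for this particular statistic the i.i.d.\ structure given $\lambda_i$ is exact, so your shortcut is fully legitimate and arguably cleaner.
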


To prove this result we use the method of exchangeable pairs to derive a concentration inequality. Assuming that $(X, X')$ is an exchangeable pair of random variables, and $f$ is a measurable function with $\mathbb{E}[f(X)] = 0$, if we have an anti-symmetric function $F(X, X')$ satisfying 
\begin{equation*}
  \mathbb{E}[F(X, X') | X] = f(X), \qquad v(X) := \frac{1}{2} \mathbb{E}\big[ \big| \{ f(X) - f(X') \} F(X, X') \big| \,|\, X \big] \leq B f(X) + C,
\end{equation*}
for some constants $B, C \geq 0$, then we get the concentration inequality \parencite[Theorem~3.9]{chatterjee_concentration_2005}
\begin{equation}
  \label{eq:exchange}
  \mathbb{P}\big( |f(X)| \geq t) \leq 2 \exp\Big( -\frac{t^2}{2C + 2Bt} \Big).
\end{equation}

\begin{proof}[Proof of Lemma~\ref{app:formula:conc}]
    We begin by noting that as $g$ is assumed to be bounded away from zero, and by Assumption~2 we assume that $W$ is also bounded away from zero, there exists a constant $c > 0$ such that $\mathbb{E}[T_{n, i} | \lambda_i] \geq c \rho_n > 0$ for all $i \in [n], n \geq 1$. To derive the given bounds, we will use the method of exchangeable pairs, working conditional on the $\lambda_i$ at first in order to derive a concentration inequality. By then using the above lower bound on $\mathbb{E}[T_{n, i} | \lambda_i]$, we will be able to obtain a bound which holds unconditionally, and consequently get the claimed bound on $T_{n, i}$ holding uniformly across all the vertices. 

    To begin, let $\bm{A}_{n, i}$ denote the $i$-th row of the adjacency matrix $\bm{A}_n$, and $\bm{\lambda}_{n, -i} := (\lambda_j)_{ j \neq i}$. We construct an exchangeable pair $\big( (\bm{\lambda}_{n, -i}, \bm{A}_{n, i} ), (\tilde{\bm{\lambda}}_{n, -i}, \tilde{\bm{A}}_{n, i} ) \big)$ as follows: we select an index $J$ uniformly from $[n] \setminus \{i\}$, redraw $\tilde{\lambda}_{J} \sim U[0, 1]$ and $\tilde{a}_{iJ} \sim \mathrm{Bern}(W_n(\lambda_i, \tilde{\lambda}_{J} ) )$ but otherwise we keep the other entries of $\tilde{\bm{\lambda}}_n$ and $\tilde{\bm{A}}_{n,i}$ the same. With this, note that
    \begin{align*}
        \frac{1}{\mathbb{E}[T_{n, i} \,|\, \lambda_i] } \mathbb{E}\Big[ \sum_{j \in [n]\setminus\{i\} } & a_{ij} g(\lambda_j) - \sum_{j \in [n]\setminus\{i\} } \tilde{a}_{ij} g(\tilde{\lambda}_j) \,\Big|\, \lambda_i, \bm{\lambda}_{n, -i}, \bm{A}_{n, i} \Big]  = \frac{T_{n, i} }{ (n-1) \mathbb{E}[T_{n, i} \,|\, \lambda_i] } - 1,
    \end{align*}   
    and the associated variance term is of the form
    \begin{align*}
        v\big( \bm{\lambda}_{n, -i}, \bm{A}_{n, i} \big) & = \frac{1}{ (n-1) \mathbb{E}[T \,|\, \lambda_i]^2 } \mathbb{E}\Big[  \Big( \sum_{j \in [n] \setminus \{i\} }  \{ a_{ij} g(\lambda_j) - \tilde{a}_{ij} g( \tilde{\lambda}_j ) \} \Big)^2 \,\Big|\, \lambda_i, \bm{\lambda}_{n, -i}, \bm{A}_{n, i} \Big]  \\
        & = \frac{1}{(n-1)^2 \mathbb{E}[T \,|\, \lambda_i ]^2 } \sum_{j \in [n] \setminus \{ i \} } \mathbb{E}\Big[ \big\{ a_{ij} g(\lambda_j) - a_{ij}' g(\lambda_j') \big\}^2 \,\Big|\, \lambda_i, \bm{\lambda}_{n, -i}, \bm{A}_{n, i} \Big],
    \end{align*}
    where $(a'_{ij})_{ij}$ and $(\lambda_i')_{i \geq 1}$ are independent copies of $(a_{ij})_{ij}$ and $(\lambda_i)_{i \geq 1}$. To bound this last quantity, we write
    \begin{align*} 
        \mathbb{E}\Big[ \big\{ a_{ij} g(\lambda_j)  & - a_{ij}' g(\lambda_j') \big\}^2 \Big|\, \lambda_i, \bm{\lambda}_{n, -i}, \bm{A}_{n, i} \Big] \\
        & = \| g \|_{\infty}^2 \mathbb{E}\Big[ \Big\{ a_{ij} \frac{g(\lambda_j)}{ \|g \|_{\infty} } - a_{ij}' \frac{ g(\lambda_j') }{ \| g \|_{\infty}}\Big\}^2 \,\Big|\, \lambda_i, \bm{\lambda}_{n, -i}, \bm{A}_{n, i} \Big] \\
        & \leq 2 \| g \|_{\infty} \mathbb{E}\Big[ a_{ij} g(\lambda_j) + a_{ij}'  g(\lambda_j') \,\Big|\, \lambda_i, \bm{\lambda}_{n, -i}, \bm{A}_{n, i} \Big],
    \end{align*}
    where we used the fact the inequalities $(a- b)^2 \leq 2(a^2 + b^2) \leq 2(a+b)$ for $a, b \in [0, 1]$ to obtain the last line. It therefore follows that 
    \begin{equation*}
        v\big( \bm{\lambda}_{n, -i}, \bm{A}_{n, i} \big) \leq \frac{2 \|g \|_{\infty} }{ (n-1) \mathbb{E}[T_{n, i} | \lambda_i] } \Big( \frac{ T_{n, i} }{ (n-1) \mathbb{E}[T_{n, i} | \lambda_i ] } + 1 \Big)
    \end{equation*}
    from which we can apply the inequality stated in \eqref{eq:exchange} to get the stated concentration inequality. As $\mathbb{E}[T_{n, i}] | \lambda_i ] \geq c \rho_n$, we can conclude that  
    \begin{equation*}
        \mathbb{P}\Big( \Big| \frac{ T_{n, i} }{ \mathbb{E}[T_{n, i} \,|\, \lambda_i ] } - 1 \Big| \geq t \Big) \leq 2 \exp\Big( \frac{ -cn \rho_n t^2 }{8 \|g \|_{\infty} (1+t) } \Big)
    \end{equation*}
    for all $i \in [n]$, from which taking a union bound allows us to conclude that $T_{n,i} = \mathbb{E}[T_{n,i} \,|\, \lambda_i] (1 + O_p( (\log(n)/n \rho_n)^{1/2} ))$. The same style of argument gives the claimed result when $a_{ij} \to 1 - a_{ij}$, noting that in this case one can instead argue that $\mathbb{E}[\tilde{T}_{n, i} \,|\, \lambda_i] \geq c'$ for some constant $c' > 0$ for all $i \in [n]$, $n \geq 1$.
\end{proof}

\begin{proof}[Proof of Lemma~\ref{sampling:psamp_formula}]
    We note that a vertex $i$ is sampled with probability $k/n$, and any pair of vertices $(i, j)$ is sampled with probability $k(k-1)/n(n-1)$, so the claimed result follows immediately.
\end{proof}

\begin{proof}[Proof of Lemma~\ref{sampling:unif_edge_uni_formula}]
    The formulae for $f_n(l, l', 1)$ and $f_n(l, l', 0)$ are given in Proposition~72 of \parencite{davison_asymptotics_2021}. It remains to derive the formula for $\tilde{g}_n(\lambda_i)$. For convenience, we denote $\tilde{s}_n = (\log(n)/ n \rho_n)^{1/2}$. To continue, we note that in the proof of Proposition~72 of \parencite{davison_asymptotics_2021}, it is shown that
    \begin{gather*}
        \mathbb{P}\big( u \in \mcV(S_0(\mcG_n)) \,|\, \mcG_n \big) = \frac{2 k W(\lambda_u, \cdot) }{ \mcE_W n} \big( 1 + O_p\big( \tilde{s}_n \big) \big), \\
        \mathbb{P}\big( B( l, \mathrm{Ug}_{\alpha}( u \,|\, \mcG_n) \geq 1 \,|\, \mcG_n \big)  = \frac{ l W(\lambda_u, \cdot)^{\alpha} }{ n \mcE_W(\alpha) } \big( 1 + O_p(\tilde{s}_n)) \big), \\
        \mathbb{P}\big( u, v \in \mcV(S_0(\mcG_n)) \,|\, \mcG_n = \Big( \frac{2 k a_{uv} }{ n^2 \rho_n \mcE_W} + \frac{ 4k(k-1) W(\lambda_u, \cdot) W(\lambda_v, \cdot) }{ n^2 \mcE_W^2} \Big) \cdot \big( 1+ O_p(\tilde{s}_n) \big),
    \end{gather*}
    and as a particular consequence, it therefore follows that 
    \begin{equation*}
      \mathbb{P}\big( u \in \mcV(S_0(\mcG_n)) \,\big|\, v \in \mcV(S_0(\mcG_n)), \mcG_n \big) = \Big( \frac{ a_{uv} }{ n \rho_n W(\lambda_v, \cdot) } + \frac{ 2(k-1) W(\lambda_u, \cdot) }{ n \mcE_W} \Big) \cdot \big( 1 + O_p(\tilde{s}_n) \big).
    \end{equation*}
    To begin in finding the formula for $\tilde{g}_n(\lambda_i)$, we note that
    \begin{equation*}
        \mathbb{P}\big( u \in \mcV(S(\mcG_n)) \,|\, \mcG_n \big) = \mathbb{P}\big( \mcV(u \in S_0(\mcG_n)) \,|\, \mcG_n \big) + \mathbb{P}\big( u \in \mcV(S_{ns}(\mcG_n) \setminus S_0(\mcG_n)) \,|\, \mcG_n \big),
    \end{equation*}
    where the first term is given as above. The second term corresponds to the probability that the vertex arises only through the negative sampling process, and so 
    \begin{equation*}
      \mathbb{P}\big( u \in \mcV(S_{ns}(\mcG_n) \setminus S_0(\mcG_n)) \,|\, \mcG_n \big) = \mathbb{P}\Big( \bigcup_{v \in \mcV_n \setminus \{ u \} } A_v \,\Big|\, \mcG_n \Big) 
    \end{equation*}
    where $A_v = \{ v \in \mcV(S_0(\mcG_n)) , \text{$u$ selected via negative sampling from $v$} \}$. We then have that  
    \begin{align*}
        \Big| \mathbb{P}\Big( \bigcup_{v \in \mcV_n\setminus \{u\} } A_v \,\Big|\, \mcG_n \Big) & - \sum_{ v \in \mcV_n\setminus \{u\} } \mathbb{P}( A_v \,|\, \mcG_n ) \Big| \leq \frac{1}{2} \sum_{ \substack{v, v' \in \mcV_n\setminus \{u\} \\ v' \neq v} } \mathbb{P}\big( A_v \cap A_{v'} \,|\, \mcG_n \big) \\
        & \leq \sum_{v \in \mcV_n\setminus \{u\}} \mathbb{P}\big(A_v \,|\, \mcG_n \big) \cdot \max_{v' \in \mcV_n \setminus \{ u \} } \sum_{v \in \mcV_n \setminus \{v', u \} } \mathbb{P}\big( A_v \,|\, A_{v'}, \mcG_n \big).
    \end{align*}
    We begin by finding the asymptotic form of $\sum_{ v \in \mcV_n\setminus \{u\} } \mathbb{P}( A_v \,|\, \mcG_n )$, where we find that 
    \begin{align*} 
      \sum_{v \in \mcV_n\setminus \{u\}} \mathbb{P}\big( A_v \,|\, \mcG_n \big) & = \sum_{v \neq u} \mathbb{P}\big( v \in \mcV(S_0(\mcG_n)) \big) \mathbb{P}\big( B( l, \mathrm{Ug}_{\alpha}( u \,|\, \mcG_n) \geq 1 \,|\, \mcG_n \big) (1 - a_{uv} ) \\
      & = \big( 1 + O_p\big( \tilde{s}_n \big) \big)  \cdot \frac{ 2kl W(\lambda_u, \cdot)^{\alpha} }{ n \mcE_W(\alpha) \mcE_W } \cdot \frac{1}{n} \sum_{ v \in \mcV_n\setminus \{u\} } (1 - a_{uv} ) W(\lambda_v, \cdot) \\
      & = \big( 1 + O_p\big( \tilde{s}_n \big) \big)  \frac{ 2kl W(\lambda_u, \cdot)^{\alpha} }{ n \mcE_W(\alpha) \mcE_W } \cdot \int_0^1 (1 - \rho_n W(\lambda_u, y) ) W(y, \cdot) \, dy,
  \end{align*}
  where we have used the formulae quoted at the beginning of the proof and Lemma~\ref{app:formula:conc}. It remains to examine the term 
  \begin{equation*}
    \max_{v' \in \mcV_n \setminus \{ u \} } \sum_{v \in \mcV_n \setminus \{v', u \} } \mathbb{P}\big( A_v \,|\, A_{v'}, \mcG_n \big).
  \end{equation*}
  To do so, note that we can write 
    \begin{align*}
        \mathbb{P}\big( A_{v} \cap A_{v'} & \,|\, \mcG_n \big) \\
        & = \mathbb{P}\big( v, v' \in \mcV(S_0(\mcG_n)) \,|\, \mcG_n \big) \mathbb{P}\big( B( l, \mathrm{Ug}_{\alpha}( u \,|\, \mcG_n) ) \geq 1 \,|\, \mcG_n \big)^2 (1 - a_{uv} ) (1 - a_{uv'} ) 
    \end{align*}
  and so 
    \begin{align*}
        \mathbb{P}\big( A_v \,|\, & A_{v'}, \mcG_n \big) \\ 
        & = \mathbb{P}\big( v \in \mcV(S_0(\mcG_n)) \,|\,  v' \in \mcV(S_0(\mcG_n)), \mcG_n\big) \mathbb{P}\big( B( l, \mathrm{Ug}_{\alpha}( u \,|\, \mcG_n) ) \geq 1 \,|\, \mcG_n \big) (1 - a_{uv} ).
    \end{align*}
    It therefore follows that, using the results stated at the beginning of the proof, 
    \begin{align*}
      & \sum_{v \in \mcV_n \setminus \{v', u \} } \mathbb{P}\big( A_v \,|\, A_{v'}, \mcG_n \big) \\
      & \;= (1 + O_p(\tilde{s}_n)) \frac{ l W(\lambda_u, \cdot)^{\alpha} }{ n \mcE_W(\alpha) } \sum_{v \in \mcV_n \setminus \{v', u \} }  \mathbb{P}\big( v \in \mcV(S_0(\mcG_n)) \,|\,  v' \in \mcV(S_0(\mcG_n)), \mcG_n\big) (1 - a_{uv} ) \\
      & \;= (1 + O_p(\tilde{s}_n)) \frac{ l W(\lambda_u, \cdot)^{\alpha} }{ n^2 \mcE_W(\alpha) } \sum_{v \in \mcV_n \setminus \{v', u \} } (1 - a_{uv} ) \cdot \big( \frac{a_{v v'} }{ \rho_n W(\lambda_{v'}, \cdot) } + 2(k-1) \mcE_W^{-1} W(\lambda_v, \cdot) \big) \\ 
      & \;= O_p( n^{-1} ) 
    \end{align*}
    uniformly across all $v', u$, and therefore
    \begin{equation*}
        \max_{v' \in \mcV_n \setminus \{ u \} } \sum_{v \in \mcV_n \setminus \{v', u \} } \mathbb{P}\big( A_v \,|\, A_{v'}, \mcG_n \big) = O_p( n^{-1} ).
    \end{equation*}
    Combining all of the above together then gives that 
    \begin{align*}
      \mathbb{P}\big( u &\in \mcV(S(\mcG_n)) \,|\, \mcG_n \big) \\ 
      & = \frac{2k}{n \mcE_W }\Big( W(\lambda_u, \cdot) +  \frac{ l W(\lambda_u, \cdot)^{\alpha} }{ \mcE_W(\alpha)} \cdot \int_0^1 (1 - \rho_n W(\lambda_u, y) ) W(y, \cdot) \, dy \Big) \big( 1 + O_p\big( \tilde{s}_n \big) \big)  
    \end{align*}
    and so we get the stated formula for $g_n$ with $s_n = \tilde{s}_n$. 
\end{proof}

\begin{proof}[Proof of Lemma~\ref{sampling:rw_uni_stat_formula}]
  The formulae for $f_n(l, l', 1)$ and $f_n(l, l', 0)$ are given in Proposition~74 of \parencite{davison_asymptotics_2021}. We also note that
  within the proof of Proposition~74 of \parencite{davison_asymptotics_2021}, we have that 
  \begin{gather*}
    \mathbb{P}\big( u \in \mcV(S_0(\mcG_n)) \,|\, \mcG_n \big) = \frac{ k W(\lambda_u, \cdot) }{n \mcE_W } \big( 1 + O_p\big( \tilde{s}_n \big) \big), \\
    \mathbb{P}\big( \tilde{v}_i = u \,|\, \mcG_n \big)  = \frac{ W(\lambda_u, \cdot) }{ n \mcE_W } \big( 1 + O_p(\tilde{s}_n) \big), \\ 
    \mathbb{P}\big( u \text{ selected via negative sampling from } v \,|\, \mcG_n \big) = \frac{ l W(\lambda_u, \cdot)^{\alpha} (1 - a_{uv} ) }{ n \mcE_W(\alpha) } \big( 1 + O_p(\tilde{s}_n) \big),
  \end{gather*}
  where we again write $\tilde{s}_n = (\log(n)/n \rho_n)^{1/2}$. To derive the corresponding formula for $\tilde{g}_n(l)$, we begin by noting 
  \begin{equation*}
    \mathbb{P}\big( u \in \mcV(S(\mcG_n)) \,|\, \mcG_n \big) = \mathbb{P}\big( u \in \mcV(S_0(\mcG_n)) \,|\, \mcG_n \big) + \mathbb{P}\big( u \in \mcV(S_{ns}(\mcG_n) \setminus S_0(\mcG_n)) \,|\, \mcG_n \big).
  \end{equation*}
  The first term is given above, so we focus on the second. Denoting $A_i(u) = \{ \tilde{v}_i = u \}$ for $i \leq k + 1$ and $u \in \mcV_n$, and $B_i(v|u) = \{ v \text{ selected via negative sampling from } u \}$, we know that 
  \begin{equation*}
    \mathbb{P}\big( u \in \mcV(S(\mcG_n) \setminus S_0(\mcG_n)) \,|\, \mcG_n \big) = \mathbb{P}\Big( \bigcup_{i=1}^{k + 1} \bigcup_{v \in [n] \setminus \{ u \} } A_i(v) \cap B_i(u | v) \,|\, \mcG_n \Big).
  \end{equation*}
  Letting $C_i = \cup_{v \in [n] \setminus \{u\} } \{ A_i(v) \cap B_i(u | v) \}$, we note that
  \begin{equation*}
    \Big( \sum_{i=1}^{k+1} 1[C_i] \Big) - 1\Big[ \bigcup_{j=1}^{k+1} C_j \Big] = \sum_{i = 1}^k 1\Big[ C_i \cap \cup_{j > i} C_j \Big],
  \end{equation*}
  and moreover that as the $A_i(v)$ are disjoint across all $v \in \mcV_n$ for each $i$ fixed, we have that 
  \begin{equation*}
    \sum_{i=1}^{k+1} \mathbb{P}\Big( \bigcup_{v \in [n] \setminus \{ u \} } A_i(v) \cap B_i(u | v) \,|\, \mcG_n \Big) = \sum_{i=1}^{k+1} \sum_{v \in [n] \setminus \{ u \} } \mathbb{P}\Big(  A_i(v) \cap B_i(u | v) \,|\, \mcG_n \Big).
  \end{equation*}
  Combining the above two facts therefore gives 
  \begingroup 
  \allowdisplaybreaks
  \begin{align*}
    \Big| \mathbb{P}\Big( \bigcup_{i=1}^{k + 1} & \bigcup_{v \in [n] \setminus \{ u \} } A_i(v) \cap B_i(u | v) \,|\, \mcG_n \Big) - \sum_{i=1}^{k+1} \sum_{v \in [n] \setminus \{ u \} } \mathbb{P}\big( A_i(v) \cap B_i(u | v) \,|\, \mcG_n \big) \Big| \\
    & \leq \sum_{i=1}^{k} \mathbb{P}\Big(  \Big\{  \bigcup_{v \in [n] \setminus \{ u \} } A_i(v) \cap B_i(u | v) \Big\} \cap \Big\{ \bigcup_{j > i} \bigcup_{v' \in [n] \setminus \{ u \} } A_j(v') \cap B_j(u | v') \Big\} \,|\, \mcG_n \Big) \\ 
    & \leq \sum_{i=1}^{k} \sum_{j > i } \sum_{v, v' \in [n] \setminus \{ u \} } \mathbb{P}\big( A_j(v') \cap B_j(u | v') \cap A_i(v) \cap B_i(u | v) \,|\, \mcG_n).
  \end{align*}
  \endgroup
  To handle the intersection probabilities, we note that we can write (using the above formulae), for indices $i < j$ and $v, v'$, that
  \begingroup 
  \allowdisplaybreaks
  \begin{align*}
        \mathbb{P}\big( A_i(v) & \cap B_i(u | v) \cap A_j(v') \cap B_j( u | v') \,|\, \mcG_n \big) \\
        & =  \big( 1 + O_p\big( \tilde{s}_n \big) \big) \cdot \frac{ l^2 W(\lambda_u, \cdot)^{2\alpha} }{n^2 \mcE_W(\alpha)^2 } \mathbb{P}\big( A_j(v') \,|\, A_i(v) \mcG_n \big) \mathbb{P}\big( A_i(v) \,|\, \mcG_n \big)  (1 - a_{uv} )(1 - a_{uv'} ) \\ 
        & = \big( 1 + O_p\big( \tilde{s}_n \big) \big) \cdot \frac{ l^2 W(\lambda_u, \cdot)^{2\alpha} W(\lambda_v, \cdot) }{n^3 \mcE_W(\alpha)^2  \mcE_W } \mathbb{P}\big( A_j(v') \,|\, A_i(v), \mcG_n \big)  (1 - a_{uv} )(1 - a_{uv'} ).
  \end{align*}
  \endgroup
  Write $E_n$ for the number of edges in $\mcG_n$ and $\mathrm{deg}_n(u)$ for the degree of the vertex $u$ in $\mcG_n$; then by Proposition~61 of \parencite{davison_asymptotics_2021} we have that 
  \begin{equation*} 
    \max_{u \in [n] } \frac{1}{ \mathrm{deg}_n(u) } = O_p( (n \rho_n)^{-1} ), \qquad \mathrm{deg}_n(u) = n \rho_n W(\lambda_u, \cdot) (1 + O_p(\tilde{s}_n) ), 
  \end{equation*}
  and we note that by Assumption~\ref{assume:reg} that $W(\lambda_u, \cdot)$ is bounded below away from zero (and above by one) uniformly over all $\lambda_u \in [0, 1]$. To handle the $\mathbb{P}(A_j(v') \,|\, A_i(v), \, \mcG_n)$ term, we note that by stationarity of the random walk and the Markov property that, when $j > i + 1$
  \begingroup 
  \allowdisplaybreaks
  \begin{align*}
    \mathbb{P}( A_j(v') \,|\, A_i(v), \, \mcG_n) & = \mathbb{P}( \tilde{v}_{j-i+1} = v' \,|\, \tilde{v}_1 = v,  \, \mcG_n) \\ 
    & = \sum_{u \,:\, a_{uv} = 1 } \mathbb{P}( \tilde{v}_{j-i+1} = v' \,|\, \tilde{v}_2 = u, \mcG_n ) \mathbb{P}( \tilde{v}_2 = u \,|\, \tilde{v}_1 = v, \mcG_n ) \\ 
    & = \sum_{u \,:\, a_{uv} = 1 } \frac{2 E_n}{\mathrm{deg}_n(u) \mathrm{deg}_n(v) } \mathbb{P}( \tilde{v}_{j-i+1} = v' \,|\, \tilde{v}_2 = u, \mcG_n ) \mathbb{P}( \tilde{v}_2 = u \,|\, \mcG_n ) \\ 
    & \leq \sum_{u \in [n] } \frac{2 E_n}{\mathrm{deg}_n(u) \mathrm{deg}_n(v) } \mathbb{P}( \tilde{v}_{j-i+1} = v' \,|\, \tilde{v}_2 = u, \mcG_n ) \mathbb{P}( \tilde{v}_2 = u \,|\, \mcG_n ) \\
    & \leq \frac{ \mathrm{deg}_n(v') }{ \mathrm{deg}_n(v) } \cdot \max_{u \in [n] } \frac{1}{ \mathrm{deg}_n(u) } = O_p( (n \rho_n)^{-1} ),
  \end{align*}
  \endgroup
  and when $j = i+1$ we have 
  \begin{equation*}
    \mathbb{P}( A_{i+1}(v') \,|\, A_i(v), \, \mcG_n ) = \frac{ a_{v v'} }{ \mathrm{deg}_n(v) } = O_p( (n \rho_n)^{-1} ).
  \end{equation*}
  Consequently we have 
  \begin{align*}
    \mathbb{P}\big( A_i(v) \cap B_i(u | v) &\cap A_j(v') \cap B_j( u | v') \,|\, \mcG_n \big) \\ 
    & \leq \big(1 + O_p(\tilde{s}_n) \big) \frac{ l^2 W(\lambda_u, \cdot)^{2\alpha} W(\lambda_v, \cdot) }{n^4 \rho_n \mcE_W(\alpha)^2  \mcE_W }  (1 - a_{uv} )(1 - a_{uv'} )
  \end{align*}
  and therefore 
  \begingroup 
  \allowdisplaybreaks
  \begin{align*}
    \sum_{i=1}^{k} \sum_{j > i } &\sum_{v, v' \in [n] \setminus \{ u \} }  \mathbb{P}\big( A_j(v') \cap B_j(u | v') \cap A_i(v) \cap B_i(u | v) \,|\, \mcG_n)  \\
    & \leq (1 + O_p(\tilde{s}_n) ) \cdot \sum_{i = 1}^k \sum_{v \in [n] \setminus \{u \} } \frac{k l^2 W(\lambda_u, \cdot)^{2 \alpha} W(\lambda_v, \cdot)  }{ n^3 \rho_n \mcE_W(\alpha)^2 \mcE_W } (1 - a_{uv} ) \\ 
    & \leq (1 + O_p( \tilde{s}_n ) ) \cdot \frac{k l W(\lambda_u, \cdot)^{\alpha} }{n \rho_n \mcE_W(\alpha )} \cdot \frac{ k l W(\lambda_u, \cdot )^{\alpha} }{n \mcE_W(\alpha) \mcE_W } \cdot \frac{1}{n} \sum_{v \in [n] \setminus \{u \} } W(\lambda_v, \cdot) (1 - a_{uv} ) \\ 
    & =  (1 + O_p( \tilde{s}_n ) ) \cdot \frac{ k l W(\lambda_u, \cdot)^{\alpha} }{n \rho_n \mcE_W(\alpha )} \cdot \frac{ k l W(\lambda_u, \cdot )}{n \mcE_W(\alpha) \mcE_W } \cdot \int_0^1 (1 - \rho_n W(\lambda_u, y)) W(y, \cdot) \, dy,
  \end{align*}
  \endgroup
  where in the last line we have used Lemma~\ref{app:formula:conc}. We then note that as we have 
  \begin{align*}
    \sum_{i=1}^{k+1} &\sum_{v \in [n] \setminus \{ u \} } \mathbb{P}\big( A_i(v) \cap B_i(u | v) \,|\, \mcG_n \big) \\
    &  = \big( 1 + O_p\big( \tilde{s}_n(\gamma) \big) \big) \cdot \frac{ (k+1) l W(\lambda_u, \cdot)^{\alpha} }{ n \mcE_W(\alpha) \mcE_W } \cdot \int_0^1 (1 - \rho_n W(\lambda_u, y) ) W(y, \cdot) \, dy,
  \end{align*}
  by the formulae stated above and Lemma~\ref{app:formula:conc}, we can therefore conclude by combining the above bounds that 
  \begin{align*}
    \mathbb{P}\big( u &\in \mcV(S_{ns}(\mcG_n) \setminus S_0(\mcG_n)) \,|\, \mcG_n \big) \\ 
    & = \big( 1 + O_p( \tilde{s}_n ) \big) \cdot \frac{ (k+1)l W(\lambda_u, \cdot)^{\alpha} }{ \mcE_W(\alpha) \mcE_W } \int_0^1 (1 - \rho_n W(\lambda_u, y) ) W(y, \cdot) \, dy.
  \end{align*}
  Consequently $\mathbb{P}\big( u \in S(\mcG_n) \,|\, \mcG_n \big)$ equals
  \begin{equation*}
      \big( 1 + O_p\big( \tilde{s}_n(\gamma) \big) \big) \cdot \frac{1}{n} \Big\{ \frac{ k W(\lambda_u, \cdot) }{\mcE_W} +  \frac{ (k+1)l W(\lambda_u, \cdot)^{\alpha} }{ \mcE_W(\alpha) \mcE_W } \int_0^1 (1 - \rho_n W(\lambda_u, y) ) W(y, \cdot) \, dy \Big\}
  \end{equation*}
  as desired.
\end{proof}

\begin{proof}[Proof of Theorem~\ref{thm:example}]
    We begin by highlighting that for the given model, we have that
    \begin{equation}
        W(\lambda, \cdot) = \frac{1}{k} \big( p + (k-1) q \big) = \mcE_W, \qquad W(\lambda, \cdot)^{\alpha} = \mcE_W(\alpha)
    \end{equation}
    for $\lambda \in [0, 1]$, and therefore we have that
    \begin{align*}
        f_n(\lambda_i, \lambda_j, 1) & = \frac{2k}{\kappa^{-1}(p + (\kappa - 1)q)} = 2k c_1, \\
        f_n(\lambda_i, \lambda_j, 0) & = 2 l (k+1), \\ 
        \tilde{g}_n(\lambda_i) & = k + l (k+1) \cdot \big( 1 - \rho_n \kappa^{-1}(p + (\kappa - 1) q ) \big) = c_2. 
    \end{align*}
    Consequently, as a result of Proposition~\ref{app:min_exist} viii), we know that we can obtain the minimizing kernel $K_n^*$ which appears in the convergence theorem Theorem~\ref{thm:embed_conv} as follows: we obtain a matrix $\tilde{K} \in \mathbb{R}^{k \times k}$ obtained via minimizing the convex function
    \begin{align*}
        - \frac{1}{\kappa^2} & \sum_{i, j} \big\{ 2k c_1 \cdot (p \delta_{ij} + q (1 - \delta_{ij} )) \log \sigma( \tilde{K}_{ij}) + 2l (k+1) \log \sigma(-\tilde{K}_{ij}) \big\} + \xi c_2 \sum_{i=1}^{\kappa} \tilde{K}_{ii} \\ 
        & = - \frac{1}{\kappa^2} \sum_{i, j} \big\{ 2k c_1 \cdot (p \delta_{ij} + q (1 - \delta_{ij} )) \log \sigma( \tilde{K}_{ij}) + 2l (k+1) \log \sigma(-\tilde{K}_{ij}) \big\}  + \xi c_2 \| \tilde{K} \|_{*}
    \end{align*}
    over all positive semi-definite matrices $\tilde{K}$. The desired convergence then follows by applying Theorem~\ref{thm:embed_conv}.
\end{proof}
\section{Additional experimental details}
\label{sec:app:exps}

\begin{table}
    \centering
    \caption{Summary statistics of Cora, CiteSeer and PubMedDiabetes datasets.}
    \begin{tabular}{@{}ccccc@{}}
    \toprule
    Dataset & Nodes & Edges & Features & Classes \\ \midrule
    Cora & 2708 & 5429 & 1433 & 7 \\
    CiteSeer & 3312 & 4732 & 3703 & 6 \\
    PubMed & 19717 & 44338 & 500 & 3 \\ \bottomrule
    \end{tabular}
    \label{tab:datasets}
\end{table}

We now describe the hyperparameter and training details of each of the methods used in the experiments; for all the methods, we used the Stellargraph\footnote{We highlight that the Stellargraph package is licsensed under the Apache License 2.0.} implementation of the architecture \parencite{data61_stellargraph_2018}. The code used to run the experiments is available on GitHub\footnote{\url{https://github.com/aday651/embed-reg}}. Experiments were run on a cluster, using for each experiment 4 cores of a Intel Xeon Gold 6126 2.6 Ghz CPU, and a variable amount of RAM depending on the method and dataset used. In total, the experiments carried out used approximately 100k CPU hours in total (including preliminary experiments). Table~\ref{tab:datasets} gives a summary of the 
features of the Cora, CiteSeer and
PubMedDiabetes datasets used in the experiments. 

\textbf{node2vec} - We train node2vec with $p=q=1$ for 5 epochs, using 50 random walks of length 5 per node to form as subsamples, and train using batch sizes of 64. We use an Adam algorithm with learning rate $10^{-3}$. For the regularization, we use tf.regularizers.l2 with the specified regularization weight as the embeddings regularizer argument to the Embeddings layer used in the node2vec implementation. 

Unlike as reported in \cite{hamilton_inductive_2017}, we found that using the Adam algorithm with learning rate $10^{-3}$ lead to far better performance than stochastic gradient descent with rates of either this magnitude or those suggested in e.g. the experiments performed within the GraphSAGE paper (of $0.2$, $0.4$ or $0.8$). In our preliminary experiments, we generally found that varying the learning rates of the Adam method rarely lead to significant changes in performance and kept any observed trends relatively stable, and so we did not vary these significantly throughout.

\textbf{GraphSAGE} - For GraphSAGE, we used a two layer mean-pooling rule with neighbourhood sampling sizes of 10 and 5 respectively; we note that using 25 and 10 samples as suggested in \parencite{hamilton_inductive_2017} were computationally prohibitive for all the experiments we wished to carry out. Otherwise, we use the node2vec loss with 10 random walks of length 5 per node, use a batch size of 256 for training, and train for 10 epochs.

\textbf{GCN} - To train a GCN in an unsupervised fashion, we parameterize the embeddings in the usual node2vec loss through a two layer GCN with ReLU activations, with intermediate layer sizes 256 and 256. For the node2vec loss, we instead use 10 random walks of length 5 per node, use a batch size of 256 during training, and train the loss for 10 epochs. 

\textbf{DGI} - For DGI, we use the same parameters as specified in \parencite{velickovic_deep_2018}. We use 256 dimensional embeddings only; we found in our preliminary experiments the performance change in using a 512 dimensional embedding was negligible, and that on the PubMedDiabetes dataset, the memory usage required to learn a 256 dimensional embedding was substantial (above 32GB of RAM needed). Otherwise, we train a one dimensional GCN with ReLU activation using the DGI methodology, for 100 epochs with an early stopping rule with a tolerance of 20 epochs, a batch size of 256, and used Adam with learning rate $10^{-3}$.

\textbf{Classifier details} - Given the embeddings learned in an unsupervised fashion, there is then the need to build a classifier for both the node classification and link prediction tasks. To do so, we use logistic regression, namely the LogisticRegressionCV method from the scikit-learn Python package. The cross validation was set to use 5-folds and a `one vs rest' classification scheme. Otherwise we used the default settings, except for a larger tolerance of the number of iterations for the BFGS optimization scheme.

\end{document}